\pdfoutput=1

\documentclass[letterpaper]{article} %DO NOT CHANGE THIS
\usepackage{aaai18}  %Required
\usepackage{times}  %Required
\usepackage{helvet}  %Required
\usepackage{courier}  %Required
\usepackage{url}  %Required
\usepackage{graphicx}  %Required
\frenchspacing  %Required
\setlength{\pdfpagewidth}{8.5in}  %Required
\setlength{\pdfpageheight}{11in}  %Required
        
\usepackage[ruled,linesnumbered]{algorithm2e}
\usepackage{booktabs}
\usepackage{amsthm}
\usepackage{amsfonts}
\usepackage{amssymb}
\usepackage{mathtools}
\usepackage{thmtools}
\usepackage{thm-restate}
\usepackage{paralist}
\usepackage{ifdraft}
 
\declaretheorem[name=Theorem]{theorem}
\declaretheorem[sibling=theorem,name=Lemma]{lemma}
\declaretheorem[sibling=theorem,name=Proposition]{proposition}
\declaretheorem[sibling=theorem,name=Claim]{claim}
\declaretheorem[sibling=theorem,name=Corollary]{corollary}
\declaretheorem[sibling=theorem,name=Definition]{definition}
\declaretheorem[sibling=theorem,name=Example]{example}

\newif\ifextendedversion
\extendedversiontrue % comment this line for switching to the short version

% Counters
\newcounter{ObservationCounter}

% Macros
\renewcommand\vec{\mathbf}

\newcommand{\observation}[1]{%
    \smallskip\par\noindent%
    \refstepcounter{ObservationCounter}\label{#1}%
    \emph{Observation~\arabic{ObservationCounter}. }%
}

\newcommand{\proofidea}{Proof sketch}
\newcommand{\myparagraph}[1]{\smallskip \noindent \textbf{#1}\ }

\newcommand{\head}{\ensuremath{\operatorname{head}}}
\newcommand{\body}{\ensuremath{\operatorname{body}}}

\newcommand{\program}{\ensuremath{\Pi}}

\newcommand{\dclass}{\ensuremath{\mathcal{D}}}
\newcommand{\qclass}{\ensuremath{\mathcal{Q}}}

\newcommand{\leftsquigarrow}{\ensuremath{%
    \mathrel{\raisebox{1pt}{\ensuremath{%
        \mathbin{\rotatebox[origin=c]{180}{\ensuremath{%
            \leadsto
        }}}%
    }}}%
}}

\newcommand{\restrict}[2]{\ensuremath{#1{\restriction_{#2}}}}

\newcommand{\lang}{\ensuremath{\mathcal{L}}}

\newcommand{\ucontained}{\ensuremath{\sqsubseteq^{\mathrm{u}}}}

\newcommand{\qclassfont}[1]{\textsc{#1}}
\newcommand{\fpclass}{\qclassfont{fp}}
\newcommand{\ogclass}{\qclassfont{og}}
\newcommand{\nrclass}{\qclassfont{nr}}
\newcommand{\ognrclass}{\qclassfont{ognr}}
\newcommand{\dnrclass}{\textsc{dnr}}
\newcommand{\initstate}{\ensuremath{\mathit{init}}}
\newcommand{\datalogmtl}{DatalogMTL}
\newcommand{\inputlineref}{3}
\newcommand{\materlineref}{4}
\newcommand{\outputlineref}{5}

% Macros for complexity classes
\newcommand{\ccomplement}{\textnormal{co}}
\newcommand{\chard}{\textnormal{-hard}}
\newcommand{\ccomplete}{\textnormal{-complete}}

\newcommand{\pspace}{\textsc{PSpace}}
\newcommand{\pspacehard}{\pspace\chard}
\newcommand{\pspacecomplete}{\pspace\ccomplete}
\newcommand{\expspace}{\textsc{ExpSpace}}
\newcommand{\expspacehard}{\expspace\chard}
\newcommand{\expspacecomplete}{\expspace\ccomplete}
\newcommand{\exptime}{\textsc{Exp}}

\newcommand{\exptimecomplete}{\exptime\ccomplete}
\newcommand{\ptime}{\textsc{P}}

\newcommand{\ptimecomplete}{\ptime\ccomplete}
\newcommand{\nptime}{\textsc{NP}}

\newcommand{\logspace}{\textsc{LogSpace}}

\newcommand{\aczero}{\textsc{AC}\ensuremath{^0}}

\newcommand{\nexptime}{\textsc{NExp}}

\newcommand{\nexptimecomplete}{\nexptime\ccomplete}
\newcommand{\conexptime}{\ccomplement\textsc{NExp}}
\newcommand{\conexptimehard}{\conexptime\chard}
\newcommand{\conexptimecomplete}{\conexptime\ccomplete}
\newcommand{\conptime}{\ccomplement\nptime}
\newcommand{\conptimehard}{\ccomplement\nptime\chard}
\newcommand{\conptimecomplete}{\ccomplement\nptime\ccomplete}

% Decision problems
\newcommand{\containment}{\textsc{Cont}}
\newcommand{\bodcontainment}{\textsc{Cont}_{\mathcal{O}}}
\newcommand{\window}{\textsc{Window}}
\newcommand{\bodwindow}{\textsc{Window}_{\mathcal{O}}}
\newcommand{\evaluation}{\textsc{Eval}}

% Algorithms

\newcommand{\tmin}{\ensuremath{\tau_\mathrm{min}}}

\ifdraft{
    \usepackage{color} 
    \usepackage[dvipsnames]{xcolor}
}{}

% PDF Info Is Required:
\pdfinfo{
    /Title (The Window Validity Problem in Rule-based Stream Reasoning)
    /Author (Alessandro Ronca, Mark Kaminski, Bernardo Cuenca Grau, Ian Horrocks)
}

\setcounter{secnumdepth}{2}

\nocopyright{}

\begin{document}

\title{The Window Validity Problem in Rule-Based Stream Reasoning}
\author{Alessandro Ronca, Mark Kaminski, Bernardo Cuenca Grau \and Ian Horrocks\\
      Department of Computer Science, University of Oxford, UK\\
      $\{$alessandro.ronca, mark.kaminski, bernardo.cuenca.grau, ian.horrocks$\}$@cs.ox.ac.uk}

\maketitle

\begin{abstract}
    Rule-based  temporal query languages provide the expressive power and flexibility required to capture in a natural way
complex analysis tasks over streaming data. 
Stream processing applications, however, typically require near real-time
response using limited resources. In particular, it becomes essential that
the underpinning query language has favourable computational properties and that stream processing algorithms  are able to
keep only a small number of previously received facts in memory at any point in time without sacrificing correctness.
In this paper, we propose a recursive fragment of temporal Datalog with tractable data complexity and study the properties of a generic stream reasoning
algorithm for this fragment.
We focus on the \emph{window validity problem} as a way to minimise the number of time points for 
which the stream reasoning algorithm needs to keep data in memory at any point
in time.

\end{abstract}

\section{Introduction}

Query processing over streams is becoming increasingly 
important for data analysis in domains as diverse as 
financial trading  \cite{nuti2011algorithmic}, equipment maintenance
\cite{cosad2009wellsite}, or network security  \cite{munz2007real}.

A growing body of research has recently focused on
extending traditional stream management systems with
reasoning capabilities \cite{barbieri2010incremental,calbimonte2010enabling,anicic2011ep,le2011native,zaniolo2012streamlog,ozcep2014stream,beck2015lars,dao2015comparing,ronca2017stream}.  
Languages well-suited for stream reasoning applications are typically rule-based, where prominent examples include  \emph{temporal Datalog}  \cite{chomicki1988temporal} 
and \emph{\datalogmtl{}}  \cite{DBLP:conf/aaai/BrandtKKRXZ17}.
These core languages are powerful enough to
capture many other temporal formalisms \cite{abadi1989temporal,baudinet1992temporal}
and provide the logical underpinning for other expressive languages
proposed in the stream reasoning literature \cite{zaniolo2012streamlog,beck2015lars}.

Rules provide the expressive power and flexibility required to naturally capture in a declarative way
complex analysis tasks over streaming data.
This is illustrated by the following example in network security, where  intrusion detection policies (IDPs) are represented
in temporal Datalog.

\begin{example}\label{ex:running}
Consider a computer network which is being monitored for external threats.
Bursts (unusually high amounts of data) between any pair of nodes in the network
are detected by specialised monitoring devices and streamed to the network's management centre as timestamped facts. 
%over a 
%predicate  $\mathsf{Brst}$, whose arguments identify the source and target nodes and the 
%time of the reading. 
A monitoring task in the  centre is to identify nodes that may have been hacked according to a specific  IDP, and add them to a 
blacklist of nodes.
% which are subsequently deactivated until further notice; 
In this setting, one may want to know the contents of the blacklist 
at any given point in time in order to decide on further action. This task is captured by a temporal Datalog query consisting of the 
rules given next and where $\mathsf{Black}$ is the designated output predicate:
\begin{align}
 \mathsf{Brst}(x,y,t) \land  \mathsf{Brst}(z,y,t+1) &\to
        \mathsf{Attk}(x,y,t+1)   \label{eq:burst} \\
%     \label{runex:1-2}
%     \mathsf{Brst}(x,y,t) \land \mathsf{Brst}(z,y,t+1) &\to
%         \mathsf{Attk}(z,y,t+1) \\
   \mathsf{Attk}(x,y,t) \land \mathsf{Attk}(x,y,t+1) &  \nonumber \\
      {}\land  \mathsf{Attk}(x,y,t+2) &\to \mathsf{Black}(x,t+2) \label{eq:attck} \\
    \mathsf{Black}(x,t) &\to \mathsf{Black}(x,t+1)  \label{eq:black} \\
    \mathsf{Attk}(x,y,t) &\to \mathsf{Grey}(x,\mathit{max}, t)  \label{eq:grey1} \\
    \mathsf{Grey}(x,i,t) \land \mathsf{Succ}(j,i) &\to
        \mathsf{Grey}(x,j,t+1)  \label{eq:grey2} \\
        \mathsf{Grey}(x,i,t) \land \mathsf{Brst}(x,y,t) &\to \mathsf{Black}(x,t)  \label{eq:blackfinal}
\end{align}
Rule~\eqref{eq:burst} identifies two consecutive bursts from nodes $v$ and $v''$
to a node $v'$ in the network as an attack on $v'$ originated by
$v$. Rule~\eqref{eq:attck} implements an IDP where three consecutive attacks
from $v$ on $v'$ result in $v$ being added to the blacklist, where it remains
indefinitely (Rule~\eqref{eq:black}).
Rules~\eqref{eq:grey1}--\eqref{eq:blackfinal} implement a second IDP where an
attack from $v$ on any node leads to $v$ being identified as suspicious and
added to a ``greylist''. Such list comes with a succession of decreasing
warning levels, where the maximum is represented by the constant $max$ and
where the relationship from each level to the next is captured by a binary,
non-temporal, $\mathsf{Succ}$ predicate. As time goes by, the warning level
decreases; however, if at any point during this process node $v$ generates
another burst to any other node in the network, then it gets blacklisted.
\end{example}

Stream processing applications typically require near real-time response using limited resources; this  becomes
especially challenging in the context of rule-based stream reasoning due to the following reasons:
\begin{enumerate}[1.]
  \item  Fact entailment over temporal rule languages is typically intractable
      in data complexity---$\expspacecomplete$ in the case of \datalogmtl{} and  $\pspacecomplete$ in the case of
  temporal Datalog. Furthermore, known tractable fragments are non-recursive  \cite{ronca2017stream,DBLP:conf/aaai/BrandtKKRXZ17}, which limits their applicability to certain data analysis applications. 
  \item In order to adhere to memory limitations and scalability requirements, systems can only keep a limited history of previously received
input facts in memory to perform further computations. Rules, however, can propagate derived information both towards past and future time points and hence  query answers 
can depend  on data that has not yet been received 
as well as on data
that arrived far in the past. This may force the system to 
keep in memory a very  large (or even unbounded) input history  to ensure correctness.
\end{enumerate}
We address the first challenge by introducing in Section \ref{sec:fpqueries}
the language of \emph{forward-propagating queries}---a fragment of temporal
Datalog that extends plain (non-temporal) Datalog by allowing unrestricted
recursive propagation of information into future time points, while at the same
time precluding propagation of derived facts towards past time points.  Our
language is sufficiently expressive to capture interesting analysis tasks over
streaming data, such as the one illustrated in our previous example. Moreover,
we show that forward-propagating queries can be answered in polynomial time in
the size of the input data and hence they are well-suited for data-intensive
applications.

To address the second challenge,
we take as a starting point a generic algorithm which accepts as input a set of non-temporal background facts and a stream of timestamped facts
and outputs as a stream the answers to a forward-propagating query $Q$. The algorithm is parametrised by 
a window size  $w$ and a signature $\Sigma$, which 
determine the set of facts stored in memory by the algorithm at any point in time. 
As the algorithm receives the input stream at  time point $\tau$, it 
computes all implicit $\Sigma$-facts and answers to $Q$ holding at $\tau$ using only the facts held in memory, and subsequently 
discards all stored facts holding at $\tau-w$. 
For the algorithm to be correct,  the computed answers for each $\tau$ over the restricted set of facts in memory 
must coincide with the answers over the entire stream. Such an assurance, however, 
can only be given for certain values of $\Sigma$ and if the window parameter $w$
is large enough so that facts that may influence answers at later time points are not discarded too early.
This motivates the \emph{window validity problem},  which is to decide whether a given window $w$ is valid for a given query $Q$ and signature $\Sigma$  in the sense that
the aforementioned correctness guarantee
holds for any input data.

In  our prior work \cite{ronca2017stream}, we  considered an instantiation of
the generic stream reasoning algorithm where only explicit facts from the input stream (and hence no entailed facts) are kept in memory. 
This setting can, however, be problematic in the presence of recursion, in that
a recursive query may not admit a valid window. Stream reasoning clearly becomes  impractical for such queries
since the entire stream received so far must be kept in memory by the algorithm in order to ensure
correctness. 

To address this limitation, we consider in Section \ref{sec:mbstreamreasoning}
a \emph{full materialisation} variant of the algorithm in which all facts
(explicit or implicit) over the entire signature are kept in memory; as a
result, when a fact is discarded by the algorithm, its consequences at later
time points are not lost.  In this setting, we can show that a valid window is
guaranteed to exist for any forward-propagating query, and a (possibly larger
than needed) window can be obtained syntactically by inspection of the query.
From a practical perspective, however, it is important to have a valid window
that is as small as possible since the number of facts entailed by the query's
rules at any given time point can be very large. Thus, we investigate in
Sections~\ref{sec:window} and~\ref{sec:boundedobject} the computational
properties of window validity in this revised setting.

In Section~\ref{sec:window}, we show that window validity and query containment are interreducible problems, and hence known complexity bounds on temporal query containment transfer directly. 
 In particular, undecidability of window validity for forward-propagating queries follows  from the undecidability of query containment for non-temporal Datalog.
  
 To regain decidability, we consider in Section~\ref{sec:boundedobject} the situation where the set of relevant domain objects can be fixed in advance, in the sense that
 input facts can refer only to those objects. 
 In Example~\ref{ex:running}, this assumption amounts to fixing both the 
 nodes in the network and the grey list's warning levels, 
  and requiring that all input facts mention only these
 objects.
 This assumption allows us to ground  the non-temporal variables of the query to
 a set of known objects; such grounding is exponential and results
 in an \emph{object-ground} query where all  variables are temporal.
 We show that
 the window problem is $\pspacecomplete$ for object-ground forward-propagating
 queries and $\conptimecomplete$ if the query is also
 non-recursive. This immediately gives us an $\expspace$ upper bound ($\conexptime$ if queries are additionally assumed to be non-recursive) for the fixed-domain window validity problem; we then prove that these bounds are tight.
 Our results show that, although window validity is undecidable, we can obtain decidability
under reasonable assumptions on the input data. Even under such assumptions, the problem is computationally intractable; however, 
queries can be assumed to be relatively small in practice and windows can be
computed offline, prior to receiving any data. 

Finally, for applications where one cannot assume the object domain to be fixed
in advance, we propose in Section~\ref{sec:sufficient} a sufficient condition
for the validity of a window that can be checked in exponential time without
additional assumptions.

\ifextendedversion
    Complete proofs of all our technical results are deferred to the appendix.
\else
   Due to space limitations, our technical results are accompanied with a sketch outlining the main ideas behind each proof. Further details are 
   are deferred to an extended version
    of this paper. \cite{}
\fi

%%% Local Variables:
%%% mode: latex
%%% TeX-master: "main"
%%% End:

\section{Preliminaries}
\label{sec:preliminaries}

We recapitulate temporal Datalog \cite{chomicki1988temporal}  as a basic
language for stream reasoning. 

\smallskip

\noindent \textbf{Syntax}\ 
A signature consists of predicates, constants and 
variables, where constants are partitioned into objects and 
non-negative integer time points and variables are partitioned into 
object variables and time variables.
An object term is an object or an object variable.
A time term is a time point, a time variable, or an expression
of the form $t + k$ with $t$ a time variable, $k$  an integer, 
and $+$ the integer addition function.

Predicates are partitioned into 
\emph{extensional} (EDB) and 
\emph{intensional} (IDB) and they come with a non-negative integer
\emph{arity} $n$, where each position ${1 \leq i \leq n}$ is of either
\emph{object} or \emph{time sort}. 
A predicate is \emph{rigid} if
all its positions are of object sort and it is \emph{temporal} 
if the last position is 
of time sort and all other positions are of object sort.
An  \emph{atom} is an expression $P(s_1, \ldots,
s_n)$ where $P$ is a $n$-ary predicate and each $s_i$ is a term of the required
sort; we sometimes use the term $P$-atom to refer to an atom with predicate $P$.
A \emph{rigid} atom (respectively, temporal, IDB,
EDB) is an atom over a rigid predicate
(respectively, temporal, IDB, EDB).

A  \emph{rule} $r$ is of the form $\bigwedge_i \alpha_i
\rightarrow \alpha$, where $\alpha$ and each $\alpha_i$ are rigid or temporal
atoms, and $\alpha$ is IDB whenever $\bigwedge_i \alpha_i$ is non-empty.
Atom $\head(r) = \alpha$ is the \emph{head} of $r$, and 
$\body(r) = \bigwedge_i \alpha_i$ is the \emph{body} of $r$.
Rules are \emph{safe}---that is, all variables occur 
in the body. 
A  \emph{program} $\program$ is a finite set of rules.
A term, atom, rule, or program is \emph{ground} if it has no variables.
A predicate $P$ is
$\program$\emph{-dependent} on predicate $P'$ if $\program$ has a rule
with $P$ in the head and $P'$ in the body.
A \emph{fact}  is a ground, function-free 
rigid or temporal atom, and a \emph{dataset} is a (possibly infinite) set of EDB
facts. 
Each fact $\alpha$ corresponds to a rule having empty body and $\alpha$ in the head, 
so we use $\alpha$ and its corresponding rule interchangeably.

A  \emph{query} is a pair $Q = \langle P_Q, \Pi_Q \rangle$ with 
$\Pi_Q$ a program and
$P_Q$ an IDB \emph{output predicate} in $\Pi_Q$ not occurring in the body of any rule in $\Pi_Q$. 
We also denote with $\Sigma_Q$ the set of all IDB predicates in $\Pi_Q$.
Query $Q$ is
\begin{compactitem}
\item[--] \emph{temporal}  if $P_Q$ is a temporal predicate;
 \item[--] \emph{Datalog} if no temporal predicate occurs in $\Pi_Q$;
 \item[--] \emph{object-ground} if $\Pi_Q$ has no object variables; and
\item[--] \emph{non-recursive} if the directed graph induced by the
$\Pi_Q$-dependencies is acyclic. 
\end{compactitem}

\myparagraph{Semantics}
Rules are interpreted as universally quantified
first-order sentences. A Herbrand interpretation $\mathcal{H}$ is a
(possibly infinite) set of facts. It  \emph{satisfies} a 
rigid atom $\alpha$ if $\alpha \in \mathcal{H}$, and it satisfies
a temporal atom $\beta$ if evaluating the addition 
function in $\beta$ yields a fact in $\mathcal{H}$.
Satisfaction is extended to conjunctions of ground atoms,
rules and programs in the standard way.
If $\mathcal{H} \models \program$, then $\mathcal{H}$ is a \emph{model} of $\program$.
Program $\program$ \emph{entails} a fact $\alpha$, 
written $\program \models \alpha$, 
if $\mathcal{H} \models \program$ implies $\mathcal{H} \models \alpha$. 
The set of \emph{answers} to a query $Q$ over a dataset $D$, written $Q(D)$, 
consists of each $P_Q$-fact $\alpha$ such that $\Pi_Q \cup D \models \alpha$. 

\myparagraph{Reasoning}
We next define two basic reasoning problems, which we parametrise 
to specific classes of  input queries $\qclass$ and
datasets $\dclass$.
Similarly to \cite{chomicki1988temporal}, 
we assume from now onwards in all reasoning problems that numbers in input
queries and datasets  are coded in unary; our complexity results may (and almost
certainly will) change if binary encoding is assumed, and we leave this investigation for future work. 
Furthermore, we make the following general assumptions for  each $\dclass$:
\emph{(1)}  for each $D \in \dclass$ and each finite subset
$S$ of $D$ there is a finite $D'\in\dclass$ such that
$S\subseteq D'\subseteq D$; and
\emph{(2)} for each $D \in \dclass$ and unary temporal
fact $\alpha$, we have $D \cup \{\alpha\} \in \dclass$.
The former property is a form of compactness closure, whereas the latter is a closure property under addition of
unary temporal facts.

The \emph{query evaluation problem} $\evaluation_{\dclass}^{\qclass}$, for $\qclass$ a class of queries and
$\dclass$ a class of finite datasets, is to check whether
$\alpha \in Q(D)$ for $\alpha$ an input fact, $Q \in \qclass$ and $D \in \dclass$; the \emph{data complexity} of $\evaluation_{\dclass}^{\qclass}$
is the complexity for fixed $Q$.
Query evaluation for arbitrary datasets is
$\pspacecomplete$ in data complexity under unary encoding of numbers
\cite{chomicki1988temporal}, and in $\aczero$  for 
non-recursive queries.

Let $Q_1$ and $Q_2$ be queries having
the same output predicate.
Then,  $Q_1$ is \emph{contained} in  $Q_2$ with respect to $\dclass$, 
written $Q_1 \sqsubseteq_{\dclass} Q_2$, if $Q_1(D)
\subseteq Q_2(D)$ for each  $D \in \dclass$.
The \emph{containment problem} $\containment_{\dclass}^{\qclass}$  is to check
$Q_1 \sqsubseteq_{\dclass} Q_2$  for given $Q_1, Q_2 \in  \qclass$.
For simplicity, we drop $\dclass$  from $Q_1 \sqsubseteq_{\dclass} Q_2$ and 
$\containment_{\dclass}^{\qclass}$ 
(respectively, from $\evaluation_{\dclass}^{\qclass}$) whenever $\dclass$ is the
class of all datasets (respectively, of all finite datasets).

Our definition of containment considers
infinite datasets, which  is required to capture 
streams. This does not change the nature of the
problem due to the properties of first-order logic and our assumptions on $\dclass$; 
\ifextendedversion
    as shown in the appendix, 
\else
    in particular,
\fi
$Q_1 \sqsubseteq_{\dclass} Q_2$ if and only if
$Q_1 \sqsubseteq_{\dclass'} Q_2$ with $\dclass'$ the class consisting of all finite datasets
in $\dclass$. By standard results in nontemporal Datalog, it follows that unrestricted containment
is undecidable \cite{shmueli1993equivalence}, and
it is $\conexptime$-hard for non-recursive queries \cite{benedikt2010impact}.

\section{Forward-Propagating Queries}
\label{sec:fpqueries}

Stream processing applications are  data-intensive, requiring fast  response using limited resources. 
Tractability of query evaluation in data complexity is thus  a key requirement for logics 
underpinning stream reasoning systems. 
Query evaluation in temporal Datalog is, however, $\pspacecomplete$ in data complexity, which limits its
applicability. 

In this section we introduce the language of forward-propagating queries---a  fragment of temporal Datalog
which allows unrestricted recursive propagation of derived facts into the present and future time points, while at the same time
precluding
propagation
towards past time points.

\begin{definition}\label{def:fp-query}
The \emph{offset}  of a time term $s$ equals zero if $s$ is a 
time variable, and it equals $k$ if $s$ is the time point $k$ or 
a time term of the form  $t+k$. 
The \emph{radius} of a rule is zero if its head is rigid, and 
it is the maximum difference between the offset of its head time argument and the
offset of a body time argument otherwise. 
A rule $r$ is \emph{forward-propagating} if it is Datalog, or it 
satisfies all of the following properties:
\begin{compactitem}
\item[--] it contains no time points;
\item[--] it has a single time variable, which occurs in the head;
\item[--] its radius is non-negative. 
\end{compactitem}
A query $Q$ is forward-propagating, or an \emph{fp-query} for short, if so is
each rule in $\Pi_Q$.  The radius of\/ $Q$ is the maximum radius amongst the
rules in $\Pi_Q$.  For $k \geq 0$, we denote as $Q^k$ the query
$\langle P_Q, \Pi_Q^k \rangle$ with $\Pi_Q^k$ the subset of rules in $\Pi_Q$
with radius at most $k$.

We denote the class of fp-queries as \fpclass{},
 and let 
    \ogclass{}, 
    \nrclass{}, and 
    \ognrclass{} be the subclasses of \fpclass{}
    where queries are required to be
    object-ground,
    non-recursive,
    and both object-ground and non-recursive, respectively.
\end{definition}
\begin{example}
  The query in our running Example \ref{ex:running} is
  forward-propagating. Its radius is two, which is justified
  by Rule \eqref{eq:attck}, where the offset of the head is two and
  the offset of the first body atom is zero.
\end{example}

The conditions in Definition \ref{def:fp-query} ensure that the derivation via rule application of a fact $\alpha$ holding at a time point $\tau$   
can be justified by facts holding  at time points no greater than $\tau$; as a result, one can safely
disregard all facts holding after $\tau$ for the purpose of deriving $\alpha$.  
% Note  that 

The restrictions imposed by Definition \ref{def:fp-query} are
sufficient to ensure tractability of query evaluation, while at the
same time allowing for temporal recursion.  The following theorem
shows a stronger %kind of
result, namely that query evaluation over fp-queries can be reduced to
query evaluation over standard non-temporal Datalog.

\begin{restatable}{theorem}{theoremevaluation}
    \label{theorem:evaluation}
    Let $\dclass$ be a class of finite datasets, let $\qclass \in \{ \fpclass, \nrclass, \ogclass, \ognrclass \}$,
    and let $\qclass'$ be the Datalog subset of $\qclass$.
    Then, $\evaluation_{\dclass}^{\qclass}$ is  \logspace{}-reducible to
    $\evaluation_{\dclass}^{\qclass'}$.
\end{restatable}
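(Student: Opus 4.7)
The plan is to reduce temporal fp-query evaluation to ordinary Datalog query evaluation by encoding time points as object constants and the integer successor via a fresh binary predicate $\mathsf{Nxt}$. Given $(Q, D, \alpha)$ with $Q = \langle P_Q, \Pi_Q \rangle$, I would introduce, for every temporal predicate $P$ in $\Pi_Q$, a fresh non-temporal predicate $P^{*}$ of the same arity (with the former time position now of object sort). The program $\Pi_{Q'}$ then contains: every Datalog rule of $\Pi_Q$ unchanged; for every temporal rule $r$ of $\Pi_Q$, its rewriting in which each temporal atom $P(\bar{x}, t+k)$ is replaced by $P^{*}(\bar{x}, y_k)$ using a fresh variable $y_k$ per distinct offset $k$ in $r$, with body atoms $\mathsf{Nxt}(y_l, y_{l+1})$ added to link consecutive offsets so as to span the offset range of $r$; an empty-body rule $\to P^{*}(\bar{c}, \tau)$ for each temporal fact $P(\bar{c}, \tau) \in D$; and empty-body rules $\to \mathsf{Nxt}(i, i+1)$ for each $i \in [0, T_{\max} - 1]$, where $T_{\max}$ is an upper bound on the time points appearing in $D$ and in $\alpha$. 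The output predicate of $Q'$ is $P_Q^{*}$, the target dataset is $D' = D$ (trivially in $\dclass$), and the target fact is $\alpha' = P_Q^{*}(\bar{c}, \tau)$.

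This transformation is logspace-computable: each rule of $\Pi_Q$ is rewritten locally using only radius-many additional $\mathsf{Nxt}$ atoms, and since numbers are in unary the polynomially many empty-body rules encoding the dataset and successor relation can be emitted with a logspace counter. Predicate renaming leaves the $\Pi_Q$-dependency graph unchanged (modulo $P \mapsto P^{*}$), so non-recursiveness passes from $Q$ to $Q'$; this settles the cases $\qclass \in \{\fpclass, \nrclass\}$. For the object-ground classes $\qclass \in \{\ogclass, \ognrclass\}$, the fresh variables $y_k$ produced by the rewriting would violate object-groundness, so I would instead pre-ground the sole time variable of each temporal rule: for every temporal rule $r$ and each $\tau \in [0, T_{\max}]$, add to $\Pi_{Q'}$ the ground rule obtained by substituting $t := \tau$ and applying the renaming $P \mapsto P^{*}$. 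Under unary encoding this yields polynomially many ground Datalog rules, and preservation of non-recursiveness and logspace computability go through as before.

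The core correctness claim $\alpha \in Q(D)$ iff $\alpha' \in Q'(D')$ I would prove by induction on derivation length: every application of a rule $r \in \Pi_Q$ with $t \mapsto \tau$ is simulated by the corresponding rewritten or grounded rule with $y_l \mapsto \tau + l$, using the $\mathsf{Nxt}$ chain (or the ground instantiation) to enforce the offset arithmetic, and conversely every application of a rewritten rule uniquely determines such a $\tau$. The main obstacle is to pin down the relevant range of time points: here the forward-propagation property is decisive, since it ensures that any derivation of $\alpha = P_Q(\bar{c}, \tau_\alpha)$ from $D$ can be traced back through time points no greater than $\tau_\alpha$, so the choice $T_{\max} := \max(\tau_\alpha, \max_{P(\bar{c}, \tau) \in D} \tau)$ covers everything that matters and is polynomial in the input size under unary encoding.
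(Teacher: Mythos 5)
Your reduction is correct in its essentials, but it takes a genuinely different route from the paper for the unrestricted classes. The paper handles three cases separately (rigid $\alpha$; temporal $\alpha$ with no temporal facts in $D$; the general temporal case), and in the main case it \emph{grounds} the single time variable of each rule over the interval $[\tau_{\min},\tau_\alpha]$ and then splits every temporal predicate into one fresh rigid predicate per (predicate, time point) pair, so no successor relation is ever needed --- the offset arithmetic is resolved entirely at grounding time. You instead keep the time variable alive as an object variable and simulate $t+k$ with a $\mathsf{Nxt}$-chain over unary-encoded time constants, which is the classic Datalog$_{1\mathrm{S}}$-to-Datalog encoding; you then fall back to grounding only where object-groundness forces you to, at which point your construction essentially coincides with the paper's. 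Both arguments rest on the same key lemma (the paper's Proposition on relevant time points): forward propagation plus unary encoding confines every derivation of $\alpha$ to a polynomial-size time interval, which is what makes the $\mathsf{Nxt}$ facts (respectively, the grounding) polynomially many and logspace-emittable. Your encoding buys a smaller target program for $\fpclass$ and $\nrclass$ (one renamed predicate per temporal predicate rather than one per predicate--time-point pair), at the cost of the extra $\mathsf{Nxt}$ machinery; the paper's grounding is heavier but uniform across all four classes. Two loose ends you should tidy up: your target query hard-codes $P_Q^{*}$ as the output predicate, which presupposes that $P_Q$ is temporal, so the case of a rigid output predicate (where the paper simply keeps the Datalog subprogram and leaves $\alpha$ untouched) needs to be stated separately; and the facts $\to P^{*}(\bar{c},\tau)$ and $\to \mathsf{Nxt}(i,i+1)$ must use fresh \emph{object} constants standing for the time points, since placing a time point in an object-sort position is not well-formed in the paper's signature discipline. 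Neither issue is more than a sentence to repair.
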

\begin{proof}[\proofidea]
    To check whether $\Pi_Q \cup D$ entails fact $\alpha$ holding at a time point $\tau$, it suffices to 
    consider facts (explicitly given or derived) holding at time points 
    in the interval between the minimum time point mentioned in $D$ and $\tau$; 
    such interval contains linearly-many time points due to $\tau$ being encoded in unary.
    We can then transform $\Pi_Q$ in $\logspace$  into a plain Datalog 
    program $\Pi'$  by first introducing an object for each time point in the interval, and then
    grounding the temporal arguments of all rules in $\Pi_Q$ over these objects.
    Clearly, it holds that $\Pi_Q \cup D$ entails $\alpha$ if so does $\Pi' \cup D$.
 \end{proof}

Theorem \ref{theorem:evaluation} allows us to immediately transfer known complexity bounds for query evaluation over
different classes of Datalog queries to the corresponding class of 
fp-queries---see, e.g., \cite{dantsin2001complexity,vorobyov1998complexity}.
In particular, it follows that evaluation of fp-queries is tractable in data complexity.

\begin{restatable}{corollary}{lemmaevaluation}
    \label{cor:evaluation}
    The following complexity bounds hold for the query evaluation problem over classes of fp-queries:
    \begin{compactitem}
    \item[--] $\evaluation^{\fpclass}$ is $\exptimecomplete$ and
    $\ptimecomplete$ in data; 
    \item[--] $\evaluation^{\nrclass}$ is $\pspacecomplete$ and in $\aczero$
    in data; and
    \item[--] $\evaluation^{\ogclass}$ is $\ptimecomplete$.
    \end{compactitem}
\end{restatable}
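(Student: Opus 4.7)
The plan is to derive each bound by combining Theorem \ref{theorem:evaluation} with the corresponding complexity results for Datalog, and to obtain matching lower bounds from the fact that the relevant Datalog class is syntactically contained in its fp-counterpart.

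For the upper bounds, I would apply Theorem \ref{theorem:evaluation} to reduce $\evaluation^{\qclass}$ in logspace to $\evaluation^{\qclass'}$, where $\qclass'$ is the Datalog fragment of $\qclass$. From \cite{dantsin2001complexity,vorobyov1998complexity}, Datalog evaluation is $\exptimecomplete$ in combined and $\ptimecomplete$ in data complexity; non-recursive Datalog is $\pspacecomplete$ in combined and in $\aczero$ in data; and fully ground (propositional Horn) Datalog is $\ptimecomplete$. These bounds transfer through a logspace reduction, yielding the required upper bounds for every case except the $\aczero$ data complexity of $\evaluation^{\nrclass}$. For the latter, I would observe that for a fixed non-recursive fp-query $Q$ the grounding of Theorem \ref{theorem:evaluation} amounts to enumerating the linearly-many time points appearing in $D$ and substituting them into the single time variable of each rule of $Q$; this can be carried out in $\aczero$ in $|D|$, so composing with the $\aczero$ evaluation of the resulting non-recursive Datalog program keeps the overall procedure in $\aczero$. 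Alternatively, one can argue directly that a fixed non-recursive fp-query unfolds into a bounded union of conjunctive queries with temporal-arithmetic atoms, which is a fixed first-order query and hence in $\aczero$.

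For the matching lower bounds, the key observation is that Definition \ref{def:fp-query} explicitly classifies every Datalog rule as forward-propagating, so each of $\fpclass$, $\nrclass$, and $\ogclass$ syntactically contains the corresponding Datalog class. Consequently, every hardness result for Datalog transfers verbatim: $\exptime$-hardness and $\ptime$-hardness in data for $\evaluation^{\fpclass}$ from general Datalog; $\pspace$-hardness for $\evaluation^{\nrclass}$ from non-recursive Datalog; and $\ptime$-hardness for $\evaluation^{\ogclass}$ from propositional Horn entailment, since any object-ground Datalog program is variable-free. I do not expect any substantive obstacle: the argument is essentially a bookkeeping exercise, with the only minor care needed in ensuring that the $\aczero$ upper bound survives the reduction, as outlined above.
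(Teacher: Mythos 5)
Your proposal is correct and follows essentially the same route as the paper, which treats the corollary as an immediate transfer of known Datalog complexity bounds through the \logspace{} reduction of Theorem~\ref{theorem:evaluation}, with lower bounds inherited because each Datalog class is syntactically contained in its fp-counterpart. Your extra care about the $\aczero$ data bound (which a \logspace{} reduction alone would not preserve) is well placed, and is consistent with the paper, which already records in the preliminaries that non-recursive temporal Datalog evaluation is in $\aczero$ in data complexity.
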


%%% Local Variables:
%%% mode: latex
%%% TeX-master: "main"
%%% End:

\section{A Generic Stream Reasoning Algorithm}
\label{sec:mbstreamreasoning}

\begin{algorithm}[t]
    \begin{footnotesize}
        \DontPrintSemicolon
        \SetKwRepeat{Loop}{loop}{end}
        \SetKwInput{Input}{Input}
         \SetKwInput{Parameters}{Parameters}
         \Parameters{Temporal fp-query $Q$, 
         window size $w$, and a subset $\Sigma$ of the IDBs in $Q$ with $P_Q \in \Sigma$. } 
         \Input{Background dataset $B$,
        stream $S$.}
        \BlankLine
        Assign $M:= B$ and $\tau:= 0$.\;
        %\BlankLine
        \Loop{}{
            Receive $\restrict{S}{\tau}$ and
            assign $M:= M \cup \restrict{S}{\tau}$. \; 
            Add to $M$ all $\Sigma$-facts $\alpha$ holding at $\tau$ s.t.\ 
            $\Pi_Q \cup M \models \alpha$.  \;
            Stream out all  $P_Q$-facts in $\restrict{M}{\tau}$. \; 
            If $\tau \geq w$, remove from $M$  all facts in $\restrict{M}{\tau-w}$. \; 
            $\tau:=\tau+1$.\; 
    }
    \end{footnotesize}
       \caption{A generic stream reasoning algorithm} \label{alg:main}
\end{algorithm}

A stream reasoning algorithm receives  as input  
an unbounded  \emph{stream} $S$ of timestamped facts and a set $B$ of rigid \emph{background facts}, 
and outputs (also as a stream) the answers to a standing temporal query $Q$, which is considered fixed. 
Algorithm~\ref{alg:main}, which we describe next,  is a generic such algorithm 
that is applicable to any fp-query.
In the algorithm (as well as in the rest of the paper), we denote with 
$\restrict{F}{[\tau,\tau']}$ the subset of
temporal facts in a dataset $F$ holding in the interval $[\tau,\tau']$, and write
$\restrict{F}{\tau}$ for $\restrict{F}{[\tau,\tau]}$. Furthermore, from now on we will
silently assume all queries to be temporal.

Algorithm~\ref{alg:main} is parametrised by an fp-query $Q$, a non-negative integer
\emph{window size}  $w$ and a signature $\Sigma$, where the latter two parameters determine the set of facts $M$ 
kept in memory by the algorithm at any point in time.
The algorithm is initialised in Line 1, where the input set $B$ of rigid background facts is loaded into memory and the current time $\tau$ is set to zero.
The core of the algorithm is an infinite loop, where each iteration consists of the following four steps and the current time
$\tau$ is incremented at the end of
each iteration.
\begin{compactenum}[1.]
\item  The batch of
input stream facts holding at $\tau$ is received and loaded into memory (Line~3). 
\item All implicit facts over the relevant signature $\Sigma$ holding at $\tau$ are computed and materialised in memory (Line~4).
\item  Query answers holding at $\tau$ are read from memory and streamed out (Line~5);
\item  All facts (explicit in $S$ or implicitly derived) holding at $\tau-w$ are removed from memory (Line~6).
\end{compactenum}
In order to favour scalability, Algorithm \ref{alg:main} restricts at any point in time the set of facts
kept in memory  and therefore considered for query evaluation. This, however, carries the obvious risk that valid  answers holding over the entire stream
may be missed by the algorithm if the facts they depend on are removed from memory too early. Therefore, the window size of the algorithm
should be chosen so that the following correctness property is satisfied.

\begin{definition}
    A window size $w$ is \emph{valid}  for an fp-query $Q$, a signature $\Sigma$, and
    a class $\dclass$ of datasets if, when parametrised with $Q,w$ and $\Sigma$, and
    for each input $\langle B, S \rangle$ with $B \cup S \in \dclass$ and 
    each $n > 0$,
    the set of facts streamed out by Algorithm~\ref{alg:main} in the first $n$ iterations coincides with
    $\restrict{Q(B \cup S)}{[0,n-1]}$.
\end{definition}

In prior work \cite{ronca2017stream} we considered an algorithm
that does not keep derived facts (other than possibly query answers) in memory and thus only stores 
EDB facts from the input stream.
When applied to an fp-query $Q$,
the algorithm in our previous work 
can be seen as a variant of Algorithm \ref{alg:main} where $\Sigma = \{P_Q\}$. 
This variant of Algorithm~\ref{alg:main} is, however, problematic for recursive queries since
no valid window size may exist, in which case  the entire stream received so  far must be kept in memory to ensure correctness. 

\begin{proposition}
There exists no valid window size for the object-ground fp-query $Q$ where, 
for $A$ an EDB predicate,
$\Pi_Q  = \{ A(t) \rightarrow B(t);  B(t) \rightarrow B(t+1); B(t) \rightarrow
P_Q(t)\}$,
$\Sigma = \{P_Q\}$, 
and the class of all datasets.
\end{proposition}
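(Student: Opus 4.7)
The plan is to show, for every fixed candidate window $w \geq 0$, the existence of an input $\langle B, S \rangle$ on which Algorithm~\ref{alg:main} fails to output an answer in $Q(B \cup S)$. The key observation is that since $\Sigma = \{P_Q\}$ and the IDB predicate $B$ lies outside $\Sigma$, no $B$-fact is ever materialised in $M$; consequently, every derivation of a $P_Q$-fact at some time $\tau$ must reconstruct the full $B$-chain from $A$-facts currently stored in $M$.

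Concretely, I would take $B = \emptyset$ and $S = \{A(0)\}$. A straightforward induction on $\tau$ using the rules $A(t) \to B(t)$ and $B(t) \to B(t+1)$ shows that $\Pi_Q \cup S \models B(\tau)$ for every $\tau \geq 0$, and therefore $Q(B \cup S) = \{P_Q(\tau) \mid \tau \geq 0\}$ by the third rule. In particular, $P_Q(w+1)$ must be streamed out at the iteration handling time $\tau = w+1$ for the window to be valid.

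I would then trace the algorithm and establish the invariant that, just after Line~\materlineref{} of the iteration for $\tau$ with $0 \leq \tau \leq w$, the memory satisfies $M = \{A(0)\} \cup \{P_Q(0), \dots, P_Q(\tau)\}$. The fact $A(0)$ is still present because the removal step has only discarded facts holding at times below $0$; each $P_Q(\tau)$ is added because $A(0) \in M$ yields $\Pi_Q \cup M \models P_Q(\tau)$ via the $B$-chain; and no $B$-fact is stored since $B \notin \Sigma$. At the end of the iteration for $\tau = w$, the fact $A(0)$, which holds at $\tau - w = 0$, is removed. In the subsequent iteration for $\tau = w+1$ the stream contributes nothing new, $M$ contains only $P_Q$-facts, and since $P_Q$ does not occur in the body of any rule of $\Pi_Q$, the set $\Pi_Q \cup M$ entails no $B$-fact and hence no $P_Q(w+1)$. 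Therefore $P_Q(w+1)$ is not streamed out, contradicting validity for $n = w+2$.

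The main obstacle is the careful bookkeeping of the contents of $M$ across iterations. The crucial point is that the inertia between iterations can only propagate $\Sigma$-facts (any other fact is either removed by the window or simply never added in Line~\materlineref{}), so once $A(0)$ is discarded the $B$-chain needed to produce further $P_Q$-facts can no longer be rebuilt. Phrasing the invariant as an explicit equality for $M$ at every iteration avoids any circularity in this argument.
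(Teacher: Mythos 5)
Your proof is correct. The paper states this proposition without providing a proof (it is treated as an immediate observation), and your argument --- taking $S=\{A(0)\}$, tracking that only $\Sigma$-facts (here $P_Q$-facts) survive materialisation, and showing that once $A(0)$ is evicted at the end of iteration $\tau=w$ the fact $P_Q(w+1)$ can no longer be derived because $P_Q$ occurs in no rule body --- is exactly the intended witness construction, with the memory invariant stated carefully enough to handle every $w\geq 0$ including the degenerate case $w=0$.
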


To address this limitation, we focus from now onwards on a
\emph{full materialisation} variant of Algorithm  \ref{alg:main}, in which the signature parameter is fixed to
the set $\Sigma_Q$ of all IDB predicates in $Q$---that is, where the algorithm keeps in memory a complete materialisation of the query's program
for the relevant time points. Computing and incrementally maintaining a full materialisation is a common reasoning
approach adopted by many
rule-based systems \cite{mnph15incremental-BF-sameAs,mnpho14parallel-materialisation-RDFox,DBLP:journals/tocl/LeonePFEGPS06,DBLP:conf/ruleml/BagetLMRS15}.
In this setting, we will be able to ensure existence of a valid window size for any fp-query, 
and to show that a (maybe larger than needed) valid window size can be obtained syntactically by inspecting the rules in the query one at a time.

Towards this goal, we first analyse the aforementioned stream reasoning algorithm parametrised with query $Q$, 
window size $w$, and signature $\Sigma_Q$,
and show that only  the rules in $Q$
with radius at most $w$  can contribute to the  output.
\begin{restatable}{theorem}{thmoutput}
    \label{thm:output}
   Consider Algorithm  \ref{alg:main} parametrised with 
    $Q$, $w$ and $\Sigma_Q$.
    On input $\langle B, S \rangle$, 
    the set of $P_Q$-facts streamed out  in the first $n$ iterations
    coincides with $\restrict{Q^w(B \cup S)}{[0,n-1]}$.
\end{restatable}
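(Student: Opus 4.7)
The plan is to prove by induction on the iteration index $\tau$ the following invariant: at the moment line~\outputlineref{} is executed in iteration $\tau$, the $\Sigma_Q$-facts in $M$ holding at each time point $t \in [\max(0,\tau-w),\tau]$ coincide exactly with the $\Sigma_Q$-facts at $t$ entailed by $\Pi_Q^w \cup B \cup S$. The theorem then follows immediately, since the $P_Q$-facts streamed out in iteration $\tau$ are precisely those in $\restrict{M}{\tau}$ at that moment, which by the invariant equals $\restrict{Q^w(B\cup S)}{\tau}$; taking the union over $\tau \in [0, n-1]$ yields $\restrict{Q^w(B\cup S)}{[0, n-1]}$.

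The central observation driving the proof is that at iteration $\tau$, only rules of $\Pi_Q$ of radius at most $w$ can fire in line~\materlineref{}. By forward-propagation, every temporal body atom of a rule with head at time $\tau$ must hold at a time in $[\tau - k, \tau]$, where $k$ is the rule's radius (rigid body atoms need only be in $B \subseteq M$). If $k > w$, then some body atom would need to hold at a time strictly below $\tau - w$, but all such facts have already been removed from $M$ by the clean-up step of previous iterations. Consequently the derivation computed in line~\materlineref{} using $\Pi_Q \cup M$ coincides with what $\Pi_Q^w \cup M$ would compute.

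Given this observation, the inductive step proceeds as follows. Assuming the invariant for all iterations below $\tau$, after line~\inputlineref{} of iteration $\tau$ the set $M$ contains $B$, all stream EDB facts at times in $[\max(0,\tau-w), \tau]$, and all $\Sigma_Q$-facts at times in $[\max(0,\tau-w), \tau-1]$ entailed by $\Pi_Q^w \cup B \cup S$. Soundness of the materialisation step is then immediate: each newly derived fact at $\tau$ uses a rule of radius at most $w$ whose body atoms are already justified by $\Pi_Q^w \cup B \cup S$, and so the derived fact is too. For completeness, any $\Sigma_Q$-fact $\alpha$ at $\tau$ entailed by $\Pi_Q^w \cup B \cup S$ admits, by forward-propagation, a derivation using only facts at times $\le \tau$; a secondary induction on the depth of the portion of this derivation occurring at time exactly $\tau$, combined with the inductive hypothesis that all required facts at times in $[\tau - w, \tau - 1]$ are already in $M$, shows that the fixpoint computation in line~\materlineref{} produces $\alpha$.

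The main technical subtlety lies in carefully tracking memory contents across the removal step of iteration $\tau - 1$ into line~\materlineref{} of iteration $\tau$: facts holding at time $\tau - w$ are still in $M$ at this point, and are precisely what enables the firing of rules of radius exactly $w$. A related point is that the completeness argument relies essentially on the full-materialisation parameter $\Sigma_Q$ rather than $\{P_Q\}$, since intermediate IDB facts derived in earlier iterations from witnesses that have since been discarded must remain available in $M$ to justify later derivations.
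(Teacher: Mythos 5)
Your proposal is correct and follows essentially the same route as the paper: the paper proves the same memory-contents invariant (its Lemma~\ref{lemma:window-subprogram-prev}, that after Line~\materlineref{} the temporal facts in $M$ are exactly those entailed by $\Pi_Q^w \cup B \cup S$ at times in $[\tau-w,\tau]$) by an outer induction on the iteration with the same two inclusions, the same observation that a rule instance of radius exceeding $w$ cannot fire because it would require a fact at a time point below $\tau-w$ that is no longer in $M$ (formalised via Proposition~\ref{prop:relevant-time}), and the same inner induction on derivation height for completeness.
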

\begin{proof}[\proofidea]
We show by induction on $\tau$
 that the set of temporal facts stored in $M$ right after executing Line~4 of the algorithm's main loop coincides with 
 the temporal facts entailed by 
 $\Pi_Q^w \cup B \cup S$ and holding at any $\tau' \in [\tau-w, \tau]$, which directly implies the statement of the theorem.
 On the one hand, we  show that any derivation from $\Pi_Q \cup M$ of a fact  $\alpha$ holding at $\tau$ can involve only rules from $\Pi_Q^w$; in particular,
 any derivation 
 involving a rule in $\Pi_Q$ with radius exceeding
 $w$ would require some fact holding 
  at a time point prior to $\tau-w$, where
  all such facts were removed from $M$ in previous iterations of the algorithm. 
On the other hand, we show that  all facts holding at $\tau$ entailed by $\Pi_Q^w \cup B \cup S$
admit a derivation involving only facts holding in $[\tau-w,\tau]$; by the induction hypothesis, all such facts
are in 
 $M$ when Line~4 of the algorithm is executed in the loop's iteration for $\tau$.
 \end{proof}
Theorem \ref{thm:output} immediately yields a characterisation of window size validity in terms
of query containment.

\begin{restatable}{corollary}{corvalidwindowcorrectness}
    \label{cor:valid-window-correctness}
     A window size $w$ is valid for an fp-query $Q$, the signature $\Sigma_Q$, and a class $\dclass$ of
    datasets iff $Q \sqsubseteq_{\dclass} Q^w$.
\end{restatable}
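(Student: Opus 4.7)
The plan is to derive the corollary directly from Theorem~\ref{thm:output} together with the trivial containment $Q^w \sqsubseteq_\dclass Q$, which holds because $\Pi_Q^w \subseteq \Pi_Q$ and hence every model of $\Pi_Q$ is a model of $\Pi_Q^w$. With this in hand, validity of $w$ (which by Theorem~\ref{thm:output} says that $\restrict{Q^w(B\cup S)}{[0,n-1]} = \restrict{Q(B\cup S)}{[0,n-1]}$ for every admissible input and every $n$) collapses to the single containment $Q(D) \subseteq Q^w(D)$ on each $D \in \dclass$.

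For the backward direction I would assume $Q \sqsubseteq_\dclass Q^w$. Combined with the trivial reverse inclusion this gives $Q(D) = Q^w(D)$ for every $D \in \dclass$. Now take any input $\langle B, S \rangle$ with $B \cup S \in \dclass$ and any $n > 0$. By Theorem~\ref{thm:output}, the facts streamed out in the first $n$ iterations form exactly $\restrict{Q^w(B \cup S)}{[0,n-1]}$, which by our hypothesis equals $\restrict{Q(B \cup S)}{[0,n-1]}$. This is precisely the definition of validity.

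For the forward direction I would assume $w$ is valid and fix an arbitrary $D \in \dclass$. Split $D = B \cup S$ into its rigid part $B$ and its temporal part $S$; this is a legitimate input to Algorithm~\ref{alg:main} with $B \cup S = D \in \dclass$. Given $\alpha \in Q(D)$, recall that queries are silently assumed temporal, so $\alpha$ is a $P_Q$-fact holding at some time point $\tau$. Instantiating validity with $n = \tau + 1$ yields that the output in the first $\tau + 1$ iterations equals $\restrict{Q(D)}{[0,\tau]}$, so $\alpha$ is streamed out; by Theorem~\ref{thm:output} the same output equals $\restrict{Q^w(D)}{[0,\tau]}$, whence $\alpha \in Q^w(D)$. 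Thus $Q(D) \subseteq Q^w(D)$, i.e.\ $Q \sqsubseteq_\dclass Q^w$.

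There is essentially no hard step: the corollary is a direct rephrasing of Theorem~\ref{thm:output} modulo the observation $Q^w \sqsubseteq_\dclass Q$. The only mildly delicate point is matching quantifiers — validity quantifies over algorithm inputs $\langle B, S \rangle$ and over $n$, while containment quantifies over datasets $D \in \dclass$ — but this is handled by decomposing $D$ into its rigid and temporal parts and choosing $n$ large enough to expose any given answer fact.
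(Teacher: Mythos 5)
Your proposal is correct and matches the paper's intent: the paper treats this corollary as an immediate consequence of Theorem~\ref{thm:output}, and your argument is exactly the natural spelling-out of that derivation, using the trivial containment $Q^w \sqsubseteq_{\dclass} Q$ (from $\Pi_Q^w \subseteq \Pi_Q$ and monotonicity) for one direction and the decomposition of an arbitrary $D \in \dclass$ into its rigid and temporal parts, with $n = \tau+1$, for the other. No issues.
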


Since $Q$ and $Q^w$ coincide unless the radius of $Q$ exceeds $w$, we can conclude that the radius of $Q$
is always a valid window size.

\begin{restatable}{corollary}{corsyntacticcondition}
    \label{cor:syntactic-condition}
    Let $Q$ be an fp-query.
    Then, the radius of $Q$ is a valid window size for $Q$, $\Sigma_Q$, and any class of datasets~$\dclass$.
\end{restatable}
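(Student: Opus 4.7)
The plan is to derive this as an immediate consequence of Corollary~\ref{cor:valid-window-correctness} together with the definitions of $Q^k$ and of the radius of a query. Let $w$ be the radius of $Q$, i.e.\ the maximum radius amongst the rules of $\Pi_Q$. By Definition~\ref{def:fp-query}, $\Pi_Q^w$ consists of all rules in $\Pi_Q$ whose radius is at most $w$; since $w$ is the maximum, every rule of $\Pi_Q$ satisfies this condition, so $\Pi_Q^w = \Pi_Q$ and hence $Q^w$ and $Q$ are the same query.

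In particular, $Q(D) = Q^w(D)$ for every dataset $D$, so trivially $Q \sqsubseteq_{\dclass} Q^w$ for any class $\dclass$. Applying Corollary~\ref{cor:valid-window-correctness} in the direction ``containment implies valid window'' then yields that $w$ is a valid window size for $Q$, $\Sigma_Q$, and $\dclass$.

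There is no real obstacle here: the work has already been done in Theorem~\ref{thm:output} and Corollary~\ref{cor:valid-window-correctness}, which establish that the output of the algorithm parametrised with window $w$ coincides with the answers of $Q^w$. The only thing to check is the purely syntactic observation that $Q^w$ collapses to $Q$ once $w$ is at least the radius of $Q$, which is immediate from the definitions.
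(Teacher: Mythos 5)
Your proposal matches the paper's own argument exactly: the paper derives the corollary from the observation that $Q$ and $Q^w$ coincide once $w$ is at least the radius of $Q$, so that $Q \sqsubseteq_{\dclass} Q^w$ holds trivially and Corollary~\ref{cor:valid-window-correctness} applies. The reasoning is correct and complete.
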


%%% Local Variables:
%%% mode: latex
%%% TeX-master: "main"
%%% End:

\section{The Window Validity Problem}
\label{sec:window}

The full materialisation of a query for any given time point
may be rather large. Having  a valid window size that is as small as possible
is thus important for Algorithm~\ref{alg:main} to be practically feasible, where even a small improvement on
the window size 
can lead to a significant reduction in the number of facts stored in memory and used for query evaluation.

In particular, the radius of the query yields a valid window size that may be larger than strictly necessary.
For instance, our running example query has a radius of two, which would require 
Algorithm~\ref{alg:main} to keep a full materialisation for three consecutive time points; % as we will see later
% on, however, the query admits a valid window size of just one.
however, the query admits a valid window size of just one since the policy implemented by Rule~\eqref{eq:attck} is subsumed by the other IDP in the example.

We next introduce the \emph{window validity problem}, which is to check whether a given window size is valid for a given
query. Due to Corollary \ref{cor:syntactic-condition}, 
computing a valid window of minimal size is clearly feasible using a logarithmic number of calls in the radius of the query
to an oracle 
for this problem. Furthermore, such minimal window can be computed ``offline'' before Algorithm~\ref{alg:main} 
is applied to any input data.

\begin{definition}
   Let $\qclass$ and $\dclass$ be classes of fp-queries and datasets, respectively.
   Then, $\textsc{Window}_{\dclass}^{\qclass}$ is the problem of deciding, 
   given $Q \in \qclass$ and $w \geq 0$ as input, whether $w$ is a valid window size 
   for $Q$, $\Sigma_Q$, and $\dclass$.
\end{definition}

Corollary \ref{cor:valid-window-correctness} provides a straightforward reduction
from our problem to query containment.
We next show that a reduction in the other direction also exists, which implies that
our problem has exactly the same complexity as query containment for all classes
of queries we consider.

\begin{restatable}{theorem}{thwindowcontainmentinterreducibleprev}
    \label{th:window-containment-interreducible-prev}
    $\window_{\dclass}^{\qclass}$ 
    and $\containment_{\dclass}^{\qclass}$ are interreducible in \logspace{}
    for each $\qclass \in \{ \fpclass, \ogclass, \nrclass, \ognrclass \}$ and
    each class $\dclass$ of datasets.
\end{restatable}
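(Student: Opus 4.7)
My plan is to treat the two reductions separately. The forward direction, from $\window_{\dclass}^{\qclass}$ to $\containment_{\dclass}^{\qclass}$, is immediate from Corollary~\ref{cor:valid-window-correctness}: given an instance $(Q, w)$, I would output the pair $(Q, Q^w)$, where $Q^w$ is computable in \logspace{} by keeping only those rules of $Q$ whose radius is at most $w$. This reduction clearly preserves membership in each subclass $\qclass$ since $Q^w$'s rule set is a subset of $Q$'s.

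For the reverse direction, I would construct, from a containment instance $(Q_1, Q_2)$ with shared output predicate $P$, a single query $Q$ and window size $w$ such that $Q_1 \sqsubseteq_{\dclass} Q_2$ iff $w$ is valid for $Q$. Concretely, I would set $w$ to be the maximum radius of any rule in $\Pi_1 \cup \Pi_2$; rename the IDBs of $\Pi_1$ and $\Pi_2$ to disjoint fresh predicates, yielding $\Pi_1^*$ and $\Pi_2^*$ with renamed outputs $P_1$ and $P_2$; introduce a fresh unary temporal EDB $M$ and a fresh output predicate $P_Q$; and take $\Pi_Q$ to be $\Pi_1^* \cup \Pi_2^*$ together with the two rules $P_2(\vec{x}, t) \to P_Q(\vec{x}, t)$, of radius $0$, and $P_1(\vec{x}, t) \wedge M(t-(w+1)) \to P_Q(\vec{x}, t)$, of radius $w+1$. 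In $Q^w$ only the first of these derives $P_Q$, so $Q^w$'s answers on $P_Q$ coincide with $Q_2$'s on $P$; in $Q$, the second rule additionally activates wherever the marker $M$ is positioned at the appropriate earlier time, injecting $Q_1$'s answers into the output.

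For correctness I would argue that $Q \sqsubseteq_{\dclass} Q^w$ is equivalent to the condition that, for every $D \in \dclass$ and $\tau$ with $M(\tau-(w+1)) \in D$, $Q_1(D)(\tau) \subseteq Q_2(D)(\tau)$. Since $M$ is a fresh predicate not used by $\Pi_1^*$ or $\Pi_2^*$, and $\dclass$ is closed under adding unary temporal facts, this further reduces to the same inclusion for every $\tau \geq w+1$ and every $D \in \dclass$. The main obstacle will be extending the inclusion uniformly to all time points: any counterexample at some $\tau < w+1$ must be shown to yield a counterexample at some $\tau \geq w+1$. I plan to close this gap using the time-locality of fp-queries established in the proof of Theorem~\ref{thm:output}---an fp-query's answer at $\tau$ depends only on facts in a bounded past window---so that a small-$\tau$ counterexample can be replicated at large $\tau$ by combining closure of $\dclass$ under unary temporal facts with the insertion of an appropriate marker.

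Finally, the construction preserves membership in each subclass $\qclass \in \{\fpclass, \ogclass, \nrclass, \ognrclass\}$: $\Pi_Q$ is forward-propagating because each added rule has a single time variable in its head and non-negative radius; object-groundness is preserved by replacing each of the two output rules with one instance per ground object tuple of $P_1, P_2$, of which there are polynomially many; and non-recursion is preserved because the new rules only connect the existing IDBs $P_1, P_2$ into the fresh predicate $P_Q$, introducing no new cycles. The entire construction is \logspace{}-computable, as it only requires finding the maximum radius and producing a bounded number of new simple rules.
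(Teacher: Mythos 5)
Your construction is essentially the paper's: the forward direction via Corollary~\ref{cor:valid-window-correctness} is identical, and your reverse direction mirrors rules \eqref{eq:goal-rule-1}--\eqref{eq:goal-rule-2}, with your marker $M(t-(w+1))$ playing the role of the paper's $A(t-w-1)$. Two smaller points first. The containment instances range over all queries in $\qclass$, which includes Datalog queries with a rigid output predicate; for those your rule ``$P_2(\vec{x},t)\to P_Q(\vec{x},t)$'' degenerates to $P_2(\vec{x})\to P_Q(\vec{x},t)$, which is unsafe. The paper guards both new rules with a fresh unary temporal atom $B(t)$ and uses $\vec{s}=\vec{x}$ versus $\langle\vec{x},t\rangle$ precisely to cover this case; you need the same device. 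Likewise, your grounding argument for $\ogclass$ only yields polynomially many instances if $\vec{x}$ is instantiated over the object tuples actually occurring in $P_1$-/$P_2$-heads (the paper instead assumes w.l.o.g.\ that object-ground queries are object-free).

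The substantive issue is the one you flag yourself: your $P_1$-rule can only fire at time points $\tau\ge w+1$, so a counterexample to $Q_1\sqsubseteq_{\dclass}Q_2$ witnessed only at small $\tau$ is not directly converted into a counterexample to window validity. Your proposed repair---replicating the counterexample at a later time point---is not licensed by the theorem's hypotheses: an arbitrary $\dclass$ is only assumed closed under compactness and under adding \emph{unary temporal} facts, and neither property lets you translate a dataset forward in time when it contains essential non-unary temporal EDB facts near time $0$. Time-shift invariance does hold for the concrete classes used elsewhere in the paper (all finite datasets, the fixed-domain class $\mathcal{O}$), but it is not assumed in the statement. For what it is worth, the paper's own proof is silent here as well: it simply extends $D$ with $A(\tau-\rho-1)$, tacitly presupposing $\tau\ge\rho+1$. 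So the obstacle you identify is genuine, but the plan you give for closing it does not go through as stated; you would need either an explicit shift-closure assumption on $\dclass$ or a construction whose extra rule can fire at every time point.
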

\begin{proof}[\proofidea]
   Consider queries $Q_1$ and $Q_2$ in $\qclass$, and assume w.l.o.g.\ that
   they do not share any IDBs other than the output predicate.
   In the case $\qclass \in \{ \ogclass, \ognrclass \}$ we also assume 
   w.l.o.g.\ that $Q_1$ and $Q_2$ are object-free.
   The key idea in reducing containment to window validity is to  
   merge $Q_1$ and $Q_2$ into a single query $Q$ such that
   \begin{compactenum}[1.]
   \item  both $Q_1$ and $Q_2$ may contribute to the answers of $Q$, and
   \item only $Q_2$ may contribute to the answers of $Q^w$ if $w$ is chosen as the
   maximum radius amongst $Q_1$ and $Q_2$.
   \end{compactenum}
    It follows that such $w$ is a valid window for $Q$, $\Sigma_Q$ and $\dclass$ iff 
    $Q_1 \sqsubseteq_{\dclass} Q_2$.
    To construct $\Pi_Q$, we first rename the output
    predicate in $\Pi_{Q_1}$ and $\Pi_{Q_2}$ to fresh $P_{Q_1}$ and $P_{Q_2}$, then union the
    resulting programs, and finally include the 
    following extra rules \eqref{eq:goal-rule-1} and \eqref{eq:goal-rule-2},
    where $A$ and $B$ are fresh unary temporal EDB predicates, 
    $w$ is as before, and $\vec{s} = \langle \vec{x}, t \rangle$ if $Q_1$ and $Q_2$
    are temporal and
    $\vec{s} = \vec{x}$ otherwise.
     \begin{align}
        \label{eq:goal-rule-1}
        A(t-w-1) \land B(t) \land P_{Q_1}(\vec{s}) &\to P_Q(\vec{x},t) \\
        \label{eq:goal-rule-2}
        B(t) \land P_{Q_2}(\vec{s}) &\to P_{Q}(\vec{x}, t)
    \end{align}
Note that both $Q_1$ and $Q_2$ contribute to the answers to $Q$ if the input stream contains
facts for $A$ and $B$ in all time points. Furthermore,  Rule \eqref{eq:goal-rule-1} has radius $w+1$; thus, it %and hence
%which implies that Rule  \eqref{eq:goal-rule-1}
is not contained in $Q^w$ and cannot contribute to its answers.
\end{proof}

Since the language of fp-queries is an extension of
Datalog, 
it follows from Theorem \ref{th:window-containment-interreducible-prev} 
and standard results on Datalog query containment
that window validity is undecidable \cite{shmueli1993equivalence} in general and
 \conexptimehard{} for non-recursive queries  \cite{benedikt2010impact}.
 Furthermore, the results on  containment for non-recursive temporal queries in our
 prior work \cite{ronca2017stream} show that the aforementioned
 \conexptime{} lower bound is tight.
\begin{restatable}{corollary}{corknowncontainment}
    \label{cor:known-containment}
    Let $\dclass$ contain all finite datasets. Then,
    \begin{compactitem}
    \item[--] $\window_{\dclass}^{\qclass}$ is undecidable for any $\qclass$
    containing all Datalog queries, and
    \item[--] $\window_{\dclass}^{\nrclass}$ is \conexptimecomplete{}.
    \end{compactitem}
\end{restatable}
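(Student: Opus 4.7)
My plan is to derive Corollary \ref{cor:known-containment} as a direct consequence of Theorem \ref{th:window-containment-interreducible-prev}, which provides \logspace{} interreducibility between $\window_{\dclass}^{\qclass}$ and $\containment_{\dclass}^{\qclass}$ for each relevant class $\qclass$. The task thus reduces to transferring the known (un)decidability and complexity results for (non-recursive) Datalog containment through this equivalence, in the appropriate direction in each case.

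For the first bullet, I would first observe that every Datalog query is, by Definition \ref{def:fp-query}, trivially an fp-query: the forward-propagation conditions only constrain rules containing temporal atoms and are vacuously met by Datalog rules. Hence, whenever $\qclass$ contains all Datalog queries, $\containment_{\dclass}^{\qclass}$ subsumes Datalog containment, which is undecidable by Shmueli's classical result as soon as $\dclass$ contains all finite datasets. Composing with the \logspace{} reduction from containment to window validity supplied by Theorem \ref{th:window-containment-interreducible-prev} yields undecidability of $\window_{\dclass}^{\qclass}$.

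For the second bullet, the \conexptime{} lower bound for $\window_{\dclass}^{\nrclass}$ is obtained analogously: non-recursive Datalog queries lie inside $\nrclass$, so the \conexptime{}-hardness of their containment established by Benedikt et al.\ transfers to $\containment_{\dclass}^{\nrclass}$ and, via the reduction, to $\window_{\dclass}^{\nrclass}$. The matching upper bound is obtained in the opposite direction, by reducing $\window_{\dclass}^{\nrclass}$ to $\containment_{\dclass}^{\nrclass}$ and invoking the \conexptime{} upper bound for containment of non-recursive temporal queries established in our prior work \cite{ronca2017stream}.

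The only delicate point, which I would verify explicitly, is that the reductions underlying Theorem \ref{th:window-containment-interreducible-prev} preserve the class: inspecting the construction, the merged query is built by disjoint renaming of $Q_1$ and $Q_2$ together with the glue rules \eqref{eq:goal-rule-1} and \eqref{eq:goal-rule-2}, which introduce neither object variables nor new IDB cycles, so they respect both object-groundness and non-recursiveness. Beyond this bookkeeping there is no substantive obstacle; the corollary is a purely compositional consequence of the interreducibility theorem and of previously known complexity bounds on Datalog and temporal query containment.
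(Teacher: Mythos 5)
Your proposal is correct and follows essentially the same route as the paper, which likewise derives the corollary by transferring undecidability of Datalog containment, \conexptime{}-hardness of non-recursive Datalog containment, and the \conexptime{} upper bound for non-recursive temporal query containment from prior work through the \logspace{} interreducibility of Theorem~\ref{th:window-containment-interreducible-prev}. Your explicit check that the merging construction preserves object-groundness and non-recursiveness matches the corresponding argument in the paper's appendix proof of that theorem.
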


In the following section we show how to circumvent the undecidability result 
in Corollary \ref{cor:known-containment} while preserving the full power of
forward-propagating queries and, in particular, their ability to express temporal recursion.

%%% Local Variables:
%%% mode: latex
%%% TeX-master: "main"
%%% End:

\section{Window Validity for Fixed Object Domain}
\label{sec:boundedobject}

We consider the situation where the set of objects relevant to the application domain can be 
fixed in advance, in the sense that any input set of background facts and any input stream refer only to those objects.
This is a reasonable assumption in many applications of stream reasoning. For instance, when analysing temperature readings of
wind turbines, one may assume that the set of turbines generating the data
remains unchanged; furthermore, for the purpose of analysis
we can often also assume that temperature readings themselves can be discretised into relevant levels according to suitable thresholds.
In our running example, the set of nodes (pieces of data-generating computer equipment) present in the 
network is likely to change only rather rarely.

For the remainder of this section, let us fix a finite set $O$
of objects and let us denote with $\mathcal{O}$ the class of datasets mentioning objects from $O$ only.
Note that $\mathcal{O}$ is a valid class of datasets since it trivially satisfies 
the relevant assumptions  in Section \ref{sec:preliminaries}; thus, problems
$\bodwindow^{\qclass}$ and $\bodcontainment^{\qclass}$ are well-defined and, by Theorem \ref{th:window-containment-interreducible-prev}, 
they are 
also interreducible for
any class of queries  $\qclass$ mentioned in this paper.

In what follows, we show that $\bodwindow^{\qclass}$ is decidable and  establish  tight complexity bounds.

%%%
\subsection{Decidability and Upper Bounds}\label{sec:upper}
%%%

Fixing $O$ allows us to 
transform any input $Q$ to $\bodwindow^{\qclass}$ for $\qclass \subseteq
\fpclass$ into an object-ground query by grounding the object variables in $Q$
to constants in $O$; this yields an exponential reduction from $\bodwindow^{\qclass}$ to $\window^{\ogclass}$. 
Thus, our first step will be to
decide  window validity for object-ground queries, and for this we provide
a decision procedure for the corresponding query containment problem.

Let us consider fixed, but arbitrary, object-ground (temporal) queries $Q_1$ 
and $Q_2$ sharing an output predicate $G$. For simplicity, and without 
loss of generality, we assume that $Q_1$ and $Q_2$ contain no object
terms and hence all predicates in the queries are either nullary or unary and temporal.

We first show that there exists a number $b$ of exponential size in 
$\vert Q_1 \vert + \vert Q_2 \vert$ such that 
$Q_1 \not\sqsubseteq Q_2$
holds if and only if $G(\tau) \in Q_1(D)$ and $G(\tau) \notin Q_2(D)$ for some $\tau \in
[0,b]$ and some dataset $D$ over time points in $[0,b]$. 
We do so by constructing deterministic automata $\mathcal{A}_1$  and $\mathcal{A}_2$ for 
$Q_1$ and $Q_2$, respectively, and deriving $b$ from well-known bounds for the size of counter-examples to
automata containment.

\begin{restatable}{lemma}{lemmaautomatabound}
    \label{lemma:automata-bound}
    For each $i \in \{1,2\}$, let
    $\rho_i$ and $p_i$ be the radius and the size of the signature of $Q_i$, respectively.
    Let $b_i = 1 + 2^{p_i \cdot (\rho_i+2)}$,  and let $b = b_1 \cdot b_2$.
    
    If $Q_1 \not\sqsubseteq Q_2$, then there exists a time point $\tau \in [0,b]$ 
    and a dataset $D$ over time points in $[0,b]$ such that 
    $G(\tau) \in Q_1(D)$ and $G(\tau) \notin Q_2(D)$.
\end{restatable}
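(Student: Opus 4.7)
The plan is to follow the sketch in the paper and build, for each $i \in \{1,2\}$, a deterministic finite automaton $\mathcal{A}_i$ that tracks the forward materialisation of $Q_i$ over an input dataset; a product construction together with a reachability argument then bounds the length of a counterexample. To set up, I would first observe that, without loss of generality, $Q_1$ and $Q_2$ share no IDB other than $G$, and by the setup of the lemma every predicate in $\Pi_{Q_1} \cup \Pi_{Q_2}$ is nullary or unary temporal. Hence the facts over the signature of $Q_i$ holding at any single time point form a subset of size at most $p_i$, and a finite dataset $D$ is determined (modulo its rigid background part) by a sequence of EDB subsets, one per time point.

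I would then construct $\mathcal{A}_i$ over the alphabet of all subsets of $Q_i$'s EDBs. Its state after reading positions $0,\ldots,\tau$ records the sliding window $(F_{\tau-\rho_i},\ldots,F_\tau)$ of full signature subsets, together with the chosen rigid background. Because $Q_i$ is forward-propagating, any rule deriving a fact at time $\tau+1$ uses body atoms only at times $\tau+1-\rho_i,\ldots,\tau+1$; hence, given the current state and a new EDB letter, the next slot $F_{\tau+1}$ is computed deterministically as the least Datalog model of $\Pi_{Q_i}$ grounded at $\tau+1$ relative to the fixed past. Marking a state whenever $G$ appears in its most recent slot yields a DFA, and the count---$\rho_i+1$ window slots plus one rigid-background slot, each of size at most $2^{p_i}$, together with a dedicated initial state---gives $|\mathcal{A}_i| \leq 1 + 2^{p_i(\rho_i+2)} = b_i$.

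Next I would take the product $\mathcal{A}_1 \times \mathcal{A}_2$ over the combined EDB alphabet, declaring a state accepting iff its $\mathcal{A}_1$-component is marked and its $\mathcal{A}_2$-component is not. By construction, the product accepts some word iff there is a finite dataset $D$ and a time $\tau$ with $G(\tau) \in Q_1(D) \setminus Q_2(D)$, i.e.\ iff $Q_1 \not\sqsubseteq Q_2$. Since the product has at most $b_1 b_2 = b$ states, a standard shortest-accepting-path argument yields an accepting word of length at most $b$, which translates back into a dataset over time points in $[0,b]$ together with a separating $\tau$ in the same interval.

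The main obstacle I foresee is verifying that the local fixpoint performed at each transition of $\mathcal{A}_i$ faithfully reproduces the global semantics of $Q_i$---that the set recorded in $F_\tau$ really coincides with the facts entailed by $\Pi_{Q_i}\cup D$ at time $\tau$. This is where the forward-propagation restriction is essential: it guarantees that no EDB fact holding strictly after $\tau$ can participate in a derivation of a fact at $\tau$, so the sliding window is both a sound and a complete summary of the past for all future derivations. A secondary technical point is accommodating rigid background facts and the initial boundary (the nonexistent time points $-1,-2,\ldots$), both of which are folded into the state at a modest cost that accounts for the $+2$ inside the exponent of $b_i$.
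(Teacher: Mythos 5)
Your proposal is correct and follows essentially the same route as the paper's own proof: a deterministic automaton per query whose states are $(\rho_i+2)$-tuples recording the rigid background plus a sliding window of $\rho_i+1$ materialised time slots, with transitions computing the newest slot by local entailment, final states marked by the presence of $G$, and the length bound obtained from the product of the two state counts. The correctness obstacle you flag (that the windowed local fixpoint agrees with the global semantics, which hinges on forward-propagation) is exactly the content of the paper's auxiliary Proposition on syntactic program windows and its two automaton-correctness claims, so no further comparison is needed.
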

\begin{proof}[\proofidea]
We start with the observation that, given $Q_i$ and a dataset $D$, we can check whether the output predicate is derived  at any time point  
from $\Pi_{Q_i} \cup D$
using our generic stream reasoning algorithm. That is, we can start by loading
the rigid facts in $D$ and subsequently reading the temporal facts one time
point at a time while maintaining entailments over a window
of size $\rho_i$ until the output predicate is derived or $D$ does not contain any further time points.  

The correctness of this algorithm relies on the fact that $Q_i$ is forward-propagating and hence $\rho_i$ is a valid window. 
Based on this, we can construct a deterministic finite automaton $\mathcal{A}_i$  that captures $Q_i$ in the following sense:
on the one hand, each dataset $D$  corresponds to a word over the alphabet of the automaton, where the first symbol is
the set of rigid facts in $D$ and the remaining symbols encode the temporal
facts in $D$ one time point at a time on the other hand,
each state corresponds to a snapshot of the facts stored in memory by the algorithm, and a state is final if it corresponds to a snapshot in which the output predicate has
just been derived.   
Automaton $\mathcal{A}_i$ is defined as follows:

\begin{compactitem}
\item[--] A state is either the initial state $s_{\initstate}^i$, or a  
$(\rho_i+2)$-tuple where the first component is a subset of the rigid EDB 
predicates in $Q_i$, and the other components are subsets of the 
temporal (EDB and IDB) predicates in $Q_i$.
A  state is final if its last component contains the output predicate $G$.
\item[--] Each alphabet symbol  is a 
set $\Sigma$ of EDB predicates occurring in $Q_i$ such that
$\Sigma$ does not contain temporal and rigid predicates simultaneously.
\item[--] The transition function $\delta_i$ consists of
\begin{compactitem}
\item transitions  
$s_{\initstate}^i, \Sigma \mapsto \langle \Sigma, \emptyset, \dots, 
\emptyset \rangle$ such that $\Sigma$ consists of rigid predicates;
\item transitions
$\langle B, M_0, \dots, M_{\rho_i} \rangle, \Sigma \mapsto 
\langle B, M'_0, \dots, M'_{\rho_i} \rangle$ such that: $\Sigma$ consists of
temporal predicates; $M_j' = M_{j+1}$ for each $0 \leq j < \rho_i$; and
 $M_{\rho_i}'$ consists of each predicate $P$ satisfying   $\Pi_{Q_i} \cup B \cup H \cup U \models P(\rho_i)$ 
 for $H$ the set of all facts $R(j)$ with $R \in M_j$ and 
        $0 \leq j < \rho_i$,
        and $U$ the set of all facts $R(\rho_i)$ with $R \in \Sigma$.
\end{compactitem}
  \end{compactitem}

  The fact that each automaton $\mathcal{A}_i$ captures $Q_i$ in the sense
  described before ensures that the following properties immediately hold:
\begin{compactenum}[1.]
\item If $Q_1 \not\sqsubseteq Q_2$, then there exists a
  word %of length at least $2$
  that is accepted by $\mathcal{A}_1$ and not by $\mathcal{A}_2$.
\item For each word of length $n$ %$n \geq 2$
  accepted by $\mathcal{A}_1$ and not by $\mathcal{A}_2$, there exists a
  dataset $D$ over time points in $[0,n-2]$ such that $G(n-2) \in Q_1(D)$ and
  $G(n-2) \notin Q_2(D)$.
\end{compactenum}
We finally argue that these properties imply the statement of the lemma.
If $Q_1 \not\sqsubseteq Q_2$ then, by Property~1,
there is a word accepted by $\mathcal{A}_1$ and not by $\mathcal{A}_2$.
By standard automata results, it follows that there is also a word 
accepted by $\mathcal{A}_1$ and not by $\mathcal{A}_2$ having length $n$ bounded by the product of the
number of states in $\mathcal{A}_1$ and $\mathcal{A}_2$, where the number of states in $\mathcal{A}_i$ is bounded by $b_i$. 
By Property~2, there exists a dataset $D$ over 
time points in $[0,n-2]$ such that $G(n-2) \in Q_1(D)$ and 
$G(n-2) \notin Q_2(D)$, where $n$ is bounded by $b$.
%The size of each $S_i$ is bounded by $b_i$; thus, we can pick $b$ as $b_1 \cdot b_2$.
\end{proof}
Lemma \ref{lemma:automata-bound}  immediately suggests a non-deterministic algorithm for deciding
$Q_1 \not\sqsubseteq Q_2$, in which a witness dataset is
constructed and checked in each branch. In order to ensure that  the space used in each branch stays polynomial,
we exploit our observation in the beginning of the proof of Lemma \ref{lemma:automata-bound}.
A witness $D$ is 
guessed one time point at a time until reaching the bound $b$,  and $Q_1(D) \not\subseteq Q_2(D)$ is verified 
incrementally after each guess while keeping in memory just
a window of size bounded by the radiuses of $Q_1$ and $Q_2$.

\begin{restatable}{lemma}{semipropositionalcontainmentupperbound}
\label{lemma:semipropositional-containment-upper-bound}
    $\containment^{\ogclass}$ is in \pspace{}.
\end{restatable}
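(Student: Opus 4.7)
The plan is to derive the $\pspace$ upper bound from Lemma~\ref{lemma:automata-bound} by giving a nondeterministic polynomial-space procedure for deciding $Q_1 \not\sqsubseteq Q_2$, and then invoking Savitch's theorem ($\npspace = \pspace$). The key observation is that although the automata $\mathcal{A}_1$ and $\mathcal{A}_2$ constructed in the proof of Lemma~\ref{lemma:automata-bound} have exponentially many states, each individual state is a tuple of polynomial size: it consists of a subset of the (polynomially many) rigid EDB predicates together with $\rho_i + 1$ subsets of the (polynomially many) temporal predicates, and $\rho_i$ is polynomial under unary encoding of numbers. Hence a single state of either automaton can be stored in polynomial space.

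The algorithm proceeds as follows. First, guess a rigid alphabet symbol $\Sigma$ and transition both $\mathcal{A}_1$ and $\mathcal{A}_2$ from their initial states via $\delta_i(s_{\initstate}^i, \Sigma)$; store the two resulting states $q_1$ and $q_2$ in memory. Then, in a loop controlled by a binary counter of $O(\log b)$ bits (which is polynomial since $b$ is at most exponential in the input size), repeatedly guess a temporal alphabet symbol $\Sigma$, compute the successor states $\delta_1(q_1, \Sigma)$ and $\delta_2(q_2, \Sigma)$ and overwrite $q_1$ and $q_2$. After each step, if $q_1$ is final and $q_2$ is not, accept. If the counter reaches $b$ without acceptance, reject. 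By Properties~1 and~2 established in the proof of Lemma~\ref{lemma:automata-bound}, this procedure accepts on some branch if and only if $Q_1 \not\sqsubseteq Q_2$.

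It remains to verify that each transition step can be performed in polynomial space. The nontrivial case is a temporal transition from $\langle B, M_0, \dots, M_{\rho_i} \rangle$ on input $\Sigma$: we need to compute $M_{\rho_i}'$, defined as the set of predicates $P$ such that $\Pi_{Q_i} \cup B \cup H \cup U \models P(\rho_i)$, where $H$ and $U$ are polynomial-size datasets built from the current state and $\Sigma$. This is an instance of $\evaluation^{\ogclass}$ over a polynomial-size program and dataset, and by Corollary~\ref{cor:evaluation} it is decidable in polynomial time and hence in polynomial space. Each of the other components of the new state ($B$ and the shifted $M_j' = M_{j+1}$) can be read off directly.

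The main obstacle is bookkeeping rather than any substantive difficulty: we must be careful that the counter, the two automaton states, and the intermediate workspace used by the Datalog evaluation subroutine all fit within polynomial space simultaneously, and that we never instantiate the (exponentially large) automata explicitly. Because the bound $b$ from Lemma~\ref{lemma:automata-bound} is at most exponential and every state of $\mathcal{A}_i$ has polynomial description length, the total space is polynomial. This places $\containment^{\ogclass}$ in $\npspace$, and Savitch's theorem yields the $\pspace$ bound.
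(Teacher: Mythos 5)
Your proposal is correct and follows essentially the same route as the paper: the paper's proof also guesses the dataset one time point at a time, maintains the windowed materialisations of $Q_1$ and $Q_2$ (which are exactly the states of $\mathcal{A}_1$ and $\mathcal{A}_2$ that you simulate on the fly), checks for a $G$-fact derived by $Q_1$ but not $Q_2$ at each step, and stops at the bound $b$ from Lemma~\ref{lemma:automata-bound}, giving an $\npspace$ procedure. The only cosmetic difference is that you phrase the loop as on-the-fly automaton simulation while the paper phrases it as mimicking Algorithm~\ref{alg:main} with window $\rho$; the data structures and correctness argument are identical.
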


\begin{proof}
We decide $Q_1 \not\sqsubseteq Q_2$ using the following
algorithm, where $\rho$ is the maximum radius of $Q_1$ and $Q_2$.
\begin{compactenum}[1.]
    \item Guess a set $D_r$ of rigid facts and set $M_1$ and $M_2$ to $D_r$.
    \item  For each value of $\tau$ from $0$ to $b$ as in Lemma \ref{lemma:automata-bound}.
        \begin{compactenum}[a.]
            \item Guess $\restrict{D}{\tau}$.
            \item Set each  $M_i$ to $M_i \cup \restrict{D}{\tau}$.
            \item Add to each $M_i$  facts $\alpha$ at $\tau$ s.t.\
                $\Pi_{Q_i} \cup M_i \models \alpha$.
            \item If there is a $G$-fact in $\restrict{M_1}{\tau}$ and not in
                $\restrict{M_2}{\tau}$, accept.
            \item Remove from each $M_i$ all facts in $\restrict{M_i}{\tau-\rho}$.
        \end{compactenum}
    \item Reject.
\end{compactenum}

The algorithm correctly computes the answers over the guessed facts, since it
mimics Algorithm~1 and $\rho$ is a valid window for both queries.
By Lemma~\ref{lemma:automata-bound},  the algorithm finds a witness
dataset for non-containment whenever one exists.
Furthermore, the algorithm runs in polynomial space since the size of each $M_i$ is polynomial, and a polynomially-sized
counter suffices for checking the halting condition.
\end{proof}
Lemma \ref{lemma:semipropositional-containment-upper-bound}  yields
a $\pspace$ upper bound to $\window^{\ogclass}$.
In turn, it also provides an $\expspace$
upper bound to $\bodwindow^{\fpclass}$, which is obtained by first applying to
the input query $Q$ a grounding step where object variables from $\Pi_Q$ are
replaced with constants from the  object domain. 
Furthermore, this grounding process is polynomial
in the number of domain objects and exponential in the maximum number of object variables in
a rule from $\Pi_Q$; thus, the $\pspace$ upper bound in Lemma \ref{lemma:semipropositional-containment-upper-bound}
extends to any class of queries where the maximum number of object variables in
a rule can be bounded by a constant (which equals zero for $\ogclass$).

\begin{restatable}{theorem}{thmupperbounds}
\label{thm:upper-bounds}
The following upper bounds hold:
    \begin{compactitem}
  %  \item[--] $\window^{\ogclass}$ is in \pspace{};
    \item[--] $\bodwindow^{\fpclass}$ is in \expspace{};  and
    \item[--] $\bodwindow^{\qclass}$ it is in  \pspace{} for any class $\qclass$ 
        of fp-queries where the maximum number of object variables
        in any rule of any $Q \in \qclass$ is bounded by a constant.
    \end{compactitem}
\end{restatable}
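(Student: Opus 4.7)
The plan is to reduce $\bodwindow^{\fpclass}$ to $\window^{\ogclass}$ by fully grounding the object variables of the input query with constants from the fixed object domain $O$, and then invoke the $\pspace$ bound that Lemma~\ref{lemma:semipropositional-containment-upper-bound} (via Theorem~\ref{th:window-containment-interreducible-prev}) gives for $\window^{\ogclass}$. More precisely, given an input $(Q,w)$ with $Q\in\fpclass$, I would build the query $Q'$ whose program $\Pi_{Q'}$ is obtained by replacing each rule $r\in\Pi_Q$ with the set of all rules obtained by substituting the object variables of $r$ with elements of $O$ in all possible ways. Since every input dataset in $\mathcal{O}$ mentions only objects from $O$, no rule instance with an object outside $O$ could ever fire, so $\Pi_{Q}\cup D$ and $\Pi_{Q'}\cup D$ entail the same facts for every $D\in\mathcal{O}$; consequently $Q$ and $Q'$ agree on all such datasets, which in turn implies $Q\sqsubseteq_\mathcal{O} Q^w$ iff $Q'\sqsubseteq_\mathcal{O} (Q')^w$. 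Combined with Corollary~\ref{cor:valid-window-correctness}, this means $w$ is a valid window for $Q$ over $\mathcal{O}$ iff $w$ is a valid window for $Q'$ over $\mathcal{O}$, so the reduction is sound.

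For the size analysis, let $k$ denote the maximum number of object variables in any rule of $\Pi_Q$. The grounded program $\Pi_{Q'}$ has at most $|\Pi_Q|\cdot|O|^k$ rules, each of size bounded by $|\Pi_Q|$, so $|Q'|$ is bounded by $|Q|\cdot|O|^k$. For the first bullet, $k$ can be as large as $|Q|$, giving $|Q'|$ exponential in $|Q|$; applying the $\pspace$ procedure of Lemma~\ref{lemma:semipropositional-containment-upper-bound} to $Q'$ then runs in space polynomial in $|Q'|$, that is, in space exponential in $|Q|$, yielding the $\expspace$ bound for $\bodwindow^{\fpclass}$. For the second bullet, if $k$ is bounded by a constant then $|Q'|$ is polynomial in $|Q|+|O|$, and running the same $\pspace$ procedure on $Q'$ stays within polynomial space in the original input, establishing the $\pspace$ bound.

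One detail I would handle carefully is that the procedure underlying Lemma~\ref{lemma:semipropositional-containment-upper-bound} must be executed without materialising $Q'$ explicitly, since $Q'$ can be exponentially larger than $Q$ in the first case. The $\pspace$ bound is robust to this: each step of that procedure only needs to (i)~enumerate guessed facts over the signature of $Q'$, which are unary temporal over $O$ and can be represented in polynomial space relative to $|Q'|$, and (ii)~perform a single-time-point entailment check against $\Pi_{Q'}\cup M_i$. The entailment check can be carried out on demand by iterating over the rules of $\Pi_Q$ and over assignments of their object variables to elements of $O$ rather than by precomputing $\Pi_{Q'}$; each such assignment is representable in space linear in $k\cdot\log|O|$, so the whole simulation fits the required space budget in both cases.

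The main obstacle I anticipate is verifying the correctness of the grounding step, specifically that the equivalence $Q\sqsubseteq_\mathcal{O} Q^w\iff Q'\sqsubseteq_\mathcal{O}(Q')^w$ holds and, critically, that $(Q')^w$ really is the grounding of $Q^w$ (so that truncating by radius and grounding commute). Since the radius of a rule is preserved by object grounding, this commutation follows directly from the definitions, but I would state it as a small auxiliary observation. The rest is bookkeeping on how the $\pspace$ algorithm of Lemma~\ref{lemma:semipropositional-containment-upper-bound} can be executed lazily over $\Pi_Q$ and~$O$ without ever writing $\Pi_{Q'}$ down in full.
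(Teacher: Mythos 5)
Your proposal follows essentially the same route as the paper: ground the object variables of $Q$, observe that the grounding is exponential in the maximum number of object variables per rule (hence polynomial when that number is bounded by a constant), and then run the \pspace{} procedure underlying Lemma~\ref{lemma:semipropositional-containment-upper-bound} on the resulting object-ground query, which yields \expspace{} and \pspace{} for the two bullets respectively. One small correction: the grounding must be over $O \cup O^Q$, where $O^Q$ is the set of object constants occurring in $\Pi_Q$, not over $O$ alone --- an object variable may need to be instantiated by a constant that appears only in the rules and is propagated into IDB atoms (as with $\mathit{max}$ in Example~\ref{ex:running}), so grounding only over $O$ can lose entailments and break the equivalence $Q\sqsubseteq_\mathcal{O} Q^w\iff Q'\sqsubseteq_\mathcal{O}(Q')^w$; since $|O^Q|\leq|Q|$, this does not affect your size analysis.
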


By exploiting results from our prior work \cite{ronca2017stream}, we can show that
$\window^{\ognrclass}$ reduces
to query containment over non-recursive plain propositional Datalog. The latter 
can be decided in $\conptime$ by universally guessing a set of propositional symbols $D$ and then
 checking (in polynomial time) that $Q_2(D)$ holds whenever $Q_1(D)$ does, which yields a $\conptime$
bound for $\window^{\ognrclass}$ .
In turn, this bound yields a $\conexptime$ upper bound for $\bodwindow^{\nrclass}$ by means of an exponential grounding step of the object variables.
 Furthermore, such grounding is polynomial for any class $\qclass \subseteq \nrclass$  where 
 the maximum number of object variables in any rule is bounded by a constant; hence, the $\conptime$ upper bound for $\ognrclass$ seamlessly extends to any such class.

\begin{restatable}{theorem}{thmupperboundsnr}
\label{thm:upper-bounds-nr}
The following upper bounds hold:
    \begin{compactitem}
    \item[--] $\bodwindow^{\nrclass}$ is in \conexptime{}; and
    \item[--] $\bodwindow^{\qclass}$ is in \conptime{} for any class $\qclass \subseteq \nrclass$
    where the maximum number of object variables
    in any rule of any $Q \in \qclass$ is bounded by a constant.
    \end{compactitem}
\end{restatable}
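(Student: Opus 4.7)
The plan is to cascade three reductions: first from window validity to containment, then to containment over a much simpler propositional language, and finally to a coNP guess-and-check.

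First, by Theorem \ref{th:window-containment-interreducible-prev}, $\bodwindow^{\ognrclass}$ is logspace-reducible to $\bodcontainment^{\ognrclass}$; therefore it suffices to pin down the complexity of containment on object-ground non-recursive fp-queries. Second, since queries in $\ognrclass$ have no object variables and are non-recursive, a standard time-grounding argument (the one already used implicitly in \cite{ronca2017stream}) collapses such a query to a non-recursive propositional Datalog program: pick a fresh propositional symbol $P_\tau$ for each predicate $P$ and each relevant offset $\tau$ in the dependency DAG of the query, and rewrite each rule by instantiating its time variable to zero and reading every offset $k$ as the subscript; because the query is non-recursive, the number of relevant offsets is polynomially bounded by the radius times the depth of the dependency DAG, so this transformation is polynomial. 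Third, containment between non-recursive propositional Datalog queries $Q_1'$ and $Q_2'$ is in \conptime{}: to refute $Q_1' \sqsubseteq Q_2'$, universally guess an assignment (equivalently, a propositional dataset) $D'$ over the EDB symbols of $Q_1' \cup Q_2'$ and check in polynomial time that $Q_1'(D') \not\subseteq Q_2'(D')$ by evaluating both non-recursive propositional programs bottom-up. This yields the \conptime{} bound for $\window^{\ognrclass}$.

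To lift this to the two bullets of the theorem, I fix the object domain $O$ and ground the object variables of the input query $Q \in \qclass$ against $O$, obtaining an object-ground fp-query $Q^{\mathrm{og}}$; since neither recursion nor non-recursion is affected by grounding, $Q^{\mathrm{og}} \in \ognrclass$ whenever $Q \in \nrclass$, and crucially $Q$ and $Q^{\mathrm{og}}$ are semantically equivalent over datasets that mention only objects from $O$, so $w$ is valid for $Q$ w.r.t.\ $\mathcal{O}$ iff it is valid for $Q^{\mathrm{og}}$ w.r.t.\ $\mathcal{O}$. For $\qclass = \nrclass$ the grounding is exponential in the number of object variables per rule but polynomial in $|O|$, so composing with the \conptime{} procedure yields the \conexptime{} bound. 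For any $\qclass \subseteq \nrclass$ in which the maximum number of object variables per rule is bounded by a fixed constant $c$, each rule produces at most $|O|^c$ ground instances, so the grounding is polynomial; composed with the \conptime{} procedure this gives a \conptime{} bound for $\bodwindow^{\qclass}$.

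The main obstacle I expect is the second step: showing that the time-grounding of an object-ground non-recursive fp-query really does yield an equivalent (for containment purposes) polynomial-size non-recursive propositional program. The subtle point is that the relevant time points are not a priori bounded by the size of $Q$ alone, since datasets can be arbitrarily long; however, non-recursion together with the fp-restriction means that the derivation of any $P_Q$-fact at time $\tau$ only depends on a window whose width is bounded by the radius multiplied by the length of the longest dependency chain, which is polynomial in $|Q|$. Translating this windowed dependence faithfully into a polynomial set of propositional rules, in a manner that preserves containment over all datasets (not just short ones), is the delicate part, and it is exactly the construction already developed in our prior work \cite{ronca2017stream} that we invoke here.
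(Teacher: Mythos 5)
Your proposal follows essentially the same route as the paper: reduce window validity to containment via Theorem~\ref{th:window-containment-interreducible-prev}, collapse object-ground non-recursive queries to non-recursive propositional Datalog using the construction from \cite{ronca2017stream}, decide propositional containment by a \conptime{} guess-and-check over subsets of the EDB symbols, and then lift to $\bodwindow^{\nrclass}$ by an object-grounding step that is exponential in general and polynomial when the number of object variables per rule is bounded by a constant. This matches the paper's Lemma~\ref{lemma:ognr-containment-upper} and the proof of Theorem~\ref{thm:upper-bounds-nr}, so the argument is correct.
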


%%%
\subsection{Lower Bounds} \label{sec:lower}
%%%

We next show that all the upper bounds established in Section \ref{sec:upper} are tight. 
We start by providing 
a matching $\pspace$ lower bound to $\window^{\ogclass}$.
\begin{restatable}{theorem}{semipropositionalcontainmentlowerbound}
\label{thm:semipropositional-containment-lower-bound}
    $\window^{\ogclass}$  is \pspacehard{}.
\end{restatable}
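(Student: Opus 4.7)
The plan is to use the interreducibility of $\window^{\ogclass}$ and $\containment^{\ogclass}$ (Theorem~\ref{th:window-containment-interreducible-prev}), so it suffices to establish that $\containment^{\ogclass}$ is \pspacehard{}. I will do this by a \logspace{} reduction from NFA universality, a canonical \pspace-complete problem: given an NFA $A = (Q, \Sigma, \delta, q_0, F)$ with $\Sigma = \{a_1, \dots, a_k\}$, I construct object-ground fp-queries $Q_1$ and $Q_2$ sharing a unary temporal output predicate $G$ such that $Q_1 \sqsubseteq Q_2$ iff $L(A) = \Sigma^*$.

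Concretely, I introduce unary temporal EDB predicates $\mathsf{init}$, $\mathsf{end}$, and one $a_i$ per alphabet symbol. For $Q_2$ I use one unary temporal IDB predicate per state of $A$ together with the rules $\mathsf{init}(t) \to q_0(t)$, one rule $q(t) \land a(t+1) \to q'(t+1)$ per transition $(q,a,q') \in \delta$, and $f(t) \land \mathsf{end}(t) \to G(t)$ for each $f \in F$. For $Q_1$ I use the analogous rules simulating the single-state universal NFA, i.e.\ a single IDB predicate $s$ with $\mathsf{init}(t) \to s(t)$, $s(t) \land a_i(t+1) \to s(t+1)$ for each $a_i$, and $s(t) \land \mathsf{end}(t) \to G(t)$. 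All rules use a single time variable in the head, no time points, have radius at most one, and contain no object variables, so both queries lie in \ogclass{}. The reduction is computable in \logspace{}.

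For correctness, the backward direction is straightforward: for any $w = a_{i_1}\dots a_{i_m} \in \Sigma^*$, the canonical dataset $D_w = \{\mathsf{init}(0), a_{i_1}(1), \dots, a_{i_m}(m), \mathsf{end}(m)\}$ yields $G(m) \in Q_1(D_w)$; hence if $Q_1 \sqsubseteq Q_2$ then $G(m) \in Q_2(D_w)$, which by construction means $A$ accepts $w$. The main subtlety lies in the forward direction, because datasets are arbitrary and Datalog is monotone, so I cannot syntactically restrict to single-word encodings (a dataset may contain several $a_i$-facts or several $\mathsf{init}$-facts at the same time point). The key observation is that, by construction, $G(\tau) \in Q_i(D)$ holds iff there exist $\tau_0 \leq \tau$, $\mathsf{init}(\tau_0), \mathsf{end}(\tau) \in D$, and a choice of facts $a_{j_\ell}(\tau_0+\ell) \in D$ for $1 \le \ell \le \tau - \tau_0$ such that the corresponding word is accepted by the underlying automaton; since the universal NFA accepts every word and $A$ is assumed universal, any choice witnessing $G(\tau) \in Q_1(D)$ also witnesses $G(\tau) \in Q_2(D)$, establishing $Q_1 \sqsubseteq Q_2$. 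This choice-extraction argument is the part I expect to require the most care when writing the full proof.
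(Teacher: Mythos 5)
Your proposal is correct, but it takes a genuinely different route from the paper. The paper proves \pspace{}-hardness of $\containment^{\ogclass}$ by reduction from containment of regular expressions, building $\Pi_{R_i}$ inductively over the structure of $R_i$ (with separate cases for $\emptyset$, $\sigma$, $\varepsilon$, union, concatenation, and Kleene plus) and establishing the word-encoding invariant $(\star)$ by structural induction. You instead reduce from NFA universality, simulating the transition relation of $A$ directly with one IDB predicate per state in $Q_2$ and using a trivial one-state universal automaton for $Q_1$; your characterisation of when $G(\tau)$ is derivable (existence of an $\mathsf{init}$/$\mathsf{end}$ pair and a per-time-point choice of symbol facts spelling an accepted word) plays the role of $(\star)$ and is provable by a routine induction on derivations, correctly handling the monotonicity issue you flag (extra facts only add witnesses, and the claim is existential over witnesses). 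Your construction is arguably more direct for this theorem, avoiding the case analysis over regular-expression operators. What the paper's choice buys is reuse: the same inductive construction, extended with one case for the exponentiation operator $R^k$ of succinct regular expressions, yields the \expspace{} lower bound for $\bodwindow^{\fpclass}$ in Theorem~\ref{thm:bodcontainment-expspacehard}, and the appendix in fact derives the \pspace{} bound by restricting that single construction. An NFA-based reduction would not scale to the succinct setting without substantial modification, so if you continued to the \expspace{} result you would need a second, separate construction. Two minor points worth making explicit in a full write-up: your transition rules $q(t)\land a(t+1)\to q'(t+1)$ have a single time variable occurring in the head and radius one, so both queries are indeed object-ground fp-queries; and the empty word is handled correctly by the dataset $\{\mathsf{init}(0),\mathsf{end}(0)\}$.
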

\begin{proof}[\proofidea]
    We show hardness for $\containment^{\ogclass}$, which implies the theorem's statement by
    Theorem \ref{th:window-containment-interreducible-prev}.
    The proof is by reduction from the
    containment problem for regular expressions. 
    Let $R_1$
    and $R_2$ be regular expressions over a common finite alphabet $\Sigma$.
    We construct object-free queries $Q_1$ and $Q_2$ with unary output temporal predicate $G$
    such that $R_1 \sqsubseteq R_2$ 
    if and only if $Q_1 \sqsubseteq Q_2$. 
    
    Each $Q_i$ is defined such that
    it captures 
    $R_i$ as described next.
    We encode  words in $\Sigma^*$ using facts over 
    unary temporal EDB predicates $F$ and $A_{\sigma}$
    for each alphabet symbol 
    $\sigma \in \Sigma$. Intuitively, a fact $F(\tau)$ indicates that
    $\tau$ is the first
    position of the word, whereas a fact $A_{\sigma}(\tau')$ with $\tau' \geq \tau$ means 
    that $\sigma$ is the symbol in position $\tau'-\tau$.
   Queries $Q_i$  are constructed from $R_i$ such that the following property $(\star)$ holds for each dataset
   $D$ over the aforementioned EDB predicates and each time point $\tau$:
   \begin{itemize}
   \item[ ] $(\star)$:   $G(\tau) \in Q_i(D)$ if and only if 
    there exists a word $\sigma_1 \ldots \sigma_n$ in the language of $R_i$  such that $D$ contains facts $F(\tau-n), A_{\sigma_1}(\tau-n), A_{\sigma_2}(\tau-n+1), \dots, 
    A_{\sigma_n}(\tau-1)$.
   \end{itemize}
   Property $(\star)$ implies the statement of the theorem.
   On the one hand, if $Q_1 \not\sqsubseteq Q_2$, then $G(\tau) \in Q_1(D)$ and $G(\tau) \not\in Q_2(D)$ for some $\tau$ and $D$; 
   by $(\star)$, the former implies existence of a word $s$ in $\mathcal{L}(R_1)$ such that $D$ contains the relevant facts, whereas the latter 
   together with the aforementioned property of $D$ implies that $s \not\in \mathcal{L}(R_2)$.
   On the other hand, $R_1 \not\sqsubseteq R_2$ implies that there exists $s = \sigma_1 \ldots \sigma_n$ with $s \in \mathcal{L}(R_1)$ and
   $s \not\in \mathcal{L}(R_2)$; let $D_s$ be the dataset consisting of facts 
   $$F(0), A_{\sigma_1}(0), A_{\sigma_2}(1), \dots, 
    A_{\sigma_n}(n-1)$$
 By $(\star)$, we then have $G(n) \in Q_1(D_s)$ and $G(n) \not\in Q_2(D_s)$, and hence $Q_1 \not\sqsubseteq Q_2$.
%     We now give the reduction.
%     Let $G$ be a fresh unary temporal IDB predicate,
%     let $B$ be a fresh unary temporal EDB predicate,
%     and let $\alphabetpreds$ be a set consisting of a fresh unary temporal EDB
%     predicate $A_{\sigma}$ for each $\sigma \in \Sigma$.
%     Let $i \in \{ 1,2 \}$---intuitively, $i$ refers to one of the REs.
%     Then, query $Q_i$ is $\langle G, \Pi_{R_i} \rangle$, where $\Pi_{R_i}$ is defined
%     next by considering separately the cases in the inductive definition of RE.
%     We have three base cases where we can build $\Pi_{R_i}$ from scratch,
%     and three inductive cases where we first need to assume that the programs
%     for the subexpression of $R_i$ are given.
    % We have three base cases where we build $\Pi_{R_i}$ from
    % scratch, and three inductive cases where we first assume that the programs 
    % for the subexpressions of $R_i$ are given.

   We now define $Q_i = \langle G, \Pi_{R_i} \rangle$, 
   where $\Pi_{R_i}$ is defined inductively from $R_i$ as described next;
   note that, for $\Pi$ a program,
   we denote with $\Pi'$ (resp., $\Pi''$) the program obtained from $\Pi$ 
   by renaming each predicate $P$ not in $\{A_{\sigma}\mid \sigma \in \Sigma\}$ 
   to a globally fresh predicate $P'$ ($P''$) of the same arity.
   
   \begin{compactenum}[1.]
   \item   $R_i = \emptyset$.
    Then, $\Pi_{R_i}$ is the empty program.
   \item $R_i = \sigma$ for $\sigma \in \Sigma$.
    Then, $\Pi_{R_i}$ consists of rule
    \begin{equation*}
        F(t) \land A_{\sigma}(t) \to G(t+1).
    \end{equation*}
   \item   $R_i = \varepsilon$.
    Then, $\Pi_{R_i}$ consists of rule
    \begin{equation*}
        F(t) \to G(t).
    \end{equation*}
   \item  $R_i = S \cup T$.
    Then, $\Pi_{R_i}$ extends
    $\Pi_{S}' \cup \Pi_{T}''$ with rules
    \begin{align*}
        F(t) &\to F'(t) &  F(t) &\to F''(t)  \\
        G'(t) &\to G(t) & G''(t) &\to G(t).
    \end{align*}
   \item  $R_i = S \circ T$. Then,  
    $\Pi_{R_i}$ extends  $\Pi_S' \cup \Pi_T''$ with rules
    \begin{align*}
        F(t) &\to F'(t), & G'(t) &\to F''(t), & G''(t) &\to G(t).
    \end{align*}
    \item 
       $R_i = S^+$.
    Then, $\Pi_{R_i}$ extends $\Pi_S'$ with rules
    \begin{align*}
        F(t) &\to F'(t), & G'(t) &\to F'(t), &  G'(t) &\to G(t). 
    \end{align*}
   \end{compactenum}
  
   It can be checked using a simple induction that the construction ensures that $(\star)$ holds.
\end{proof}

Theorem \ref{thm:semipropositional-containment-lower-bound} implies
$\pspace$-hardness of $\bodwindow^{\qclass}$ for any class $\qclass$ of fp-queries 
 where the maximum number of object variables is bounded by a constant.
 
 We next show a matching $\expspace$ lower bound to the complexity of
 $\bodwindow^{\fpclass}$.
To this end, we upgrade the reduction in Theorem~\ref{thm:semipropositional-containment-lower-bound} 
to a reduction from the containment problem of succinct regular expressions---regular expression extended with an exponentiation
operation $R^k$ where $k$ is coded in binary \cite{sipser}.

\begin{restatable}{theorem}{bodcontainmentexpspacehard}
    \label{thm:bodcontainment-expspacehard}
    $\bodwindow^{\fpclass}$ is \expspacehard{}.
\end{restatable}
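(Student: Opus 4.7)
The plan is to extend the reduction of Theorem~\ref{thm:semipropositional-containment-lower-bound} from regular expression containment to containment of \emph{succinct} regular expressions, i.e.\ regular expressions augmented with an exponentiation operator $R^k$ where $k$ is written in binary; this problem is $\expspace$-complete \cite{sipser}. By Theorem~\ref{th:window-containment-interreducible-prev} it suffices to reduce to $\bodcontainment^{\fpclass}$, so I fix the object domain $O = \{0, 1\}$. For every operator already treated in Theorem~\ref{thm:semipropositional-containment-lower-bound} I reuse the existing construction verbatim; the resulting rules are object-free and therefore remain fp-queries over $O$.

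The new gadget handles a subexpression $R^k$ by exploiting object variables to encode a binary counter, thus avoiding the exponential blow-up that a naive $k$-fold unfolding would produce. Set $n := \lceil \log_2(k+1) \rceil$; I introduce $n$ fresh object variables $x_1, \ldots, x_n$ that serve as the counter bits and lift every predicate of the inductively constructed $\Pi_R$ to carry $x_1, \ldots, x_n$ as additional object arguments, so that a single copy of the lifted $\Pi_R$ encodes, once grounded over $\{0, 1\}$, all $2^n$ indexed instances of $R$. I then append three groups of rules, all with a single time variable and radius zero: (i) initialisation $F(t) \to F_R(0, \ldots, 0, t)$; (ii) $n$ increment rules that, upon deriving $G_R(\vec{c}, t)$, start the next match via $F_R(\vec{c}{+}1, t)$ (each rule handles one bit-flip pattern of the binary successor, using constants from $\{0, 1\}$ for the fixed bits and object variables for the preserved higher bits); and (iii) termination $G_R(\overline{k}, t) \to G(t)$, where $\overline{k}$ is the binary encoding of $k$. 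The reduction is logspace in the size of the succinct regex, and the constructed programs remain in $\fpclass$.

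Correctness follows by establishing the analogue of property $(\star)$ of Theorem~\ref{thm:semipropositional-containment-lower-bound} by induction on the structure of the succinct regex. For the case $R^k$ the inductive claim to prove is that $G_R(\vec{c}, \tau)$ is derived iff there exist time points $\tau_0 < \tau_1 < \cdots < \tau_c = \tau$ and words $w_1, \ldots, w_c \in \mathcal{L}(R)$ with $w_i$ encoded at positions $[\tau_{i-1}, \tau_i - 1]$ and $F(\tau_0)$ in the dataset; combined with the termination rule this yields $(\star)$ for $R^k$, and in turn $R_1 \sqsubseteq R_2 \iff Q_1 \sqsubseteq_{\mathcal{O}} Q_2$ exactly as in Theorem~\ref{thm:semipropositional-containment-lower-bound}.

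The main obstacle will be the correctness argument for the $R^k$ gadget. Because the counter is threaded through object arguments rather than through time, two runs of $R^k$ triggered by distinct $F$-facts share counter values and could \emph{a priori} interact through the lifted $\Pi_R$, potentially splicing the prefix of one match to the suffix of another. The resolution is to exploit the single-time-variable restriction of fp-rules: every step of the lifted $\Pi_R$ propagates facts whose time arguments are determined by one variable $t$ (up to fixed offsets), so any derivation of $G_R(\vec{c}, \tau)$ corresponds to a contiguous word ending at $\tau$, and the increment rule launches the next match precisely at the end of the previous one. A careful unfolding of the induction confirms that no cross-run gluing is possible, closing the reduction.
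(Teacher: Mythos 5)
Your proposal follows essentially the same route as the paper: a reduction from containment of succinct regular expressions, reusing the construction of Theorem~\ref{thm:semipropositional-containment-lower-bound} for the standard operators and handling $R^k$ by lifting the predicates of the subprogram for $R$ with $\lceil\log_2 k\rceil$ extra object arguments implementing a binary counter over two objects; the paper merely factors your per-bit increment rules through an auxiliary rigid $\mathit{succ}^m$ predicate defined by a small Datalog subprogram, and places the two counter objects in the query only, so the witness datasets stay object-free and the bound holds for an arbitrary (even empty) fixed domain $O$ rather than for a chosen $O=\{0,1\}$. Two bookkeeping points to fix in the write-up: with initialisation at $\vec{0}$ and an increment fired on each completed match, the accepting counter value must be the encoding of $k-1$ (as in the paper), not of $k$; correspondingly, your invariant for $G_R(\vec{c},\tau)$ should say that $c+1$ matches have been completed to be consistent with your rules.
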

\begin{proof}[\proofidea]
   We show hardness of the corresponding query containment problem, which implies the statement 
   by Theorem \ref{th:window-containment-interreducible-prev}. Let $R_1$ and $R_2$ be succinct regular 
   expressions over the same vocabulary $\Sigma$. We construct 
   fp-queries $Q_1$ and $Q_2$ over the same unary temporal output predicate $G$ such that $R_1 \sqsubseteq R_2$ 
   if and only if $Q_1 \sqsubseteq Q_2$. 
   
   As in the proof of Theorem \ref{thm:semipropositional-containment-lower-bound}, we construct $Q_i$ such that it captures
   $R_i$. We encode words as before using unary temporal EDB predicates $F$ and $A_{\sigma}$ for each $\sigma \in \Sigma$.
   Also as before, we construct $Q_i$ from $R_i$ such that property $(\star)$ holds where $D$ in the formulation of $(\star)$ is over
   objects in $O$.
   
   We now define $Q_i = \langle G, \Pi_\mathrm{succ} \cup \Pi_{R_i} \rangle$, where
   $\Pi_{R_i}$ will be defined inductively over the structure of $R_i$, and
   $\Pi_\mathrm{succ}$ is a Datalog program that defines in the standard way
   \cite{dantsin2001complexity} rigid IDB successor predicates
   $\mathit{succ}^m$ %$succ^m(\mathbf{i}, \mathbf{j})$
   of arity $2m$ relating $m$-strings over objects $\bar{0}$ and $\bar{1}$ for
   each exponent $k$ occurring in $R_i$ with $m = \lceil \log_2 k \rceil$.  Now
   we proceed with the inductive definition of $\Pi_{R_i}$, which is analogous
   to that in the proof of Theorem
   \ref{thm:semipropositional-containment-lower-bound} with the following
   additional case, and the minor modification that successor predicates are
   never renamed apart:
   %
   % NOTE: the initial number of the following list is hardcoded. The list
   % should resume the list in the PSpace-hardness proof.
   \begin{compactenum}[1.] \setcounter{enumi}{6}
   \item $R_i = S^k$ for some succinct regular expression $S$ and $k \geq 2$.
       Then, $\Pi_{R_i}$ is
     constructed from $\Pi_S$ as follows. First, we replace each $n$-ary atom
     $P(\mathbf{p},s)$, for $\mathbf{p}$ a vector of object terms and $s$ a 
     temporal term, with $P'(\vec{p}, \vec{x}, s)$ for $P'$ a fresh
     predicate (unique to $P$) of arity $n+m$ with
     $m = \lceil \log_2 k \rceil$, and $\vec{x}$ a fixed $m$-vector of fresh
     object variables. Second, we extend the resulting program with the
     following rules, where $\vec{a}$ is the encoding of $k-1$ as a binary
     string over $\bar{0}$ and $\bar{1}$:
    \begin{align*}
        \textstyle
        F(t) &\to F'(\vec{\bar{0}}, t) \\
        \textstyle
        G'(\vec{a}, t) &\to G(t) \\
        \textstyle
        G'(\vec{x}, t) \land \mathit{succ}^m(\vec{x}, \vec{y})  
            &\to F'(\vec{y}, t)
    \end{align*}
   \end{compactenum}
   We can show inductively that $(\star)$ holds.
%
%     Let $\bar{0}$ and $\bar{1}$ be two fresh objects, intuitively standing
%     for zero and one respectively.
%     It is well-known that, for each $m > 0$, there exist a Datalog program 
%     $\Pi_\mathrm{succ}^m$ and a rigid IDB predicate 
%     $\mathit{succ}^m$ of arity $2 \cdot m$ such that 
%     $\Pi_\mathrm{succ}^m \models \mathit{succ}^m(\vec{\bar{i}}, \vec{\bar{j}})$ 
%     iff 
%     $\vec{\bar{i}}$ and $\vec{\bar{j}}$ are $m$-tuples over $\bar{0}$ and 
%     $\bar{1}$ encoding two integers $i$ and $j$ with $i + 1 = j$.
%     Let $M$ be the set consisting of each $\lceil \log_2 k \rceil$ with $k$ an 
%     exponent occurring in $R_1$ or $R_2$,
%     and let $\Pi_\mathrm{succ}$ be the union of each $\Pi_\mathrm{succ}^m$ for
%     $m \in M$.
%     Then, 
%     $\varphi$ maps $I$ to $\langle Q_1, Q_2 \rangle$ where
%     $Q_1$ is $\langle G, \Pi_1 \cup \Pi_\mathrm{succ} \rangle$ and
%     $Q_2$ is $\langle G, \Pi_2 \cup \Pi_\mathrm{succ} \rangle$.
%
%     As mentioned before, we extend the definition of each $\Pi_{R_i}$
%     with the following case.
%     In order for the construction to be correct, the renaming functions $\phi$ 
%     and $\psi$---mentioned in the proof sketch of 
%     Lemma~\ref{lemma:semipropositional-containment-lower-bound}---need to be
%     adapted. In particular, together with the predicates in $\mathcal{I}$, they 
%     have to leave untouched each predicate $\mathit{succ}^m$ as well.
\end{proof}

To conclude, we turn our attention to the case of non-recursive queries.
A matching $\conptime$ lower bound to the complexity of 
$\window^{\ognrclass}$ is  obtained by a simple reduction from 
$3$-\textsc{Sat} to the complement of our problem. A matching $\conexptime$ lower bound
for $\bodwindow^{\nrclass}$ follows by a simple adaptation of the hardness proofs in \cite{benedikt2010impact}
for containment in non-recursive Datalog.
 
\begin{restatable}{theorem}{thmnrlowerbounds}
    \label{thm:nr-lower-bounds}
    $\window^{\ognrclass}$ is $\conptimehard$. 
    Furthermore, $\bodwindow^{\nrclass}$ is
    $\conexptimehard$ if $O$ has at least two objects.
\end{restatable}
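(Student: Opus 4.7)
By Theorem~\ref{th:window-containment-interreducible-prev}, both lower bounds reduce to matching bounds for the corresponding containment problems $\containment^{\ognrclass}$ and $\bodcontainment^{\nrclass}$, which I would establish instead. For the $\conptime$ bound, I would reduce 3-SAT to non-containment. Given a 3-CNF $\phi$ over variables $x_1,\dots,x_n$ and clauses $C_1,\dots,C_m$, I introduce unary temporal EDB predicates $X_i^+$ and $X_i^-$, intended to encode at each time point the truth value assigned to each $x_i$. I then build two object-ground fp-queries $Q_1$ and $Q_2$ sharing a unary temporal output predicate $G$. The program of $Q_1$ consists of the rules $X_i^+(t)\to Y_i(t)$, $X_i^-(t)\to Y_i(t)$ and $Y_1(t)\land\dots\land Y_n(t)\to G(t)$, so that $G(\tau)\in Q_1(D)$ iff every variable receives some value at time $\tau$ in $D$. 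The program of $Q_2$ consists of the rules $X_i^+(t)\land X_i^-(t)\to G(t)$ (detecting inconsistency) and, for each clause $C_j$, a rule whose body is formed by the EDB atoms corresponding to the negated literals of $C_j$ at time $t$ (detecting a violated clause). Both queries lie in $\ognrclass$ and have radius zero. A dataset with $G(\tau)\in Q_1(D)\setminus Q_2(D)$ precisely encodes a complete, consistent, satisfying assignment of $\phi$ at $\tau$, hence $Q_1\not\sqsubseteq Q_2$ iff $\phi$ is satisfiable.

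For the $\conexptime$ bound, I would adapt the $\conexptime$-hardness proof for containment in non-recursive Datalog from \cite{benedikt2010impact}. That proof encodes an exponential tiling or alternation problem using rules with polynomially many object variables, whose exponentially many ground instances furnish the hardness witnesses. To transfer it to $\bodcontainment^{\nrclass}$ under $|O|\geq 2$, I would represent each element of the original reduction's domain as a fixed-length binary tuple over the two objects of $O$, which inflates predicate arities only by a polynomial factor and is harmless in the non-recursive setting since arity is unbounded. To transfer the reduction further to the temporal setting, I would append a common time variable $t$ to every atom of every rule, producing radius-zero fp-queries in $\nrclass$ whose containment coincides with containment of the underlying non-recursive Datalog queries.

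\textbf{Main obstacle.} The main subtlety, in the second part, is to ensure that the binary-domain recoding and the temporal lift do not create spurious witnesses: facts at different time points in a single witness dataset must not interact in unintended ways, and two objects must suffice to simulate the original, larger domain without losing expressivity. I would deal with the first point by insisting that every rule body shares a single time variable (so the structure of the original reduction is reproduced independently at each time point) and by using a designated unary temporal marker that pins answers to a common time point; for the second, the arity-inflated binary encoding faithfully simulates any fixed-size element needed by the original proof, so the reduction remains polynomial and the $\conexptime$ lower bound goes through.
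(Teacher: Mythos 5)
Your proposal is correct and follows essentially the same route as the paper: both lower bounds are transferred to the corresponding containment problems via Theorem~\ref{th:window-containment-interreducible-prev}, the $\conptime$ bound comes from a direct 3-SAT reduction (your gadget places clause-violation detection in $Q_2$ where the paper places clause-satisfaction in $Q_1$, but the two variants are interchangeable), and the $\conexptime$ bound adapts the reduction of \cite{benedikt2010impact} by binary-encoding the exponential domain as tuples over the two objects of $O$ at the cost of linearly inflated arities. The only superfluous step is your temporal lift in the second part: Datalog rules are already forward-propagating by Definition~\ref{def:fp-query}, so the paper simply stays in the rigid non-recursive Datalog fragment and lets the containment-to-window reduction supply the temporal wrapper.
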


%%% Local Variables:
%%% mode: latex
%%% TeX-master: "main"
%%% End:

\section{A Sufficient Condition for Window Validity}
\label{sec:sufficient}

The assumption that the object domain can be fixed in advance
may not be reasonable in some applications.
For instance, it may be the case that sensor values cannot be naturally
discretised into suitable levels according to a threshold, or that new sensors
are continuously activated on-the-fly.

As already established, dropping the fixed domain assumption leads to
undecidability of window validity for (recursive) fp-queries.  In this section,
we propose a sufficient condition for the validity of a window that can be
checked in exponential time without additional assumptions, and which leads to
smaller window sizes compared to the radius of the query.  Our condition relies
on the notion of \emph{uniform containment} of two programs $\Pi_1$ and
$\Pi_2$~\cite{DBLP:books/mk/minker88/Sagiv88}, which is sufficient to ensure
containment of any queries $Q_1$ and $Q_2$ based on $\Pi_1$ and $\Pi_2$,
respectively.

\begin{definition}
   An extended dataset $E$ is a (possibly infinite) set of (not necessarily EDB) facts.
   Program $\Pi_1$ is \emph{uniformly contained} in program $\Pi_2$,
    written $\Pi_1 \ucontained \Pi_2$, 
    if and only if,
    for each extended dataset $E$ and each fact $\alpha$,
    it holds that 
    $\Pi_1 \cup E \models \alpha$ implies 
    $\Pi_2 \cup E \models \alpha$.
   
    A window size $w$ is \emph{uniformly valid} for an fp-query $Q$ if and only if
    $\Pi_Q \ucontained \Pi_{Q^w}$. 
\end{definition}

It is straightforward to check that, given any queries $Q_1$ and $Q_2$, it
holds that $\Pi_{Q_1} \ucontained \Pi_{Q_2}$ implies $Q_1 \sqsubseteq
Q_2$. Hence, we can establish that uniform validity is a sufficient condition
for window validity, which is more precise than the syntactic condition given by
the radius.

\begin{restatable}{proposition}{propsufficient}
    \label{prop:sufficient}
Let $Q$ be an fp-query with radius $\rho$, and let $w$ be a non-negative integer.
If $w$ is a uniformly valid window size for $Q$, then $w$ is also a valid window size for $\Sigma_Q$ and
any class $\dclass$ of datasets. Furthermore, if $w$ is the smallest uniformly valid window size for $Q$, then
$w \leq \rho$.
\end{restatable}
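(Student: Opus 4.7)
The plan is to split the proof into the two stated claims. For the first claim, the key step is a routine observation: uniform containment $\Pi_Q \ucontained \Pi_{Q^w}$ is defined over arbitrary extended datasets, so it specialises in particular to any standard dataset $D \in \dclass$ (which is a set of EDB facts and hence trivially an extended dataset). Hence, for every $\alpha \in Q(D)$ we have $\Pi_Q \cup D \models \alpha$, and uniform containment gives $\Pi_{Q^w} \cup D \models \alpha$; since $Q$ and $Q^w$ share the output predicate $P_Q$, this yields $\alpha \in Q^w(D)$. We thus obtain $Q \sqsubseteq_{\dclass} Q^w$, and Corollary~\ref{cor:valid-window-correctness} immediately gives that $w$ is a valid window size for $Q$, $\Sigma_Q$, and $\dclass$.

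For the second claim, I would simply observe that by Definition~\ref{def:fp-query}, $Q^\rho$ consists of all rules in $\Pi_Q$ whose radius is at most $\rho$; since $\rho$ is the maximum radius of any rule in $\Pi_Q$, we have $\Pi_{Q^\rho} = \Pi_Q$. Trivially then $\Pi_Q \ucontained \Pi_{Q^\rho}$ (every model of one is a model of the other), so $\rho$ is a uniformly valid window size for $Q$. Consequently, the smallest uniformly valid window size for $Q$ can be no larger than $\rho$.

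There is no real obstacle here: both directions reduce to essentially unwrapping the definitions, with the only subtlety being to notice that standard datasets are a special case of extended datasets (so uniform containment implies ordinary containment), and that the radius $\rho$ is by definition the syntactic threshold at which $Q^w$ stops growing. I would keep the write-up short and explicit about these two points, ending with the two conclusions stated in the proposition.
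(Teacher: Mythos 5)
Your proof is correct and follows essentially the same route the paper takes: uniform containment specialises to ordinary datasets to give $Q \sqsubseteq_{\dclass} Q^w$ and hence validity via Corollary~\ref{cor:valid-window-correctness}, while $\Pi_{Q^\rho} = \Pi_Q$ makes $\rho$ trivially a uniformly valid window, bounding the smallest one. Nothing is missing.
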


\begin{example} \label{ex:uniform-1}
    Consider the query $Q$ where $\Pi_Q$ consists
    of the following rules and $A$ is the only EDB predicate:
    \begin{align*}
        A(t) &\to P_Q(t) &
        A(t-1) \land A(t) &\to P_Q(t)
    \end{align*}
    Query $Q$ has radius one. We can see that $w = 0$ is a (uniform) window.
    Intuitively, this is because the first rule entails the
    second; %two rules entail the third;
    thus, $\Pi_Q$ and $\Pi_{Q^w}$ are logically (and hence also uniformly)
    equivalent.
\end{example}

% \begin{example} \label{ex:uniform-2}
%     Consider the query $Q = \langle D, \Pi \rangle$ where $\Pi$ consists
%     of the following rules:
%     \begin{align*}
%         A(t) &\to B(t) \\
%         A(t) &\to C(t) \\
%         B(t) &\to D(t) \\
%         E(t-1) \land C(t) &\to D(t)
%     \end{align*}
%     where $A$ and $E$ are EDB, and the other predicates are IDB.
%     We have that $1$ is the minimum syntactic window,
%     and $0$ is a nonuniform window, but not a uniform one.
%     Note that the last rule is `nonuniformly' redundant, 
%     but not `uniformly' redundant.
% \end{example}

It is well-known that uniform program containment amounts to checking fact entailment~\cite{DBLP:books/mk/minker88/Sagiv88}. On the one hand,
to check $\Pi_1 \ucontained \Pi_2$, it suffices to show that $\Pi_2$ entails each rule $r$ in $\Pi_1$, which can in turn be 
checked by first ``freezing''  $r$ into an extended dataset $E$ for the body and a fact $\alpha$ for the head 
and then verifying whether $\Pi_2 \cup E \models \alpha$. On the other hand, to check whether $\Pi \cup E \models \alpha$,
it suffices to check uniform containment of a single rule $r$ in $\Pi$, where $r$ is
obtained from $E$ and $\alpha$ by replacing each constant with a fresh variable in the obvious way.

\begin{restatable}{theorem}{uniformcontainment}
  \label{thm:uniform-containment}
  Let %$\mathcal{Q}$ be a class of fp-queries
  $\qclass \in \{ \fpclass, \nrclass, \ogclass, \ognrclass \}$
  and let $\mathcal{P}$ be the class of programs that occur in queries from
  $\mathcal{Q}$.  Then, uniform window validity over queries in $\mathcal{Q}$
  and fact entailment over programs in $\mathcal{P}$ are inter-reducible in
  $\logspace$.
\end{restatable}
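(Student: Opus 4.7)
The plan is to exploit the standard correspondence between uniform program
containment and rule entailment via \emph{freezing}: a program $\Pi'$ entails a
rule $r$ iff $\Pi' \cup E \models \alpha$, where $E$ and $\alpha$ arise from
$r$ by replacing each variable with a fresh constant of the appropriate sort.
Using this, I will construct $\logspace$ reductions in both directions.

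For the reduction from uniform window validity to fact entailment, I will
first note that $w$ is uniformly valid for $Q$ iff $\Pi_{Q^w}$ entails each
rule of $\Pi_Q \setminus \Pi_{Q^w}$, and that each such rule entailment
becomes a single fact entailment after freezing. To combine the polynomially
many fact entailment checks into one, I will build a single program
containing, for each candidate rule $r_i$, a disjoint renamed copy $\Pi^i$ of
$\Pi_{Q^w}$ together with the frozen body of $r_i$ as facts over the renamed
predicates, and a Datalog rule deriving a fresh nullary $\mathit{Done}_i$
from the renamed frozen head; a final Datalog rule
$\bigwedge_i \mathit{Done}_i \to \mathit{Goal}$ then aggregates the evidence.
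Renaming keeps the copies independent, and the added Datalog glue does not
violate any fp-restriction, so the combined program remains in $\mathcal{P}$
and entails $\mathit{Goal}$ iff $\Pi_Q \ucontained \Pi_{Q^w}$.

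For the converse reduction, given $\Pi \in \mathcal{P}$ and a fact $\alpha$,
I will set $\Pi_Q = \Pi \cup \{r^*\}$ and choose $w$ so that the radius of
$r^*$ exceeds $w$, giving $\Pi_{Q^w} = \Pi$. Then
$\Pi_Q \ucontained \Pi_{Q^w}$ holds iff $\Pi$ entails $r^*$, and I will
design $r^*$ so that freezing turns this into $\Pi \models \alpha$: the body
of $r^*$ will use only fresh EDB predicates that do not occur in $\Pi$, so
that its frozen body atoms are inert w.r.t.\ $\Pi$'s entailments, while the
head is chosen so that its freezing coincides with $\alpha$. Auxiliary body
atoms will simultaneously inflate the radius past $w$ and, in the temporal
case, transport the single time variable of the rule to the head with the
required offset.

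The hard part will be this final design step: arranging $r^*$ to live in the
chosen subclass $\mathcal{Q}$---respecting the fp-restrictions on time
variables and radius, and, for $\ogclass$ and $\ognrclass$, avoiding any
object variables---while still achieving radius $> w$ and a freezing that
coincides with $\Pi \models \alpha$. Separate templates for rigid and
temporal $\alpha$ will cover the two cases; in both, the construction is
local and the overall reduction is clearly in $\logspace$.
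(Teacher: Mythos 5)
Your overall strategy---Sagiv-style freezing in one direction and un-freezing a single rule in the other---is the same one the paper relies on (the paper offers only the informal paragraph preceding the theorem; there is no appendix proof). However, your backward reduction has a genuine gap. You make the body of $r^*$ consist of fresh EDB predicates that are \emph{inert} with respect to $\Pi$, so that ``$\Pi$ entails $r^*$'' collapses to ``$\Pi$ entails the frozen head''. This cannot work in either case. If $\alpha$ is temporal, note that an fp-program mentions no time points, hence contains no temporal facts, and every temporal-headed fp-rule needs a temporal body atom; so $\Pi$ alone entails no temporal fact at all, and with an inert body $\Pi$ can never entail $r^*$---your reduction maps every instance to ``no''. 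If $\alpha$ is rigid, the head of $r^*$ is rigid, and by Definition~\ref{def:fp-query} a rigid-headed rule has radius zero \emph{regardless of which body atoms you add}; so $r^* \in \Pi_{Q^w}$ for every $w \geq 0$, uniform validity holds trivially, and your reduction maps every instance to ``yes''. The missing idea is exactly the one in the paper's sketch: the fact-entailment instance is $\Pi \cup E \models \alpha$ for an extended dataset $E$ (which you have silently dropped), and $E$ must be carried in the body of $r^*$, re-expressed over a single time variable with offsets, with an anchor atom at offset $0$ and the head pushed to an offset exceeding $w$. Only then can $\Pi$ derive the (frozen) head from the (frozen) body, and translation-invariance of fp-programs makes the single frozen instance representative of all ground instances of $r^*$.

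The forward direction is closer to correct, but the glue is not fp as described: a rule whose body contains the renamed frozen head---a ground temporal atom---and whose head is a nullary $\mathit{Done}_i$ mentions both a time point and a temporal predicate, so it is neither a Datalog rule nor a legal temporal fp-rule, contradicting your claim that the glue ``does not violate any fp-restriction''. Likewise the frozen bodies contain ground temporal facts and IDB facts, which cannot appear as rules of an fp-program and must instead be placed in the extended dataset of the target fact-entailment instance. Both points are repairable (use temporal predicates $\mathit{Done}_i(t)$, realign them with non-negative offsets before conjoining into a single goal atom, and check that non-recursiveness and object-groundedness are preserved), but as written your combined program leaves $\mathcal{P}$.
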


The following complexity bounds for uniform window validity immediately follow from complexity results for fact entailment.

\begin{corollary}
Uniform window validity over a class $\mathcal{Q}$ of queries is
\begin{compactitem}
\item[--] $\exptimecomplete$ if $\mathcal{Q} = \fpclass$;
\item[--] $\pspacecomplete$ if $\mathcal{Q} = \nrclass$; 
\item[--] in $\ptime$ if $\mathcal{Q}$ is any subclass of\/ $\fpclass$
    where the maximum number of object variables in any rule of any $Q \in \qclass$ is bounded by a constant; and
\item[--] in $\aczero$ if $\mathcal{Q}$ is any subclass of\/ $\nrclass$ where
    the maximum number of object variables in any rule of any $Q \in \qclass$ is bounded by a constant.
\end{compactitem}
\end{corollary}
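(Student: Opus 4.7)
My plan is to derive each of the four bounds from Theorem~\ref{thm:uniform-containment}, which reduces uniform window validity in \logspace{} to fact entailment over the corresponding class of programs, and to then transfer known complexity bounds from non-temporal Datalog via Theorem~\ref{theorem:evaluation}. Since fact entailment and query evaluation over Datalog have essentially the same complexity (an atom-to-goal-rule reformulation), I will freely switch between the two.

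For the upper bounds, I first observe that the reduction of Theorem~\ref{theorem:evaluation}, although phrased for query evaluation over finite datasets, applies verbatim to fact entailment: to decide whether $\Pi_Q \cup E \models \alpha$, it suffices to restrict attention to the linearly many time points between the minimum one in $E$ and the one occurring in $\alpha$, and to ground the temporal arguments over them to obtain a plain Datalog program. Plugging in the standard complexity results for fact entailment over Datalog then gives \exptime{} in the $\fpclass$ case \cite{dantsin2001complexity} and \pspace{} in the $\nrclass$ case \cite{vorobyov1998complexity}. For the bounded-object-variables cases, the grounding of object variables becomes polynomial as well, so uniform window validity reduces to fact entailment over a ground (propositional) Datalog program, which is in \ptime{} for the recursive case and in $\aczero$ for the non-recursive case.

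For the matching lower bounds of the first two items, I use that ordinary Datalog is a syntactic fragment of \fpclass{} (and non-recursive Datalog of \nrclass{}), so that the \exptimehard{} and \pspacehard{} reductions for Datalog fact entailment produce instances that are already in the required class. Composing them with the \logspace{} reduction from fact entailment to uniform window validity supplied by Theorem~\ref{thm:uniform-containment} yields hardness for uniform window validity.

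The step I expect to be the most delicate is the $\aczero$ bound: a generic composition of \logspace{} reductions does not preserve $\aczero$, so I cannot simply stack Theorems~\ref{thm:uniform-containment} and~\ref{theorem:evaluation}. I would instead unfold the two reductions and argue that, under the bounded-variable restriction, the combined transformation from a uniform-window-validity instance to a ground non-recursive Datalog fact-entailment instance can be carried out by a constant-depth circuit: the grounding is fully local (each rule becomes polynomially many ground rules depending on a bounded-size local neighbourhood of constants), and the subsequent ground non-recursive entailment check is already in $\aczero$ by the data-complexity result for non-recursive Datalog.
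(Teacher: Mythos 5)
Your proposal follows essentially the same route as the paper: the corollary is obtained by composing the \logspace{} interreducibility of Theorem~\ref{thm:uniform-containment} with known complexity bounds for fact entailment over the corresponding program classes (reduced to non-temporal Datalog via the grounding argument behind Theorem~\ref{theorem:evaluation}), and by pulling hardness back through the Datalog fragment for the two completeness claims. Your additional care about \aczero{} not being closed under \logspace{} reductions goes beyond what the paper records---it simply asserts that the bounds ``immediately follow'' from fact-entailment results---but it does not change the structure of the argument.
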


We see uniform validity as a reasonable compromise in practice. On the one hand, it may yield smaller window sizes than the radius of the query, thus reducing the
amount of information that a stream reasoning algorithm needs to retain in memory; on the other hand,
it can be checked while relying solely on query processing infrastructure, and hence without the need for specialised algorithms.

%%% Local Variables:
%%% mode: latex
%%% TeX-master: "main"
%%% End:

\section{Related Work}

The formal underpinnings of stream query processing in databases
 were established in \cite{babcock2002models,arasu2006cql}.
\citeauthor{arasu2006cql}~(\citeyear{arasu2006cql})
proposed CQL as an extension of
SQL with a window construct, which specifies the input data relevant for query processing at any point in time.
CQL has become since then the core of many other stream query languages, including languages 
for the Semantic Web \cite{barbieri2009csparql,barbieri2010incremental,le2011native,le2013elastic,dell2015towards}.

In the context of stream reasoning,
\citeauthor{zaniolo2012streamlog}~(\citeyear{zaniolo2012streamlog})
proposed Streamlog: a language  which extends  temporal Datalog with non-monotonic negation while at the same time 
restricting the syntax so that only facts over time points  mentioned in the data can be derived. 
LARS \cite{beck2015lars,beck2015answer,beck2016equivalent} is a temporal rule-based stream reasoning language featuring built-in window constructs
and negation interpreted according to the stable model semantics.
In contrast to temporal Datalog, the semantics of LARS assumes that 
the number of time points 
in a model is  a  part of the input to query evaluation, and hence is restricted to be finite.
Stream reasoning has also been  
considered in  ontology-based data access \cite{calbimonte2010enabling,ozcep2014stream}
as well as in the context of 
complex event processing \cite{anicic2011ep,dao2015enriching}.

There are have been  several proposals of Datalog extensions 
for reasoning over static temporal data. 
The language we consider  is a notational variant of
Datalog$_\mathrm{1S}$
\cite{chomicki1988temporal,chomicki1989relational,chomicki1990polynomial}.
Templog is an extension of Datalog with modal temporal operators \cite{abadi1989temporal};
\datalogmtl{} is an extension with metric temporal logic \cite{DBLP:conf/aaai/BrandtKKRXZ17}; and
the language proposed by   \citeauthor{toman1998datalog}~(\citeyear{toman1998datalog}) extends Datalog
with integer periodicity constraints. 

Our language of fp-queries is related to past temporal logic, where formulae are restricted to refer to
past time points only \cite{DBLP:books/daglib/0077033,chomicki1995encoding}.
 \citeauthor{chomicki1995encoding}~(\citeyear{chomicki1995encoding})   
 presents an incremental update algorithm for checking dynamic integrity constraints expressed in past temporal logic;
 similarly to our stream reasoning algorithm, \citeauthor{chomicki1995encoding}'s update algorithm 
 exploits the idea that the length of the stored history 
 throughout a sequence of updates can be bounded to a value depending only on the query.
%In contrast to  our stream reasoning algorithm, where user rules are evaluated without modification and parametrised by a window size, the  update algorithm in  \cite{DBLP:books/daglib/0077033,chomicki1995encoding}
%maintains a view for each subformula  and materialises all such views at the current and immediately preceding time points. 
%As a consequence, they effectively use a ÒhardcodedÓ window of size one, whereas our algorithm is parametrised by a window size. Furthermore, the output of our algorithm depends critically on such window size as established in Theorem 8, and correctness checking amounts to a query containment check (see Corollary 9).

The window validity problem was introduced in our prior work \cite{ronca2017stream} based on a generic
stream reasoning algorithm that only keeps EDB facts in memory.  We established undecidability for unrestricted queries, and
provided tight complexity bounds for the non-recursive case. Our current paper extends \cite{ronca2017stream} by generalising window validity to the case
where the underpinning stream reasoning algorithm can also keep IDB facts in memory; furthermore, we show decidability and tight complexity bounds
for recursive queries under the (rather mild) assumption  that the object domain can be fixed in advance.
The window validity problem is related to a problem considered in the context of database constraint checking by 
\citeauthor{chomicki1995encoding}~\shortcite{chomicki1995encoding}, 
who obtained positive results for queries in temporal 
first-order logic. It is also related to the \emph{forgetting} problem in logic programming \cite{DBLP:conf/aaai/WangSS05,DBLP:journals/ai/EiterW08}, where
the goal is to eliminate predicates while preserving certain logical consequences.

%%% Local Variables:
%%% mode: latex
%%% TeX-master: "main"
%%% End:

\section{Conclusion and Future Work}

\newcommand{\tcasegeneral}{General}
\newcommand{\tcasebo}{Object Dom.}
\newcommand{\tcaseog}{OG}
\newcommand{\tcasefp}{FP}
\newcommand{\tcasenr}{NR}
\newcommand{\tcellundecidable}{Undec.}
\newcommand{\shortcomplete}{-c}

We have studied the window validity problem in stream reasoning and  its computational properties for temporal Datalog. We showed that 
window validity is undecidable; however, decidability can be regained by
making mild assumptions on the input data.

We see many avenues for future work.
First, it would be interesting to consider window validity for
    extensions of temporal Datalog (e.g., with comparison atoms or stratified
    negation) as well as for \datalogmtl{}. 
    %We are also planning to consider
    %fragments beyond fp-queries, and adapt our algorithms and results to also
    %capture propagation of information towards past time points.
Second, we have assumed throughout the paper that all numbers in input queries and data are coded in unary; it would be interesting to revisit our 
  technical results for the case where binary encoding is assumed instead.
Finally, our decidability results do not immediately yield implementable algorithms; we are planning to develop and implement  practical window validity checking 
algorithms under the fixed object domain assumption.

% \begin{table}
%     \centering
%     \ra{1.3}
%     \begin{tabular}{@{}r|ccc@{}}
%         Window & \tcasegeneral{} & \tcasebo{} & \tcaseog{} \\
%         \hline
%         \tcasefp{}  & \tcellundecidable{} & \expspace{} & \pspace{} \\
%         \tcasenr{}  & \conexptime{} & \conexptime{} & \conptime{} \\
%     \end{tabular}
%     \caption{Summary of the undecidability and completeness results for the 
%     window problem.}
% \end{table}
% 
% \begin{table}
%     \centering
%     \ra{1.3}
%     \begin{tabular}{@{}r|ccc@{}}
%         Sufficient  & \tcasegeneral{} & \tcasebo{} & \tcaseog{} \\
%         \hline
%         \tcasefp{}  & \exptime{} & \exptime{} & \ptime{} \\
%         \tcasenr{}  & ? & ? & \aczero{} \\
%     \end{tabular}
%     \caption{Summary of the completeness results for checking the sufficient
%     condition for window.}
% \end{table}
% 

%%% Local Variables:
%%% mode: latex
%%% TeX-master: "main"
%%% End:

\section*{Acknowledgments}

This research was supported by 
the SIRIUS Centre for Scalable Data Access in the Oil and Gas Domain
and the EPSRC projects DBOnto, MaSI$^3$, and ED$^3$.

\bibliographystyle{aaai}
\bibliography{bibliography}

\begin{thebibliography}{}

\bibitem[\protect\citeauthoryear{Abadi and Manna}{1989}]{abadi1989temporal}
Abadi, M., and Manna, Z.
\newblock 1989.
\newblock Temporal logic programming.
\newblock {\em J. Symb. Comput.} 8(3).

\bibitem[\protect\citeauthoryear{Anicic \bgroup et al\mbox.\egroup
  }{2011}]{anicic2011ep}
Anicic, D.; Fodor, P.; Rudolph, S.; and Stojanovic, N.
\newblock 2011.
\newblock {EP-SPARQL:} a unified language for event processing and stream
  reasoning.
\newblock In {\em WWW}.

\bibitem[\protect\citeauthoryear{Arasu, Babu, and Widom}{2006}]{arasu2006cql}
Arasu, A.; Babu, S.; and Widom, J.
\newblock 2006.
\newblock The {CQL} continuous query language: Semantic foundations and query
  execution.
\newblock {\em VLDB J.} 15(2).

\bibitem[\protect\citeauthoryear{Babcock \bgroup et al\mbox.\egroup
  }{2002}]{babcock2002models}
Babcock, B.; Babu, S.; Datar, M.; Motwani, R.; and Widom, J.
\newblock 2002.
\newblock Models and issues in data stream systems.
\newblock In {\em PODS}.

\bibitem[\protect\citeauthoryear{Baget \bgroup et al\mbox.\egroup
  }{2015}]{DBLP:conf/ruleml/BagetLMRS15}
Baget, J.; Lecl{\`{e}}re, M.; Mugnier, M.; Rocher, S.; and Sipieter, C.
\newblock 2015.
\newblock Graal: A toolkit for query answering with existential rules.
\newblock In {\em RuleML}.

\bibitem[\protect\citeauthoryear{Barbieri \bgroup et al\mbox.\egroup
  }{2009}]{barbieri2009csparql}
Barbieri, D.~F.; Braga, D.; Ceri, S.; Della~Valle, E.; and Grossniklaus, M.
\newblock 2009.
\newblock {C-SPARQL}: {SPARQL} for continuous querying.
\newblock In {\em WWW}.

\bibitem[\protect\citeauthoryear{Barbieri \bgroup et al\mbox.\egroup
  }{2010}]{barbieri2010incremental}
Barbieri, D.~F.; Braga, D.; Ceri, S.; Della~Valle, E.; and Grossniklaus, M.
\newblock 2010.
\newblock Incremental reasoning on streams and rich background knowledge.
\newblock In {\em ESWC}.

\bibitem[\protect\citeauthoryear{Baudinet, Chomicki, and
  Wolper}{1993}]{baudinet1992temporal}
Baudinet, M.; Chomicki, J.; and Wolper, P.
\newblock 1993.
\newblock Temporal deductive databases.
\newblock In Tansel, A.~U.; Clifford, J.; Gadia, S.; Jajodia, S.; Segev, A.;
  and Snodgrass, R., eds., {\em Temporal Databases}. Benjamin Cummings.

\bibitem[\protect\citeauthoryear{Beck \bgroup et al\mbox.\egroup
  }{2015}]{beck2015lars}
Beck, H.; Dao{-}Tran, M.; Eiter, T.; and Fink, M.
\newblock 2015.
\newblock {LARS}: A logic-based framework for analyzing reasoning over streams.
\newblock In {\em AAAI}.

\bibitem[\protect\citeauthoryear{Beck, Dao{-}Tran, and
  Eiter}{2015}]{beck2015answer}
Beck, H.; Dao{-}Tran, M.; and Eiter, T.
\newblock 2015.
\newblock Answer update for rule-based stream reasoning.
\newblock In {\em IJCAI}.

\bibitem[\protect\citeauthoryear{Beck, Dao{-}Tran, and
  Eiter}{2016}]{beck2016equivalent}
Beck, H.; Dao{-}Tran, M.; and Eiter, T.
\newblock 2016.
\newblock Equivalent stream reasoning programs.
\newblock In {\em IJCAI}.

\bibitem[\protect\citeauthoryear{Benedikt and
  Gottlob}{2010}]{benedikt2010impact}
Benedikt, M., and Gottlob, G.
\newblock 2010.
\newblock The impact of virtual views on containment.
\newblock {\em PVLDB} 3(1-2).

\bibitem[\protect\citeauthoryear{Brandt \bgroup et al\mbox.\egroup
  }{2017}]{DBLP:conf/aaai/BrandtKKRXZ17}
Brandt, S.; Kalayci, E.~G.; Kontchakov, R.; Ryzhikov, V.; Xiao, G.; and
  Zakharyaschev, M.
\newblock 2017.
\newblock Ontology-based data access with a {H}orn fragment of metric temporal
  logic.
\newblock In {\em AAAI}.

\bibitem[\protect\citeauthoryear{Calbimonte, Corcho, and
  Gray}{2010}]{calbimonte2010enabling}
Calbimonte, J.; Corcho, O.; and Gray, A.~J.
\newblock 2010.
\newblock Enabling ontology-based access to streaming data sources.
\newblock In {\em ISWC}.

\bibitem[\protect\citeauthoryear{Chomicki and
  Imieli\'{n}ski}{1988}]{chomicki1988temporal}
Chomicki, J., and Imieli\'{n}ski, T.
\newblock 1988.
\newblock Temporal deductive databases and infinite objects.
\newblock In {\em PODS}.

\bibitem[\protect\citeauthoryear{Chomicki and
  Imieli\'{n}ski}{1989}]{chomicki1989relational}
Chomicki, J., and Imieli\'{n}ski, T.
\newblock 1989.
\newblock Relational specifications of infinite query answers.
\newblock In {\em SIGMOD}.

\bibitem[\protect\citeauthoryear{Chomicki}{1990}]{chomicki1990polynomial}
Chomicki, J.
\newblock 1990.
\newblock Polynomial time query processing in temporal deductive databases.
\newblock In {\em PODS}.

\bibitem[\protect\citeauthoryear{Chomicki}{1995}]{chomicki1995encoding}
Chomicki, J.
\newblock 1995.
\newblock Efficient checking of temporal integrity constraints using bounded
  history encoding.
\newblock {\em ACM Trans. Database Syst.} 20(2).

\bibitem[\protect\citeauthoryear{Cosad \bgroup et al\mbox.\egroup
  }{2009}]{cosad2009wellsite}
Cosad, C.; Dufrene, K.; Heidenreich, K.; McMillon, M.; Jermieson, A.; O'Keefe,
  M.; and Simpson, L.
\newblock 2009.
\newblock Wellsite support from afar.
\newblock {\em Oilfield Review} 21(2).

\bibitem[\protect\citeauthoryear{Dantsin \bgroup et al\mbox.\egroup
  }{2001}]{dantsin2001complexity}
Dantsin, E.; Eiter, T.; Gottlob, G.; and Voronkov, A.
\newblock 2001.
\newblock Complexity and expressive power of logic programming.
\newblock {\em ACM Comput. Surv.} 33(3).

\bibitem[\protect\citeauthoryear{Dao-Tran and
  Le{-}Phuoc}{2015}]{dao2015enriching}
Dao-Tran, M., and Le{-}Phuoc, D.
\newblock 2015.
\newblock Towards enriching {CQELS} with complex event processing and path
  navigation.
\newblock In {\em HiDeSt@KI}.

\bibitem[\protect\citeauthoryear{Dao-Tran, Beck, and
  Eiter}{2015}]{dao2015comparing}
Dao-Tran, M.; Beck, H.; and Eiter, T.
\newblock 2015.
\newblock Towards comparing {RDF} stream processing semantics.
\newblock In {\em HiDeSt@KI}.

\bibitem[\protect\citeauthoryear{Dell'Aglio \bgroup et al\mbox.\egroup
  }{2015}]{dell2015towards}
Dell'Aglio, D.; Calbimonte, J.; Della~Valle, E.; and Corcho, O.
\newblock 2015.
\newblock Towards a unified language for {RDF} stream query processing.
\newblock In {\em {ESWC} (Satellite Events)}.

\bibitem[\protect\citeauthoryear{Eiter and
  Wang}{2008}]{DBLP:journals/ai/EiterW08}
Eiter, T., and Wang, K.
\newblock 2008.
\newblock Semantic forgetting in answer set programming.
\newblock {\em Artif. Intell.} 172(14).

\bibitem[\protect\citeauthoryear{Johnson}{1990}]{johnson1990catalog}
Johnson, D.~S.
\newblock 1990.
\newblock A catalog of complexity classes.
\newblock In van Leeuwen, J., ed., {\em Handbook of Theor. Comput. Sci., Volume
  {A:} Algorithms and Complexity}. MIT Press.

\bibitem[\protect\citeauthoryear{Le{-}Phuoc \bgroup et al\mbox.\egroup
  }{2011}]{le2011native}
Le{-}Phuoc, D.; Dao-Tran, M.; Parreira, J.~X.; and Hauswirth, M.
\newblock 2011.
\newblock A native and adaptive approach for unified processing of linked
  streams and linked data.
\newblock In {\em ISWC}.

\bibitem[\protect\citeauthoryear{Le{-}Phuoc \bgroup et al\mbox.\egroup
  }{2013}]{le2013elastic}
Le{-}Phuoc, D.; Quoc, H. N.~M.; Le~Van, C.; and Hauswirth, M.
\newblock 2013.
\newblock Elastic and scalable processing of linked stream data in the cloud.
\newblock In {\em ISWC}.

\bibitem[\protect\citeauthoryear{Leone \bgroup et al\mbox.\egroup
  }{2006}]{DBLP:journals/tocl/LeonePFEGPS06}
Leone, N.; Pfeifer, G.; Faber, W.; Eiter, T.; Gottlob, G.; Perri, S.; and
  Scarcello, F.
\newblock 2006.
\newblock The {DLV} system for knowledge representation and reasoning.
\newblock {\em ACM Trans. Comput. Log.} 7(3).

\bibitem[\protect\citeauthoryear{Manna and
  Pnueli}{1992}]{DBLP:books/daglib/0077033}
Manna, Z., and Pnueli, A.
\newblock 1992.
\newblock {\em The Temporal Logic of Reactive and Concurrent Systems:
  Specification}.
\newblock Springer.

\bibitem[\protect\citeauthoryear{Motik \bgroup et al\mbox.\egroup
  }{2014}]{mnpho14parallel-materialisation-RDFox}
Motik, B.; Nenov, Y.; Piro, R.; Horrocks, I.; and Olteanu, D.
\newblock 2014.
\newblock Parallel materialisation of {D}atalog programs in centralised,
  main-memory {RDF} systems.
\newblock In {\em AAAI}.

\bibitem[\protect\citeauthoryear{Motik \bgroup et al\mbox.\egroup
  }{2015}]{mnph15incremental-BF-sameAs}
Motik, B.; Nenov, Y.; Piro, R.; and Horrocks, I.
\newblock 2015.
\newblock Combining rewriting and incremental materialisation maintenance for
  {D}atalog programs with equality.
\newblock In {\em IJCAI}.

\bibitem[\protect\citeauthoryear{M\"unz and Carle}{2007}]{munz2007real}
M\"unz, G., and Carle, G.
\newblock 2007.
\newblock Real-time analysis of flow data for network attack detection.
\newblock In {\em IM}.

\bibitem[\protect\citeauthoryear{Nuti \bgroup et al\mbox.\egroup
  }{2011}]{nuti2011algorithmic}
Nuti, G.; Mirghaemi, M.; Treleaven, P.; and Yingsaeree, C.
\newblock 2011.
\newblock Algorithmic trading.
\newblock {\em IEEE Computer} 44(11).

\bibitem[\protect\citeauthoryear{{\"O}z{\c{c}}ep, M{\"o}ller, and
  Neuenstadt}{2014}]{ozcep2014stream}
{\"O}z{\c{c}}ep, {\"O}.~L.; M{\"o}ller, R.; and Neuenstadt, C.
\newblock 2014.
\newblock A stream-temporal query language for ontology based data access.
\newblock In {\em KI}.

\bibitem[\protect\citeauthoryear{Ronca \bgroup et al\mbox.\egroup
  }{2018}]{ronca2017stream}
Ronca, A.; Kaminski, M.; Cuenca~Grau, B.; Motik, B.; and Horrocks, I.
\newblock 2018.
\newblock Stream reasoning in temporal {D}atalog.
\newblock In {\em AAAI}.

\bibitem[\protect\citeauthoryear{Sagiv}{1988}]{DBLP:books/mk/minker88/Sagiv88}
Sagiv, Y.
\newblock 1988.
\newblock Optimizing datalog programs.
\newblock In Minker, J., ed., {\em Foundations of Deductive Databases and Logic
  Programming}. Morgan Kaufmann.

\bibitem[\protect\citeauthoryear{Shmueli}{1993}]{shmueli1993equivalence}
Shmueli, O.
\newblock 1993.
\newblock Equivalence of datalog queries is undecidable.
\newblock {\em J. Log. Program.} 15(3).

\bibitem[\protect\citeauthoryear{Sipser}{2006}]{sipser}
Sipser, M.
\newblock 2006.
\newblock {\em Introduction to the Theory of Computation}.
\newblock Thomson Course Technology, 2nd edition.

\bibitem[\protect\citeauthoryear{Toman and Chomicki}{1998}]{toman1998datalog}
Toman, D., and Chomicki, J.
\newblock 1998.
\newblock Datalog with integer periodicity constraints.
\newblock {\em J. Log. Program.} 35(3).

\bibitem[\protect\citeauthoryear{Vorobyov and
  Voronkov}{1998}]{vorobyov1998complexity}
Vorobyov, S.~G., and Voronkov, A.
\newblock 1998.
\newblock Complexity of nonrecursive logic programs with complex values.
\newblock In {\em PODS}.

\bibitem[\protect\citeauthoryear{Wang, Sattar, and
  Su}{2005}]{DBLP:conf/aaai/WangSS05}
Wang, K.; Sattar, A.; and Su, K.
\newblock 2005.
\newblock A theory of forgetting in logic programming.
\newblock In {\em AAAI}.

\bibitem[\protect\citeauthoryear{Zaniolo}{2012}]{zaniolo2012streamlog}
Zaniolo, C.
\newblock 2012.
\newblock Logical foundations of continuous query languages for data streams.
\newblock In {\em Datalog 2.0}.

\end{thebibliography}

\ifextendedversion
    \appendix
    \newpage
\onecolumn

\section{Appendix}

In our proofs, we will make use of the following notion of derivations, which
is a variant of hyper-resolution derivations restricted to temporal Datalog.
\begin{definition}
    Let $\Pi$ be a program,
    let $F$ be a set of facts,
    and let $\alpha$ be a fact.
    A \emph{derivation} of $\alpha$ from $\Pi \cup F$ is a finite node-labelled
    tree such that:
    (i)~each node is labelled with a ground instance of a rule in $\Pi \cup F$;
    (ii)~fact $\alpha$ is the head of the rule labelling the root; and
    (iii)~for each node $v$, the body of the rule labelling $v$ contains an atom 
    $\alpha$ if and only if $\alpha$ is the head of the rule labelling a child 
    of $v$.
\end{definition}
By the completeness of hyper-resolution, it then follows that a temporal
Datalog program $\Pi$ entails a fact $\alpha$ from a set of facts $F$ if and
only if $\alpha$ has a derivation from $\Pi\cup F$.
\begin{proposition} \label{prop:entailment-derivations}
    Let $\Pi$ be a program,
    let $F$ be a set of facts,
    and let $\alpha$ be a fact.
    Then, $\Pi \cup F \models \alpha$ if and only if there exists a derivation 
    of $\alpha$ from $\Pi \cup F$.
\end{proposition}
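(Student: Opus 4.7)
The plan is to prove the two directions separately, relying on the fact that once rules are ground-instantiated, temporal atoms behave propositionally because the addition function is evaluated on ground time terms before matching against the Herbrand interpretation.

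For the soundness direction (existence of a derivation implies entailment), I would proceed by induction on the height of the derivation tree. In the base case, a leaf is labelled by a rule with empty body, which is itself a fact in $F$ (or a ground instance of a fact in $\Pi\cup F$); such a fact is trivially entailed by $\Pi\cup F$. For the inductive step, the root is labelled by a ground instance $\bigwedge_i \beta_i \to \alpha$ of a rule in $\Pi$, and by the induction hypothesis each $\beta_i$ is entailed by $\Pi\cup F$. Since any model $\mathcal{H}$ of $\Pi\cup F$ must satisfy the ground rule and therefore $\alpha$, we obtain $\Pi\cup F\models \alpha$.

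For the completeness direction, I would construct the least Herbrand model of $\Pi\cup F$ as the fixpoint of the immediate consequence operator $T_{\Pi\cup F}$, whose iterates are $T^0=\emptyset$ and $T^{n+1}=T^n\cup\{\head(r)\mid r \text{ is a ground instance of a rule in }\Pi\cup F \text{ and } T^n \text{ satisfies } \body(r)\}$. Standard arguments show that the fixpoint $T^\omega$ is a model of $\Pi\cup F$ and is contained in every Herbrand model, so $\Pi\cup F\models\alpha$ entails $\alpha\in T^\omega$, i.e., $\alpha\in T^n$ for some $n$. A derivation of $\alpha$ is then extracted by induction on the least such $n$: if $n=0$ the claim is vacuous; otherwise $\alpha$ is the head of some ground instance $r$ whose body atoms lie in $T^{n-1}$, and by the induction hypothesis each body atom has a derivation, which can be attached as subtrees to a root labelled by $r$.

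The main subtlety, rather than a genuine obstacle, is in the treatment of temporal terms of the form $t+k$: one must verify that a ground instance of a rule is satisfied by a Herbrand interpretation exactly when the instance obtained after evaluating all additions has its atoms in the interpretation, so that ground instances of rules can be manipulated as propositional Horn clauses. Once this is observed, the argument is the standard completeness proof of hyper-resolution for Horn logic, and the two directions combine to give the statement.
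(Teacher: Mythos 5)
Your proof is correct. Note, however, that the paper does not actually prove this proposition: it states the definition of derivations and then asserts the proposition as a direct consequence of ``the completeness of hyper-resolution,'' with no further argument. What you have written is, in effect, a self-contained proof of that cited fact specialised to temporal Datalog: your soundness direction (induction on derivation height) is the standard soundness of forward chaining, and your completeness direction via the least fixpoint of the immediate consequence operator $T_{\Pi\cup F}$ is the usual least-Herbrand-model argument, which is exactly the engine behind hyper-resolution completeness for Horn clauses. The one point where your write-up adds genuine value over the paper's one-line citation is the explicit observation that ground instances of temporal rules must first have their $t+k$ terms evaluated before they can be treated as propositional Horn clauses --- this is needed to reconcile the paper's definition of satisfaction of temporal atoms (which evaluates the addition function before membership testing) with the purely syntactic labelling of derivation nodes, and it is worth checking that the evaluated and unevaluated forms of a ground rule are satisfied by exactly the same Herbrand interpretations, as you indicate. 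One minor caveat: in your base case you should make sure that a leaf labelled by a bodyless ground instance of a rule of $\Pi$ (not only a fact of $F$) is covered; by safety such rules are already ground, so the argument goes through unchanged.
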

In the rest, whenever a fact $\alpha$ is entailed by a program $\Pi$ and a set
of facts $F$, we directly assume the existence of a derivation of $\alpha$ from
$\Pi \cup F$, without referring to
Proposition~\ref{prop:entailment-derivations}.

%%% Local Variables:
%%% mode: latex
%%% TeX-master: "main"
%%% End:

\subsection{Proof of a Claim in the Preliminaries}

As promised in the preliminaries, we show the following claim.
\begin{claim}
    Let $\dclass$ be a class of datasets,
    and let $\dclass'$ be the class of all finite datasets in
    $\dclass$.
    Then, $Q_1 \sqsubseteq_{\dclass} Q_2$ iff
    $Q_1 \sqsubseteq_{\dclass'} Q_2$.
\end{claim}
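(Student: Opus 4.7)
The plan is to prove the two directions separately, with the forward one being immediate and the backward one relying on first-order compactness together with the closure assumptions on $\dclass$ stated in the preliminaries.

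For the direction $Q_1 \sqsubseteq_{\dclass} Q_2 \Rightarrow Q_1 \sqsubseteq_{\dclass'} Q_2$, I would simply observe that $\dclass' \subseteq \dclass$, so the inclusion $Q_1(D) \subseteq Q_2(D)$ holding for every $D \in \dclass$ in particular holds for every $D \in \dclass'$.

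For the converse, I would proceed by contrapositive. Suppose $Q_1 \not\sqsubseteq_{\dclass} Q_2$, so there is some (possibly infinite) $D \in \dclass$ and a fact $\alpha$ with $\alpha \in Q_1(D)$ and $\alpha \notin Q_2(D)$. The key step is to extract a finite witness: since rules are first-order sentences and entailment is compact, the fact that $\Pi_{Q_1} \cup D \models \alpha$ implies the existence of a \emph{finite} subset $S \subseteq D$ with $\Pi_{Q_1} \cup S \models \alpha$, i.e.\ $\alpha \in Q_1(S)$. Now I invoke closure property~(1) from the preliminaries: there exists a finite $D' \in \dclass$ (hence $D' \in \dclass'$) with $S \subseteq D' \subseteq D$. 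By monotonicity of Datalog entailment, $S \subseteq D'$ yields $\alpha \in Q_1(D')$, while $D' \subseteq D$ together with $\alpha \notin Q_2(D)$ yields $\alpha \notin Q_2(D')$. Thus $D' \in \dclass'$ witnesses $Q_1 \not\sqsubseteq_{\dclass'} Q_2$.

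The main (and essentially only) obstacle is justifying the extraction of the finite subset $S$; this is where one must be careful because $D$ may be infinite and contain infinitely many temporal facts. However, any derivation of $\alpha$ from $\Pi_{Q_1} \cup D$ (in the hyper-resolution sense used elsewhere in the appendix) is a finite tree and thus uses only finitely many facts from $D$, so taking $S$ to be that finite set of leaves suffices. The rest of the argument is a routine application of the stated closure assumption and monotonicity, and no additional use of closure property~(2) is needed for this claim.
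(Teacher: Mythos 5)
Your proposal is correct and follows essentially the same route as the paper's proof: the trivial direction via $\dclass' \subseteq \dclass$, and for the converse a contrapositive argument that extracts a finite witness subset from a finite derivation, enlarges it to a finite dataset in $\dclass$ using closure property~(1), and concludes by monotonicity on both the $Q_1$ and $Q_2$ sides. Your remark that closure property~(2) is not needed here is also accurate.
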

\begin{proof}
    Trivially 
    $Q_1 \not\sqsubseteq_{\dclass'} Q_2$
    implies
    $Q_1 \not\sqsubseteq_{\dclass} Q_2$, since $\dclass' \subseteq \dclass$.
    For the converse, assume 
    $Q_1 \not\sqsubseteq_{\dclass} Q_2$.
    There exists a dataset $D \in \dclass$ and a fact $\alpha$ such that
    $\alpha \in Q_1(D)$ and $\alpha \notin Q_2(D)$.
    There exists a finite $D' \subseteq D$ such that $\alpha \in Q_1(D')$, since
    derivations are finite.
    By our assumption on the considered classes of datasets,
    there exists a finite dataset $D''$ with $D' \subseteq D'' \subseteq D$ and
    $D'' \in \dclass$; and hence $D'' \in \dclass'$.
    By monotonicity, it follows that
    $\alpha \in Q_1(D'')$ and $\alpha \notin Q_2(D'')$.
    Therefore, 
    $Q_1 \not\sqsubseteq_{\dclass'} Q_2$.
\end{proof}

\subsection{Proof of Theorem~\ref{theorem:evaluation}}

\begin{proposition}
\label{prop:relevant-time}
    Let $\Pi$ be a program consisting of forward-propagating rules, 
    let $F$ be a set of facts, and
    let $\alpha$ be a temporal fact having a derivation $\delta$ from 
    $\Pi \cup F$.
    Furthermore, 
    let $\tmin$ be the minimum time point in $F$, and 
    let $\tau$ be the time argument of $\alpha$.
    Then, each time point occurring in $\delta$ is in $[\tmin, \tau]$.
\end{proposition}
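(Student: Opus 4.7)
My plan is to induct on the structure of the derivation tree $\delta$, exploiting the defining property of forward-propagating rules that, in any ground instance, the head's time argument dominates each body time argument. Once this is on the table, the bounds $[\tmin, \tau]$ fall out by simple bookkeeping on the tree.

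The first step is the auxiliary observation about fp-rules: in any ground instance of a non-Datalog fp-rule, if the head has time argument $\tau_0 + h$ (where $\tau_0$ is the value assigned to the unique time variable and $h$ is the head offset), then every temporal body atom has time argument $\tau_0 + o_i$ with $o_i \leq h$, and so the body time $\tau_0 + o_i$ does not exceed the head time $\tau_0 + h$. Datalog rule instances are vacuously fine because they contain no temporal atoms at all. Combined with the fact that the body atoms of a non-leaf node in $\delta$ coincide with the head atoms of its children, this yields a downward-monotonicity property: moving from the root of $\delta$ toward its leaves, head time arguments can only decrease.

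From here both bounds follow. For the upper bound, the root has head time $\tau$ by assumption, so by downward monotonicity every head time in $\delta$ is $\leq \tau$; and every time point appearing in $\delta$ is the time argument of some node's head, since the body atoms of a non-leaf node are exactly the heads of its children. For the lower bound, the leaves of $\delta$ are labeled with ground rules of empty body, i.e.\ with facts; these belong to $F$ (any ground facts in $\Pi$ may be folded into $F$), so their time arguments are $\geq \tmin$ by the definition of $\tmin$. Since head times are non-increasing as we descend, every internal head time is $\geq$ the head time of any descendant leaf, hence $\geq \tmin$.

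The one subtle point I would pay attention to in the writeup is the treatment of rigid atoms: a non-Datalog fp-rule may carry rigid body atoms, and the sub-derivation rooted at such a rigid fact is a purely Datalog sub-tree (since only Datalog rules can produce rigid facts in the forward-propagating setting). These sub-trees contribute no time arguments to $\delta$ and can be set aside. I would handle this by formulating the inductive statement as ``every time argument appearing in any atom of any rule instance labeling any node of $\delta$ is in $[\tmin, \tau]$'', so that nodes whose head is rigid simply have no time-argument content to contribute, and the induction composes cleanly across mixed rigid/temporal derivations.
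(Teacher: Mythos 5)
Your proposal is correct and follows essentially the same route as the paper's proof: an induction over the derivation tree using the forward-propagation property that body time arguments never exceed the head time argument, with the rigid sub-derivations set aside as time-free Datalog subtrees and the temporal leaves traced back to $F$ to obtain the lower bound $\tmin$. The only presentational difference is that you package the argument as a single downward-monotonicity observation yielding both bounds, whereas the paper runs a direct induction on the height of $\delta$ with the interval $[\tmin,\tau']$ as the inductive invariant; the content is the same.
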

\begin{proof}
    Let $\delta$ be a derivation of $\alpha$ from $\Pi \cup F$.
    We prove the claim by induction on the height $n$ of $\delta$.
    
    In the base case $n=0$,
    and hence $\alpha$ is the only atom in $\delta$.
    Since no time point occurs in $\Pi$ by the properties of forward-propagating
    rules,
    it follows that $\tau$ occurs in $F$.
    Therefore $\tmin \leq \tau$ by the definition of $\tmin$,
    and hence trivially $\tau \in [\tmin, \tau]$.

    In the inductive case $n>0$, and we assume that each time point occurring in
    a derivation of a temporal fact $\beta$ from $\Pi \cup F$ of height at most
    $n-1$ is in $[\tmin, \tau']$ where $\tau'$ is the time argument of $\beta$.
    Let $r$ be the rule labelling the root of $\delta$,
    let $\beta$ be an atom in the body of $r$,
    and let $\delta'$ be a derivation of $\beta$ occurring as a subtree in 
    $\delta$.
    We have two cases.
    In the first case $\beta$ is rigid, and hence it is clear that each label of
    a node of $\delta'$ is an instance of a Datalog rule in $\Pi$ by the
    properties of forward-propagating rules, and hence no time point occurs in 
    $\delta'$.
    In the other case $\beta$ is temporal.
    Let $\tau'$ be the time argument of $\beta$.
    Note that $\tau' \leq \tau$ since $r$ is forward-propagating.
    It follows that each time point occurring in $\delta'$ is in $[\tmin,\tau]$
    by the inductive hypothesis.
\end{proof}

\theoremevaluation*
\begin{proof}
    We describe a $\logspace$-computable many-one reduction $\varphi$ from 
    $\evaluation^{\qclass}_{\dclass}$ to $\evaluation^{\qclass'}_{\dclass}$.
    An instance of $\evaluation^{\qclass}_{\dclass}$ is 
    $I = \langle Q, D, \alpha \rangle$ with $Q \in \qclass$, $D \in \dclass$, 
    and $\alpha$ a fact.
    We consider two cases, depending on whether $\alpha$ is rigid or temporal.

    Assume that $\alpha$ is rigid.
    Then, $\varphi$ maps $I$ to $\langle Q_1, D, \alpha \rangle$ where $Q_1$ is 
    $\langle P_Q, \Pi_1 \rangle$ with $\Pi_1$ the Datalog subprogram of $\Pi_Q$.
    We argue that $\alpha \in Q(D)$ iff $\alpha \in Q_1(D)$. 
    First, we have that $\alpha \in Q_1(D)$ implies $\alpha \in Q(D)$ by 
    monotonicity.
    Then, for the converse, assume $\alpha \in Q(D)$ and let $\delta$ be a
    derivation of $\alpha$ from $\Pi_Q \cup D$.
    Since $\alpha$ is rigid and $Q$ is an fp-query, the rule $r$ labelling the
    root of $\delta$ is an instance of a Datalog rule in $\Pi_Q$, and hence each
    atom in $r$ is rigid; 
    inductively the same holds for each label of a node of $\delta$.
    Therefore $\delta$ is a derivation of $\alpha$ from $\Pi_1 \cup D$,
    and hence $\alpha \in Q_1(D)$.

    Now, assume that $\alpha$ is temporal.
    We further split into two cases.

    In the first case $D$ contains no temporal fact,
    and we define $\varphi$ as mapping $I$ to $\langle Q_2, D, \alpha \rangle$
    with $Q_2 = \langle P_Q, \emptyset \rangle$.
    We have that $\alpha \in Q(D)$ iff $\alpha \in Q_2(D)$,
    since $\alpha \notin Q_2(D)$ holds trivially, and $\alpha \notin Q(D)$ holds 
    because $Q$ mentions no time point, by our assumption.

    In the other case, we have that $D$ contains a temporal fact.
    Let $\tmin$ be the minimum time point in $D$, and let $\tau_{\alpha}$ be the
    time argument of $\alpha$.
    Let $Q_3 = \langle P_Q, \Pi_3 \rangle$ with $\Pi_3$ the program consisting
    of each rule $r'$ obtained from a rule $r \in \Pi_Q$ by substituting the
    time variable in $r$---note that there is at most one time variable in $r$
    since $Q$ is an fp-query---so that each time argument in $r'$ is in the
    interval $[\tmin, \tau_{\alpha}]$.
    Since $Q_3$ is time-ground, it is clear that we can build a Datalog query 
    $Q_3'$ equivalent to $Q_3$ by replacing each atom $\beta$ in $Q_3$ with a
    rigid atom over a fresh predicate that is unique to the predicate and time
    argument of $\beta$.
    Then, we define $\varphi$ as mapping $I$ to 
    $\langle Q_3', D, \alpha \rangle$.
    We argue next that the reduction is correct.
    It suffices to show that $\alpha \in Q(D)$ iff $\alpha \in Q_3(D)$,
    since $Q_3$ and $Q_3'$ are equivalent.
    First, we have that $\alpha \in Q_3(D)$ implies $\alpha \in Q(D)$
    because each rule in $\Pi_3$ is an instance of a rule in $\Pi_Q$.
    Then, for the converse, assume 
    $\alpha \in Q(D)$ and let $\delta$ be a derivation of $\alpha$ from 
    $\Pi_Q \cup D$. We have that each time point occurring in $\delta$ is in 
    $[\tmin, \tau_{\alpha}]$ by Proposition~\ref{prop:relevant-time}, and hence 
    each label of a node of $\delta$ is an instance of a rule of
    $\Pi_3$. Therefore $\delta$ is a derivation of $\alpha$ from $\Pi_3 \cup D$,
    and hence $\alpha \in Q_3(D)$.

    We finally argue that $\varphi$ can be computed in logarithmic space.
    It is clear that we can check whether $\alpha$ is rigid or temporal, 
    check whether $D$ contains a temporal fact, compute the minimum
    time point in $D$ if one exists, compute renamings, etc... in logarithmic 
    space.  The critical step is computing $Q_3$. 
    This is doable in logarithmic space because it suffices to consider
    substitutions mapping time variables to the interval 
    $[\tmin - \rho, \tau_{\alpha} + \rho]$ with $\rho$ the radius of $Q$, and
    the former interval has linear size, since we have assumed that numbers in
    the input $I$ are coded in unary.
\end{proof}

\subsection{Proof of Theorem~\ref{thm:output}}

\begin{restatable}{proposition}{delayzero} 
    \label{proposition:delay-zero}
    Let $\Pi$ be a program consisting of forward-propagating rules, 
    let $F$ be a set of facts,
    and let $\alpha$ be a fact.
    Furthermore,
    let $\tau$ be the time argument of $\alpha$,
    and let $B$ be the rigid facts in $F$.
    If $\Pi \cup F \models \alpha$, then 
    $\Pi \cup B \cup \restrict{F}{[0,\tau]} \models \alpha$.
\end{restatable}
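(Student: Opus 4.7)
The plan is to leverage the derivation-based characterisation of entailment (Proposition~\ref{prop:entailment-derivations}): since $\Pi \cup F \models \alpha$, there exists a derivation $\delta$ of $\alpha$ from $\Pi \cup F$. The goal is to show that every leaf of $\delta$ is a fact from $B \cup \restrict{F}{[0,\tau]}$, so $\delta$ is in fact a derivation of $\alpha$ from $\Pi \cup B \cup \restrict{F}{[0,\tau]}$, which then yields the desired entailment.

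To establish this, I would prove by induction on the structure of $\delta$ the claim that every temporal fact labelling a node in $\delta$ has a time argument in $[0,\tau]$, and every rigid fact labelling a node in $\delta$ belongs to $B$. The base case handles a node labelled by a fact in $F$ (the trivial body-less rule) directly. For the inductive step, consider the rule $r$ instantiating the root (or any inner node) of $\delta$. If its head is rigid, then by the forward-propagating restriction $r$ must be a Datalog instance, so all body atoms are rigid too, and the inductive hypothesis on the subderivations gives the claim. If its head is a temporal atom with time argument $\tau'$, then by the definition of forward-propagating rules (non-negative radius and a single time variable occurring in the head), every body temporal atom has time argument at most $\tau'$; combined with the inductive hypothesis applied to each child subderivation, this shows every temporal time argument in $\delta$ is bounded by $\tau$, while any rigid atom in the body descends via rigid-only subderivations into $B$.

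Applied to the root, where the head is $\alpha$ with time $\tau$, the claim guarantees that no leaf mentions a time point outside $[0,\tau]$ and that all rigid leaves belong to $B$. Hence $\delta$ is a valid derivation from $\Pi \cup B \cup \restrict{F}{[0,\tau]}$, giving $\Pi \cup B \cup \restrict{F}{[0,\tau]} \models \alpha$. The main technical subtlety is cleanly handling the alternation between rigid and temporal atoms within a single rule body, since a temporal head can still have rigid body atoms whose own subderivations must be shown to draw only on $B$; this is settled by observing that in a forward-propagating program only Datalog rules produce rigid atoms, so rigid subderivations never leave the rigid fragment and therefore cannot introduce temporal facts at prohibited time points.
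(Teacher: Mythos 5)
Your proposal is correct and follows essentially the same route as the paper: the paper's own proof is a two-line appeal to its Proposition~\ref{prop:relevant-time}, which establishes by exactly the induction you describe (on the derivation, splitting rigid heads into the Datalog fragment and bounding temporal time arguments by the head's) that every time point occurring in the derivation lies in $[\tmin,\tau]\subseteq[0,\tau]$. You have simply inlined that lemma's proof rather than citing it.
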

\begin{proof}
    If $\delta$ is a derivation of $\alpha$ from $\Pi \cup F$,
    then each time point in $\delta$ is at most $\tau$ by
    Proposition~\ref{prop:relevant-time},
    and hence $\delta$ is a derivation of $\alpha$ from 
    $\Pi \cup B \cup \restrict{F}{[0,\tau]}$.
\end{proof}

\begin{lemma}
    \label{lemma:window-subprogram-prev}
    Consider Algorithm  \ref{alg:main} parametrised with 
    $Q$, $w$ and $\Sigma_Q$. 
    On input $\langle B, S \rangle$, the set of temporal facts stored in $M$
    right after executing Line~\materlineref{} in any iteration of the main
    loop coincides with the set of temporal facts entailed by 
    $\Pi_Q^w \cup B \cup S$ and holding at any $\tau' \in [\tau-w, \tau]$.  
\end{lemma}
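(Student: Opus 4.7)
The plan is to prove the invariant by induction on $\tau$. The base case $\tau = 0$ follows directly from Proposition~\ref{prop:relevant-time}: after Line~\inputlineref{}, $M = B \cup \restrict{S}{0}$, and any derivation of a fact at time $0$ involves only the time point $0$, which forces any temporal rule used to have radius zero; thus the $\Pi_Q$-closure of $M$ at time $0$ and the $\Pi_Q^w \cup B \cup S$-closure of facts at time $0$ coincide.

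For the inductive step at $\tau$, assume the invariant at $\tau - 1$. After Line~6 of iteration $\tau - 1$ and Line~\inputlineref{} of iteration $\tau$, the temporal content of $M$ is precisely the set of $\Pi_Q^w \cup B \cup S$-entailed temporal facts at times in $[\tau - w, \tau - 1]$, together with $\restrict{S}{\tau}$. For the soundness inclusion, take a temporal fact $\alpha$ at $\tau$ added by Line~\materlineref{}, with some derivation $\delta$ from $\Pi_Q \cup M$. By Proposition~\ref{prop:relevant-time}, time points in $\delta$ lie in $[\tau - w, \tau]$, so each rule instance in $\delta$ has head time $\tau_h \leq \tau$ and every body time $\tau_b \geq \tau - w$; hence the rule's radius satisfies $\tau_h - \tau_b \leq w$, so $\delta$ uses only rules from $\Pi_Q^w$. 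Splicing in, at each leaf of $\delta$, a derivation from $\Pi_Q^w \cup B \cup S$---given by the induction hypothesis for temporal leaves, and by direct membership in $B \cup S$ for rigid or fresh EDB leaves---produces a derivation of $\alpha$ from $\Pi_Q^w \cup B \cup S$.

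For the completeness inclusion, fix a temporal fact $\alpha$ at some $\tau' \in [\tau - w, \tau]$ entailed by $\Pi_Q^w \cup B \cup S$. When $\tau' < \tau$, the induction hypothesis places $\alpha$ in $M$ after Line~\materlineref{} at iteration $\tau - 1$; since $\tau' \geq \tau - w > (\tau - 1) - w$, Line~6 does not discard it before iteration $\tau$, and no later step of iteration $\tau$ removes it before Line~\materlineref{}. When $\tau' = \tau$ and $\alpha$ is EDB it is added at Line~\inputlineref{}; when $\alpha$ is IDB, take a derivation $\delta$ from $\Pi_Q^w \cup B \cup S$ and consider the connected subtree $T$ rooted at $\alpha$ consisting of nodes whose heads hold at time $\tau$. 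Since each rule used in $T$ has radius at most $w$, every child of a $T$-node that lies outside $T$ is at a time in $[\tau - w, \tau - 1]$ and derives an entailed temporal fact, hence is in $M$ by the induction hypothesis; rigid and EDB leaves inside $T$ lie in $B \cup \restrict{S}{\tau} \subseteq M$ directly. Treating the outer boundary of $T$ as fresh leaves yields a derivation of $\alpha$ from $\Pi_Q \cup M$, so Line~\materlineref{} adds $\alpha$ to $M$.

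The main obstacle is the tree-surgery in the completeness step: one must argue carefully that $T$ is well-defined (the nodes at time $\tau$ do form a connected top region of $\delta$ thanks to forward-propagation, so no time-$\tau$ node lies strictly below a node at a smaller time), that the boundary consists of nodes at times in $[\tau - w, \tau - 1]$ only, and that replacing the boundary subtrees by leaf facts actually produces a valid derivation in $\Pi_Q \cup M$. A secondary technicality concerns rigid facts, which have no time coordinate: one must interpret Lines~\materlineref{} and~6 so that rigid IDB consequences persist across iterations, and verify that rigid subderivations inside $\delta$ can be absorbed into $M$ without affecting the temporal argument.
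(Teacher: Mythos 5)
Your proof is correct and follows essentially the same route as the paper's: induction on the iteration, the same characterisation of $M$ after Line~\inputlineref{} (background facts plus the surviving window plus the new batch), and the same use of the time-point-bounding property (Proposition~\ref{prop:relevant-time}) to conclude that every rule instance in a derivation confined to $[\tau-w,\tau]$ has radius at most $w$. The one presentational difference is in the $\supseteq$ direction, where the paper replaces your tree surgery by an inner induction on the height of a derivation of $\alpha$ from $\Pi_Q^w \cup B \cup S$, which sidesteps the need to argue that the time-$\tau$ nodes form a connected top region and that truncating at the boundary yields a well-formed derivation; the two devices are interchangeable here, and your justification of the surgery (body times never exceed head times in forward-propagating rules) is the right one. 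Your flagged worry about rigid facts dissolves without reinterpreting Lines~\materlineref{} and~6: rigid IDB consequences are never stored in $M$ (the lemma concerns only temporal facts), but any rigid subderivation uses only the Datalog part of $\Pi_Q$ over $B$, and $B \subseteq M$ in every iteration, so such subderivations can always be reconstructed inside the entailment check at Line~\materlineref{}.
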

\begin{proof}
    Let $\langle B, S \rangle$ be an input to Algorithm~1.
    For each $n \geq 0$,
    let $M_n^{\inputlineref}$ and $M_n^{\materlineref}$ be the
    facts stored in $M$ by Algorithm~1 on input $\langle B, S \rangle$ right 
    after Lines~\inputlineref{}~and~\materlineref{}, respectively, in the 
    $(n+1)$-th iteration of the main loop;
    furthermore, note that $\tau$ has value $n$ in the $(n+1)$-th iteration of 
    the main loop.
    Then, consider the following observations.

    \observation{obs:base-mat} 
    $M_0^{\inputlineref}$ is $B \cup \restrict{S}{0}$.

    \observation{obs:other-mat}
    For each $n > 0$, $M_n^{\inputlineref}$ is 
    $B \cup \restrict{M_{n-1}^{\materlineref}}{[n-w,\infty)} \cup
    \restrict{S}{n}$.

    \smallskip\par
    Next, we show the two inclusions separately.

    \smallskip\par
    $(\subseteq)$
    We first show that each temporal fact stored by Algorithm~1 in $M$ in
    any iteration of the main loop right after executing Line~\materlineref{}
    is entailed by $\Pi_Q^w \cup B \cup S$ and has time argument in 
    $[\tau-w,\tau]$.
    It suffices to show that each $M_n^{\materlineref}$ is a subset of the facts
    entailed by $\Pi_Q^w \cup B \cup S$.
    We prove it by induction on $n \geq 0$.

    In the base case $n = 0$.    
    Let $\alpha$ be a fact in $M_0^{\materlineref}$.
    It is clear from the algorithm that
    (i)~$\alpha$ is in $M_0^{\inputlineref}$ or
    (ii)~$\alpha$ is a temporal fact with time argument zero such that 
    $\Pi_Q \cup M_0^{\inputlineref} \models \alpha$.
    In case~(i), we have that $\alpha \in B \cup \restrict{S}{0}$ by
    Observation~\ref{obs:base-mat},
    and hence in $\alpha \in B \cup S$. Therefore the claim holds by
    monotonicity.
    In case~(ii), we have that
    $\Pi_Q \cup B \cup \restrict{S}{0} \models \alpha$ by 
    Observation~\ref{obs:base-mat}.
    Let $\delta$ be a derivation of $\alpha$ from 
    $\Pi_Q \cup B \cup \restrict{S}{0}$.
    By Proposition~\ref{prop:relevant-time}, we have that zero is the only time
    point in $\delta$.
    Any instance of a rule with radius bigger than zero contains a time point
    different from zero, and hence $\delta$ does not contain such an instance.
    In particular, $\delta$ is a derivation of $\alpha$ from 
    $\Pi_Q^0 \cup B \cup \restrict{S}{0}$.
    Therefore $\Pi_Q^w \cup B \cup S \models \alpha$ by monotonicity.

    In the inductive case $n > 0$, and we assume that
    $\alpha \in M_{n-1}^{\materlineref}$ implies 
    $\Pi_Q^w \cup B \cup S \models \alpha$.
    Let $\alpha$ be a fact in $M_n^{\materlineref}$.
    It is clear from the algorithm that
    (iii)~$\alpha$ is in $M_n^{\inputlineref}$ or
    (iv)~$\alpha$ is a temporal fact with time argument $n$ such that 
    $\Pi_Q \cup M_n^{\inputlineref} \models \alpha$.
    We consider the two cases separately.

    In case~(iii), we have that  
    $\alpha \in B \cup \restrict{M_{n-1}^{\materlineref}}{[n-w,\infty)} \cup
    \restrict{S}{n}$ by Observation~\ref{obs:other-mat}.
    We have two subcases:
    if $\alpha \in B \cup \restrict{S}{n}$, then the claim holds by
    monotonicity; otherwise, we have that 
    $\alpha \in \restrict{M_{n-1}^{\materlineref}}{[n-w,\infty)}$,
    and hence the claim holds by the inductive hypothesis.

    In case~(iv), we have that
    $\Pi_Q \cup B \cup \restrict{M_{n-1}^{\materlineref}}{[n-w,\infty)} \cup
    \restrict{S}{n} \models \alpha$ 
    by Observation~\ref{obs:other-mat}.
    Let $F$ be the set of facts entailed by $\Pi_Q^w \cup B \cup S$.
    Note that $M_{n-1}^{\materlineref} \subseteq F$ by the inductive hypothesis.
    It follows that
    $\Pi_Q \cup B \cup \restrict{F}{[n-w,\infty)} 
    \cup \restrict{S}{n} \models \alpha$ by monotonicity.
    Let $\delta$ be a derivation of $\alpha$ from 
    $\Pi_Q \cup B \cup \restrict{F}{[n-w,\infty)} 
    \cup \restrict{S}{n}$.
    Again by Proposition~\ref{prop:relevant-time}, we have that each time point
    of $\delta$ is in $[n-w,n]$.
    Any instance of a rule with radius bigger than $w$ contains time points
    in an interval of size bigger than $w+1$, and hence $\delta$ does not
    contain such an instance.
    In particular, $\delta$ is a derivation of $\alpha$ from 
    $\Pi_Q^w \cup B \cup \restrict{F}{[n-w,\infty)} 
    \cup \restrict{S}{n}$.
    It follows that
    $\Pi_Q^w \cup B \cup \restrict{F}{[n-w,\infty)} 
    \cup \restrict{S}{n} \models \alpha$,
    hence 
    $\Pi_Q^w \cup B \cup F \cup S \models \alpha$ by monotonicity,
    and hence 
    $\Pi_Q^w \cup B \cup S \models \alpha$ since $F$ is entailed by 
    $\Pi_Q^w \cup B \cup S$.

    \smallskip\par
    $(\supseteq)$
    We now show that the set of temporal facts stored by Algorithm~1 in $M$
    in any iteration of the main loop right after executing Line~\materlineref{}
    contains each fact entailed by $\Pi_Q^w \cup B \cup S$ and having time
    argument in $[\tau-w,\tau]$.
    Let $\alpha$ be a temporal fact entailed by $\Pi_Q^w \cup B \cup S$ and 
    having time argument in $[n-w,n]$.
    It suffices to show that $\alpha \in M_n^{\materlineref}$ for every 
    $n \geq 0$.
    We prove it by induction on $n \geq 0$.
    
    In the base case $n = 0$, and hence $\alpha$ has a time argument in 
    $[-w,0]$---specifically, such a time argument is zero.
    By Proposition~\ref{proposition:delay-zero},
    we have that $\Pi_Q^w \cup B \cup \restrict{S}{0} \models \alpha$,
    hence $\Pi_Q^w \cup M_0^{\inputlineref} \models \alpha$ by
    Observation~\ref{obs:base-mat},
    hence $\Pi_Q \cup M_0^{\inputlineref} \models \alpha$ by monotonicity, and
    hence $\alpha \in M_0^{\materlineref}$ according to the algorithm.
    
    In the inductive case $n>0$, and we assume that
    $M_{n-1}^{\materlineref}$ contains each fact entailed by 
    $\Pi_Q^w \cup B \cup S$ and having a time argument in $[n-1-w, n-1]$.
    We consider two cases.
    In the first case we have that $\alpha$ has time argument in $[n-w,n-1]$,
    hence $\alpha \in M_{n-1}^{\materlineref}$ by the inductive hypothesis,
    hence $\alpha \in M_n^{\inputlineref}$ by Observation~\ref{obs:other-mat},
    and hence $\alpha \in M_n^{\materlineref}$ by the definition of the
    algorithm.
    In the second case we have that $\alpha$ has time argument $n$.
    According to the algorithm,
    it suffices to show that $\Pi_Q \cup M_n^{\inputlineref} \models \alpha$.
    We prove it by induction on the height $m$ of a derivation $\delta$ of 
    $\alpha$ from $\Pi_Q^w \cup B \cup S$.
    
    In the base case $m = 0$, and hence $\alpha$ is a fact in $B \cup S$.
    In particular, $\alpha \in \restrict{S}{n}$,  
    and hence $\alpha \in M_n^{\inputlineref}$ by
    Observation~\ref{obs:other-mat}.
    Therefore $\Pi_Q \cup M_n^{\inputlineref} \models \alpha$ by monotonicity.
    
    In the inductive case $m > 0$, and we assume that 
    $\Pi_Q \cup M_n^{\inputlineref} \models \beta$ for each fact $\beta$
    with time argument $n$ and having a derivation from 
    $\Pi_Q^w \cup B \cup S$ of height at most $m-1$.
    Let $r$ be the rule labelling the root of $\delta$, 
    let $\beta$ be an atom in the body of $r$,
    and let $\tau'$ be the time argument of $\beta$.
    First, $r$ is an instance of a rule of $\Pi_Q$, since 
    $\Pi_Q^w \subseteq \Pi_Q$ by definition.
    Second, we show that $\Pi_Q \cup M_n^{\inputlineref} \models \beta$.
    Note that $\tau' \in [n-w, n]$,
    since $r$ is forward-propagating and the radius of $r$ is at most $w$ by the
    definition of $\Pi_Q^w$.
    We have two cases:
    if $\tau' = n$, then $\Pi_Q \cup M_n^{\inputlineref} \models \beta$ by the
    `inner' inductive hypothesis;
    otherwise, we have that $\tau' \in [n-w,n-1]$, hence 
    $\beta \in M_{n-1}^{\materlineref}$ by the `outer' inductive hypothesis, 
    hence $\beta \in M_n^{\inputlineref}$ by Observation~\ref{obs:other-mat},
    and hence $\Pi_Q \cup M_n^{\inputlineref} \models \beta$ by monotonicity.
    The first and the second points imply 
    $\Pi_Q \cup M_n^{\inputlineref} \models \alpha$.
\end{proof}

\thmoutput*
\begin{proof}
    Consider the $n$-th iteration of Algorithm~1.
    The output of the algorithm is determined by Line~\outputlineref{}, where 
    the algorithm 
    outputs the $P_Q$-facts in the set $\restrict{M}{\tau}$, which consists of
    the facts entailed by $\Pi_Q^w \cup B \cup S$ having time argument 
    $\tau$,
    by Lemma~\ref{lemma:window-subprogram-prev}.
    The claim then follows from the fact that $\tau = n-1$.
\end{proof}

\subsection{Proof of Theorem~\ref{th:window-containment-interreducible-prev}}

\thwindowcontainmentinterreducibleprev*
\begin{proof}
    We show reducibility in the two directions separately.

    \smallskip\par\noindent
    $(\rightsquigarrow)$
    Consider an instance $I = \langle Q, w \rangle$ of 
    $\window_{\dclass}^{\qclass}$,
    and the function $\varphi$ mapping $I$ to the instance 
    $\langle Q, Q^w \rangle$ of $\containment_{\dclass}^{\qclass}$.
    Note that 
    (i)~$Q^w \in \qclass$ since removing any number of rules from $Q$ yields a
    query in $\qclass$,
    (ii)~$\varphi$ can clearly be computed in logarithmic space, and
    (iii)~$\varphi$ is a many-one reduction since $w$ is a valid window for 
    $Q$, $\Sigma_Q$ and $\dclass$ if and only if $Q \sqsubseteq_{\dclass} Q^w$,
    by Corollary~\ref{cor:valid-window-correctness}.

    \smallskip\par\noindent
    $(\leftsquigarrow)$
    Now we prove reducibility in the other direction.
    Let $\psi$ be the reduction given in the proof sketch of 
    Theorem~\ref{th:window-containment-interreducible-prev}.
    Let $I = \langle Q_1, Q_2 \rangle$ be an instance of
    $\containment_{\dclass}^{\qclass}$, and 
    let $\psi(I) = \langle Q, \rho \rangle$.
    It is clear that $\psi$ can be computed in logarithmic space.

    We argue next that $Q \in \qclass$.
    It is clear that $Q \in \fpclass$.
    If $Q_1, Q_2 \in \nrclass$, then $Q \in \nrclass$ because $Q_1$ and $Q_2$
    do not share IDB predicates by our assumption, and 
    rule~\eqref{eq:goal-rule-1} and rule~\eqref{eq:goal-rule-2} do not add
    cycles in the $\Pi_Q$-dependencies, since $P_Q$ does not occur in any body
    of a rule of $\Pi_Q$.
    If $Q_1, Q_2 \in \ogclass$, then $Q \in \ogclass$ since 
    rule~\eqref{eq:goal-rule-1} and rule~\eqref{eq:goal-rule-2} are object-free
    because we have assumed that $Q_1$ and $Q_2$ are object-free.
    The case where $Q_1, Q_2 \in \ognrclass$ follows from the two previous
    cases.

    We argue next that $\psi$ is a many-one reduction, by showing that
    $\rho$ is a valid window for $Q$, $\Sigma_Q$ and $\dclass$ if and only if
    $Q_1 \sqsubseteq_{\dclass} Q_2$.
    In the following, note that $Q^{\rho}$ is $Q$ after removing
    rule~\eqref{eq:goal-rule-1}.
    
    Assume that $\rho$ is a valid window for $Q$, $\Sigma_Q$ and $\dclass$,
    and hence $Q \sqsubseteq_{\dclass} Q^{\rho}$ 
    by Corollary~\ref{cor:valid-window-correctness}.
    We show that $Q_1 \sqsubseteq_{\dclass} Q_2$.
    Let $D$ be a dataset in $\dclass$, and let $\alpha$ be a fact in $Q_1(D)$.
    Let $\tau$ be the time argument of $\alpha$ if $\alpha$ is temporal,
    and let $\tau = \rho + 1$ otherwise.
    Let $D'$ be $D$ extended with $A(\tau-\rho-1)$ and $B(\tau)$.
    Note that $D' \in \dclass$ by our assumption on the considered classes of
    datasets.
    We have that $\alpha \in Q(D')$ by rule~\eqref{eq:goal-rule-1}.
    Since $Q \sqsubseteq_{\dclass} Q^{\rho}$ as argued above,
    it follows that $\alpha \in Q^{\rho}(D')$,
    and in particular $\alpha$ is derived by rule~\eqref{eq:goal-rule-2}.
    Therefore $\alpha \in Q_2(D)$ by the construction of $Q$.
    
    For the converse, assume that $Q_1 \sqsubseteq_{\dclass} Q_2$.
    We show that $\rho$ is a valid window for $Q$, $\Sigma_Q$ and $\dclass$.
    By Corollary~\ref{cor:valid-window-correctness},
    it suffices to show $Q \sqsubseteq_{\dclass} Q^{\rho}$.
    Let $D$ be a dataset in $\dclass$, and let $\alpha$ be a fact in $Q(D)$.
    We distinguish two cases.
    In the first case, 
    $\alpha$ is derived by rule~\eqref{eq:goal-rule-2},
    and hence $\alpha \in Q^{\rho}(D)$.
    In the other case, 
    $\alpha$ is derived by rule~\eqref{eq:goal-rule-1},
    hence $\alpha \in Q_1(D)$ by the construction of $Q$;
    furthermore we have that $B(\tau) \in D$, where $\tau$ is the time argument
    of $\alpha$ if $\alpha$ is temporal and just a time point otherwise.
    It follows that $\alpha \in Q_2(D)$ since 
    $Q_1 \sqsubseteq_{\dclass} Q_2$ by our assumption, and 
    hence $\alpha \in Q^{\rho}(D)$ by rule~\eqref{eq:goal-rule-2}.
\end{proof}

\subsection{Proof of Lemma~\ref{lemma:automata-bound}}
\label{asec:automata-bound}

We begin by restating the automaton construction given in the proof sketch of
Lemma~\ref{lemma:automata-bound}, and then formally state and prove its
correctness. Finally, we use the automata-theoretic characterisation to prove
Lemma~\ref{lemma:automata-bound}.

\subsubsection{Automaton Construction}

Let $Q$ be a temporal object-ground fp-query with output predicate $G$.
For simplicity, and without loss of generality, we assume that $Q$ contains no 
object terms and hence all predicates in the query are either
nullary or unary and temporal.
Furthermore, let $\rho$ be the radius of $Q$.
Then, the automaton $\mathcal{A}$ capturing $Q$ is as follows.
\begin{compactitem}
\item[--] A \emph{state} is either the initial state $s_{\initstate}$, or a  
    $(\rho+2)$-tuple where the first component is a subset of the rigid EDB 
    predicates in $Q$, and the other components are subsets of the 
    temporal (EDB and IDB) predicates in $Q$.
    A state is \emph{final} if its last component contains the output predicate $G$.
\item[--] Each \emph{alphabet symbol} is a 
    set $\Sigma$ of EDB predicates occurring in $Q$ such that
    $\Sigma$ does not contain temporal and rigid predicates simultaneously.
\item[--] The \emph{transition function} $\delta$ consists of
    \begin{compactitem}
    \item each transition
        $s_{\initstate}, \Sigma \mapsto \langle \Sigma, \emptyset, \dots, 
        \emptyset \rangle$ 
        such that $\Sigma$ consists of rigid predicates; and
    \item each transition
        $\langle B, M_0, \dots, M_{\rho} \rangle, \Sigma \mapsto 
        \langle B, M'_0, \dots, M'_{\rho} \rangle$ 
        such that: 
        (i)~$\Sigma$ consists of temporal predicates; 
        (ii)~$M_i' = M_{i+1}$ for each $0 \leq i < \rho$; and
        (iii)~$M_{\rho}'$ consists of each predicate $P$ satisfying  
        $\Pi_Q \cup B \cup H \cup U \models P(\rho)$ 
        for $H$ the set of all facts $R(i)$ with $R \in M_i$ and 
        $0 \leq i < \rho$,
        and $U$ the set of all facts $R(\rho)$ with $R \in \Sigma$.
    \end{compactitem}
\end{compactitem}

\subsubsection{Correctness of the Construction}

To argue the correctness of the construction, we first show the following auxiliary result.

\begin{proposition}
    \label{prop:syntactic-program-window}
    Let $\Pi$ be a program consisting of forward-propagating rules, 
    let $F$ be a set of facts,
    and let $\alpha$ be a temporal fact.
    Furthermore,
    let $\tau$ be the time argument of $\alpha$,
    let $\rho$ be the maximum radius of a rule in $\Pi$,
    let $B$ be the set of rigid facts in $F$,
    let $H$ be the set of temporal facts entailed by
    $\Pi \cup B \cup \restrict{F}{[0,\tau)}$ and having time argument in
    $[\tau-\rho,\tau)$,
    and let $U$ be $\restrict{F}{\tau}$.
    Then,
    $\Pi \cup F \models \alpha$ iff
    $\Pi \cup B \cup H \cup U \models \alpha$.
\end{proposition}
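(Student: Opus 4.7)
The plan is to prove the two directions separately, with the right-to-left direction being essentially immediate and the left-to-right direction requiring a careful induction on the height of a derivation tree.

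For the $(\Leftarrow)$ direction, I would observe that every fact in $H$ is entailed by $\Pi \cup B \cup \restrict{F}{[0,\tau)}$ and hence, by monotonicity, by $\Pi \cup F$; since $B \cup U \subseteq F$ as well, any model of $\Pi \cup F$ is also a model of $\Pi \cup B \cup H \cup U$, so $\Pi \cup B \cup H \cup U \models \alpha$ implies $\Pi \cup F \models \alpha$.

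For the $(\Rightarrow)$ direction, I would start from a derivation $\delta$ of $\alpha$ from $\Pi \cup F$, which exists by Proposition~\ref{prop:entailment-derivations}, and prove by induction on the height of $\delta$ the slightly stronger claim that every temporal fact with time argument $\tau$ entailed by $\Pi \cup F$ is entailed by $\Pi \cup B \cup H \cup U$. In the inductive step, let $r$ be the ground rule labelling the root of $\delta$, let $\beta$ be a body atom of $r$, and consider three cases. If $\beta$ is rigid, then a standard argument about forward-propagating rules (the subtree of $\delta$ rooted at $\beta$ consists entirely of Datalog rules with rigid heads and bodies) shows $\Pi \cup B \models \beta$. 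If $\beta$ is temporal with time argument $\tau$, the subderivation has strictly smaller height and the inductive hypothesis applies, so $\Pi \cup B \cup H \cup U \models \beta$. If $\beta$ is temporal with time argument $\tau' < \tau$, then since $r$ is forward-propagating and its radius is at most $\rho$, we have $\tau' \in [\tau - \rho, \tau)$; moreover, the subderivation of $\beta$ witnesses $\Pi \cup F \models \beta$, and by Proposition~\ref{proposition:delay-zero} we conclude $\Pi \cup B \cup \restrict{F}{[0, \tau')} \models \beta$, hence $\Pi \cup B \cup \restrict{F}{[0,\tau)} \models \beta$ by monotonicity, so $\beta \in H$ by the definition of $H$. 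Combining the three cases, the body of $r$ is entailed by $\Pi \cup B \cup H \cup U$, and applying $r$ yields the head, which is the fact being derived.

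The main obstacle I anticipate is precisely the separation into these three cases and ensuring that the definition of $H$ is strong enough: it is crucial that $H$ collects temporal facts entailed by $\Pi \cup B \cup \restrict{F}{[0,\tau)}$ rather than only facts from $F$ itself, because body atoms at time $\tau' < \tau$ may be derived rather than explicit. Proposition~\ref{proposition:delay-zero} is the key tool that makes this work, by guaranteeing that a derivation of $\beta$ at time $\tau'$ can be carried out using only facts from $\Pi \cup B \cup \restrict{F}{[0,\tau']}$, so that $\beta$ falls exactly into the set $H$ as defined. Once this observation is in place, the induction goes through cleanly and gives the desired entailment of $\alpha$.
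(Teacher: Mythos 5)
Your proposal is correct and follows essentially the same route as the paper's proof: the $(\Leftarrow)$ direction by monotonicity (since $B, U \subseteq F$ and $H$ is entailed by $\Pi \cup F$), and the $(\Rightarrow)$ direction by induction on the height of a derivation, splitting body atoms into rigid atoms (handled via $\Pi \cup B$), temporal atoms at time $\tau$ (handled by the inductive hypothesis), and temporal atoms at earlier times in $[\tau-\rho,\tau)$ (placed in $H$ via Proposition~\ref{proposition:delay-zero}). The only omission is the trivial base case, where $\alpha \in \restrict{F}{\tau} = U$.
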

\begin{proof}
    We prove the two implications separately.

    \smallskip\par\noindent
    $(\Leftarrow)$
    Assume $\Pi \cup B \cup H \cup U \models \alpha$.
    Let $H'$ be the set of facts entailed by $\Pi \cup F$.
    Note that 
    $B \subseteq F$,
    $H \subseteq H'$,
    and $U \subseteq F$.
    It follows that
    $\Pi \cup F \cup H' \models \alpha$ by monotonicity,
    and hence $\Pi \cup F \models \alpha$ since 
    $\Pi \cup F \models H'$ by definition.

    \smallskip\par\noindent
    $(\Rightarrow)$
    Assume $\Pi \cup F \models \alpha$.
    Let $\delta$ be a derivation of $\alpha$ from $\Pi \cup F$.
    We prove the claim by induction on the height $n$ of $\delta$.
    In the base case $n=0$, and hence $\alpha \in F$.
    In particular $\alpha \in U$, and hence the claim holds by monotonicity.
    In the inductive case $n>0$, and 
    we assume that $\Pi \cup B \cup H \cup U \models \beta$ holds
    for each temporal fact $\beta$ with time argument $\tau$ having a derivation
    from $\Pi \cup F$ of height at most $n-1$.
    Let $r$ be the rule labelling the root of $\delta$,
    and let $\beta$ be an atom in the body of $r$.
    It suffices to show $\Pi \cup B \cup H \cup U \models \beta$.
    We distinguish two cases.
    In the first case $\beta$ is rigid, and hence $\Pi \cup B \models \beta$ 
    since it is clear that any derivation of any rigid fact such as $\beta$ from 
    $\Pi \cup F$ does not involve temporal facts,
    by the properties of forward-propagating rules; 
    the claim follows by monotonicity.
    In the other case $\beta$ is temporal.
    Let $\tau'$ be the time argument of $\beta$.
    Note that $\tau' \in [\tau - \rho, \tau]$ since 
    $r$ is forward-propagating and its radius is at most $\rho$.
    We distinguish again two cases.
    If $\tau' \in [\tau-\rho, \tau)$, then $\beta \in H$
    by Proposition~\ref{proposition:delay-zero} and the construction of $H$,
    and hence the claim holds by monotonicity.
    Otherwise, we have that $\tau'$ coincides with $\tau$, and hence
    the claim holds by the inductive hypothesis.
\end{proof}

Automaton $\mathcal{A}$ correctly captures $Q$ in the sense of the following
Claim~\ref{claim:query-cont-upper-correct-1} and
Claim~\ref{claim:query-cont-upper-correct-2}.

\begin{restatable}{claim}{querycontuppercorrectone}
\label{claim:query-cont-upper-correct-1}
    Let $D$ be a dataset, 
    and let $n$ be a non-negative integer.
    Furthermore,
    let $\rho$ be the radius of $Q$,
    let $w$ be the word $\langle \Sigma_0, \Sigma_1, \dots, \Sigma_n \rangle$ 
    where 
    $\Sigma_0$ is the set of rigid facts in $D$
    and each $\Sigma_i$ with $i > 0$ is the set $\{ A \mid A(i-1) \in D \}$,
    let $M_i = \emptyset$ for each $-\rho \leq i \leq 0$,
    let $M_i = \{ P \mid \Pi_Q \cup D \models P(i-1) \}$
    for each $1 \leq i \leq n$,
    and let $s_i = \langle \Sigma_0, M_{i-\rho}, \dots, M_i \rangle$ for
    each $0 \leq i \leq n$.
    Then, 
    $\langle s_{\initstate}, \Sigma_0, s_0, \Sigma_1, s_1, \dots,
    \Sigma_n, s_n \rangle$ is a run of $\mathcal{A}$.
\end{restatable}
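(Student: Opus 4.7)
The plan is to prove the claim by induction on $i$ from $0$ up to $n$, verifying at each step that the prefix $\langle s_{\initstate}, \Sigma_0, s_0, \dots, \Sigma_i, s_i\rangle$ is a valid initial run of $\mathcal{A}$. For the base case $i = 0$, the claim's convention sets $M_j = \emptyset$ for every $-\rho \leq j \leq 0$, so $s_0 = \langle \Sigma_0, \emptyset, \dots, \emptyset\rangle$. Since $\Sigma_0$ contains the rigid facts of $D$ and therefore only mentions rigid predicates, the initial transition rule of $\mathcal{A}$ directly yields $(s_{\initstate}, \Sigma_0, s_0)$ as a valid transition.

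For the inductive step, fix $i \geq 1$ and assume that the prefix ending at $s_{i-1}$ is a valid run. I then need to verify the three conditions that make $(s_{i-1}, \Sigma_i, s_i)$ a transition of $\mathcal{A}$. Two of them are routine: $\Sigma_i = \{A : A(i-1) \in D\}$ consists of temporal predicates by construction, and the shift rule $M'_j = M_{j+1}$ for $0 \leq j < \rho$ follows from a direct comparison of the indexings of $s_i$ and $s_{i-1}$ given in the claim. The critical condition is that the freshly computed last component coincides with $M_i = \{P : \Pi_Q \cup D \models P(i-1)\}$.

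To prove that identity, the plan is to combine three ingredients. Proposition~\ref{prop:syntactic-program-window}, applied with $\tau = i-1$ and $F = D$, replaces the global entailment $\Pi_Q \cup D \models P(i-1)$ by an equivalent local entailment from $\Pi_Q$ together with the rigid facts $\Sigma_0$, the temporal consequences of $\Pi_Q \cup D$ in the window $[i-1-\rho, i-1)$, and the input facts $\restrict{D}{i-1}$. Proposition~\ref{proposition:delay-zero} then characterises those window consequences exactly as the facts $R(t)$ with $R \in M_{t+1}$, making the link to the temporal components of $s_{i-1}$ explicit. Finally, the translation-invariance of fp-queries---rules use no specific time point and each rule carries a single time variable with non-negative offsets---provides a uniform temporal shift by $\rho - (i-1)$ that transports the local entailment into the canonical form $\Pi_Q \cup \Sigma_0 \cup H \cup U \models P(\rho)$ used by the automaton's transition, sending $\restrict{D}{i-1}$ onto $U$ and the window consequences onto $H$. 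The main obstacle will be the careful index bookkeeping needed to confirm that, after the shift, the transported objects align exactly with the $H$ and $U$ specified by the transition function; once this alignment is checked, the required equality of the newly computed component with $M_i$ follows.
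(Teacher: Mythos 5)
Your proposal is correct and follows essentially the same route as the paper's proof: induction along the run, with the only nontrivial transition condition (that the freshly computed last component equals $M_i$) handled by combining Proposition~\ref{prop:syntactic-program-window} to localise the entailment to the window $[i-1-\rho,i-1]$, Proposition~\ref{proposition:delay-zero} to identify the window consequences with the stored components, and the time-shift invariance of fp-queries (which mention no time points) to move everything to the canonical interval $[0,\rho]$ used by the transition function. The index bookkeeping you flag is exactly what the paper's sub-claims on $H=H^\mathrm{s}$ and $U=U^\mathrm{s}$ carry out.
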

\begin{proof}
    We prove the claim by induction on $n$.

    In the base case $n = 0$.
    We have that $\langle s_{\initstate}, \Sigma_0, s_0 \rangle$
    is a run of $\mathcal{A}$ by construction---note that 
    $s_0 = \langle \emptyset, \dots, \emptyset \rangle$.

    In the inductive case $n > 0$, and we assume that 
    $\langle s_{\initstate}, \Sigma_0, s_0, \dots, 
    \Sigma_{n-1}, s_{n-1} \rangle$ is a run of $\mathcal{A}$. 
    We have to show that     
    $\langle s_{\initstate}, \Sigma_0, s_0, \dots, 
    \Sigma_n, s_n \rangle$ is a run of $\mathcal{A}$, for which it suffices to 
    show that $\delta(s_{n-1}, \Sigma_n) = s_n$.

    Let $H$ be the set consisting of each fact $P(i)$ for $P \in M_{n-\rho+i}$ 
    and $0 \leq i < \rho$,
    and let $U$ be the set consisting of each fact $P(\rho)$ for 
    $P \in \Sigma_n$.
    Hence, according to the construction of $\delta$, it suffices to show that
    $M_n = \{ P \mid \Pi_Q \cup \Sigma_0 \cup H \cup U \models P(\rho) \}$.
    Now, we have that 
    $M_n = \{ P \mid \Pi_Q \cup D \models P(n-1) \}$ by construction.
    Let $H'$ be the set of temporal facts entailed by 
    $\Pi_Q \cup \Sigma_0 \cup \restrict{D}{[0,n-1)}$ and having time argument in 
    $[n-1-\rho,n-1)$.
    It follows that 
    $M_n = \{ P \mid \Pi_Q \cup \Sigma_0 \cup H' \cup \restrict{D}{n-1} 
    \models P(n-1) \}$ 
    by Proposition~\ref{prop:syntactic-program-window}.
    Let $H^\mathrm{s}$ and $U^\mathrm{s}$ be $H'$ and $\restrict{D}{n-1}$,
    respectively, after replacing each time point $\tau$ with 
    $\tau - n + 1 + \rho$. Note that the two former datasets are well-formed
    since each time point occurring in them is at least $n-1-\rho$.
    It follows that
    $M_n = \{ P \mid \Pi_Q \cup \Sigma_0 \cup H^\mathrm{s} \cup  U^\mathrm{s}
    \models P(\rho) \}$ since $Q$ is an fp-query---in particular, it mentions no
    time point.
    Finally, 
    $M_n = \{ P \mid \Pi_Q \cup \Sigma_0 \cup H \cup U \models P(\rho) \}$ 
    holds by Claim~\ref{claim:query-cont-upper-correct-1}.1 and
    Claim~\ref{claim:query-cont-upper-correct-1}.2, which are given next.

    \smallskip\noindent
    \emph{Claim~\ref{claim:query-cont-upper-correct-1}.1.} 
    It holds that $H = H^\mathrm{s}$.

    We first show $H \subseteq H^\mathrm{s}$.
    Let $\alpha \in H$.
    By the definition of $H$,
    we have that $\alpha$ is a temporal fact of the form $P(i)$ with 
    $P \in M_{n-\rho+i}$ and $0 \leq i < \rho$.
    It follows that
    $\Pi_Q \cup D \models P(n - \rho + i - 1)$ by the construction of 
    $M_{n-\rho+i}$,
    hence
    $\Pi_Q \cup \restrict{D}{[0,n-1)} \models P(n - \rho + i - 1)$ 
    by Proposition~\ref{proposition:delay-zero} and monotonicity,
    hence $P(n - \rho + i - 1) \in H'$ by the construction of $H'$,
    hence $P(i) \in H^\mathrm{s}$ by the construction of $H^\mathrm{s}$,
    and hence $\alpha \in H^\mathrm{s}$.

    We now show $H^\mathrm{s} \subseteq H$.
    Let $\alpha \in H^\mathrm{s}$.
    We have that $\alpha$ is a temporal fact of the form $P(i)$ such that
    $P(n-1-\rho+i) \in H'$ by the construction of $H^\mathrm{s}$.
    It follows that
    $\Pi_Q \cup \Sigma_0 \cup \restrict{D}{[0,n-1)} \models P(n-1-\rho+i)$ 
    by the construction of $H'$,
    hence $P \in M_{n-\rho+i}$ by the construction of $M_{n-\rho+i}$,
    hence $P(i) \in H$ by the construction of $H$,
    and hence $\alpha \in H$.

    This concludes the proof of Claim~\ref{claim:query-cont-upper-correct-1}.1.

    \smallskip\noindent
    \emph{Claim~\ref{claim:query-cont-upper-correct-1}.2.} 
    It holds that $U = U^\mathrm{s}$.

    We first show $U \subseteq U^\mathrm{s}$.
    Let $\alpha \in U$.
    We have that $\alpha$ is a temporal fact of the form $P(\rho)$ with 
    $P \in \Sigma_n$,
    hence $P(n-1) \in D$ by the construction of $\Sigma_n$,
    hence $P(\rho) \in U^\mathrm{s}$ by the construction of $U^\mathrm{s}$,
    and hence $\alpha \in U^\mathrm{s}$.

    We now show $U^\mathrm{s} \subseteq U$.
    Let $\alpha \in U^\mathrm{s}$.
    We have that $\alpha$ is a temporal fact of the form $P(i)$ such that
    $P(n-1) \in D$,
    hence $P \in \Sigma_n$ by the construction of $\Sigma_n$,
    hence $P(\rho) \in U$ by the construction of $U$,
    and hence $\alpha \in U$.

    This concludes the proof of Claim~\ref{claim:query-cont-upper-correct-1}.2,
    and hence the overall proof.
\end{proof}

\begin{restatable}{claim}{querycontuppercorrecttwo}
\label{claim:query-cont-upper-correct-2}
    Let $\langle \Sigma_0, \Sigma_1, \dots, \Sigma_n \rangle$ be a 
    word over the input alphabet with $n \geq 0$,
    and let $\rho$ be the radius of $Q$.
    Furthermore,
    let $D$ be the dataset 
    $\Sigma_0 \cup \{ A(i-1) \mid 1 \leq i \leq n \text{, and } A \in 
    \Sigma_i \}$,
    let $M_i = \emptyset$ for each $-\rho \leq i \leq 0$,
    and let $M_i = \{ P \mid \Pi_Q \cup D \models P(i-1) \}$ for each 
    $1 \leq i \leq n$.
    If $\langle s_{\initstate}, \Sigma_0, s_0, \dots, \Sigma_n, s_n \rangle$ 
    is a run of $\mathcal{A}$,
    then $s_i = \langle \Sigma_0, M_{i-\rho}, \dots, M_{i-1}, M_i \rangle$
    for each $0 \leq i \leq n$.
\end{restatable}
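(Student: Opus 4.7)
The plan is to prove the claim by induction on $i$, following the same template as the proof of Claim~\ref{claim:query-cont-upper-correct-1}. In fact, since the automaton $\mathcal{A}$ is deterministic by construction (for each source state and each compatible alphabet symbol there is exactly one outgoing transition), the statement can also be obtained as an immediate consequence of Claim~\ref{claim:query-cont-upper-correct-1}: any run on $\langle \Sigma_0, \ldots, \Sigma_n \rangle$ must agree step by step with the one exhibited there. Nevertheless I describe the direct inductive argument, since it uses the same machinery and clarifies the correspondence between automaton states and temporal entailment.

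For the base case $i=0$, the only transition out of $s_{\initstate}$ that is compatible with the rigid-only symbol $\Sigma_0$ produces $\langle \Sigma_0, \emptyset, \ldots, \emptyset \rangle$, which matches $\langle \Sigma_0, M_{-\rho}, \ldots, M_0 \rangle$ by the stipulation $M_j = \emptyset$ for $-\rho \leq j \leq 0$.

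For the inductive step, assume $s_{i-1} = \langle \Sigma_0, M_{i-1-\rho}, \ldots, M_{i-1} \rangle$. Applying the transition rule with input $\Sigma_i$, the first component of $s_i$ is again $\Sigma_0$ and the next $\rho$ components are the tail-shift of the source, yielding $M_{i-\rho}, \ldots, M_{i-1}$ as required. The core task is to show that the last component of $s_i$ coincides with $M_i = \{P \mid \Pi_Q \cup D \models P(i-1)\}$. By the automaton construction, this component equals $\{P \mid \Pi_Q \cup \Sigma_0 \cup H \cup U \models P(\rho)\}$ where $H$ is assembled from the predicates stored in $s_{i-1}$ (placed at times $0,\ldots,\rho-1$) and $U$ encodes $\Sigma_i$ at time $\rho$. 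Using Proposition~\ref{prop:syntactic-program-window} to bound the relevant window of $D$ to size $\rho$ around time $i-1$, together with the fact that $\Pi_Q$ contains no explicit time points (so entailments are invariant under time translation), the desired equality reduces to verifying that after shifting time points by $\rho - i + 1$ the window of $D$ yields precisely $H$ and $U$; this is where the inductive hypothesis on $s_{i-1}$ is plugged in.

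The main obstacle, as in the proof of Claim~\ref{claim:query-cont-upper-correct-1}, is to handle the index arithmetic cleanly when matching the automaton's $H,U$ against the time-shifted images of the entailed window of $D$, and to verify that both the temporal-IDB part (via the inductive hypothesis) and the temporal-EDB part (via the definition of $D$ in terms of $\Sigma_i$) line up at exactly the right time arguments. Once the matching $H = H^{\mathrm{s}}$ and $U = U^{\mathrm{s}}$ is established, the equality of the last component of $s_i$ with $M_i$ follows verbatim from the auxiliary sub-claims already used in the proof of Claim~\ref{claim:query-cont-upper-correct-1}.
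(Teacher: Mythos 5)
Your direct inductive argument is essentially the paper's own proof: induction on the position in the run, with the first $\rho+1$ components of $s_i$ handled by the tail-shift in the transition function (using the inductive hypothesis on $s_{i-1}$) and the last component matched to $M_i$ via Proposition~\ref{prop:syntactic-program-window}, a time translation by $\rho-i+1$, and the sub-claims $H=H^{\mathrm{s}}$ and $U=U^{\mathrm{s}}$. Your side remark that the claim alternatively follows from Claim~\ref{claim:query-cont-upper-correct-1} together with the determinism of $\mathcal{A}$ is also sound---the dataset built from the word regenerates exactly that word, so the canonical run exhibited there is the unique run---but the paper does not take that shortcut.
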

\begin{proof}
    Consider a run
    $\langle s_{\initstate}, \Sigma_0, s_0, \dots, \Sigma_n, s_n \rangle$ 
    of $\mathcal{A}$.
    We prove the claim by induction on $n$.

    In the base case $n=0$, hence 
    the considered run is $\langle s_{\initstate}, \Sigma_0, s_0 \rangle$,
    and hence the claim holds since 
    $\delta(s_{\initstate}, \Sigma_0) =  \langle \Sigma_0, \emptyset, \dots, 
    \emptyset \rangle$ by construction.

    In the inductive case $n > 0$, and we assume that the claim holds if we
    replace $n$ with $n-1$.
    In particular, the inductive hypothesis implies that
    $s_{n-1} = \langle \Sigma_0, M_{n-\rho-1}, \dots, M_{n-2}, M_{n-1} \rangle$.
    By the construction of $\delta$,
    we have that $s_n$ is of the form 
    $\langle \Sigma_0, M^0, \dots, M^{\rho-1}, M^{\rho} \rangle$
    with $M^0, \dots, M^{\rho-1}, M^{\rho}$ sets of temporal predicates.
    In order to prove the claim,
    it suffices to show that $M^{\rho-i} = M_{n-i}$ for each 
    $0 \leq i \leq \rho$.
    By the construction of $\delta$, we have the following.

    \smallskip\noindent
    \emph{Claim~\ref{claim:query-cont-upper-correct-2}.1.} 
    It holds that $M^{\rho-i} = M_{n-i}$ for each $0 < i \leq \rho$.
    \smallskip\par

    Hence, we are left to prove $M^{\rho} = M_n$.
    Let $H$ be the set consisting of each fact $P(i)$ for $P \in M^i$ and 
    $0 \leq i < \rho$,
    and let $U$ be the set consisting of each fact $P(\rho)$ for 
    $P \in \Sigma_n$.
    Hence, according to the construction of $\delta$,
    it suffices to show that
    $M_n = \{ P \mid \Pi_Q \cup \Sigma_0 \cup H \cup U \models P(\rho) \}$.
    Now, we have that $M_n = \{ P \mid \Pi_Q \cup D \models P(n-1) \}$ by
    construction.
    Let $H'$ be the set of temporal facts entailed by 
    $\Pi_Q \cup \Sigma_0 \cup \restrict{D}{[0,n-1)}$ and having time argument in 
    $[n-1-\rho,n-1)$.
    It follows that 
    $M_n = \{ P \mid \Pi_Q \cup \Sigma_0 \cup H' \cup \restrict{D}{n-1} 
    \models P(n-1) \}$ 
    by Proposition~\ref{prop:syntactic-program-window}.
    Let $H^\mathrm{s}$ and $U^\mathrm{s}$ be $H'$ and $\restrict{D}{n-1}$,
    respectively, after replacing each time point $\tau$ with 
    $\tau - n + 1 + \rho$. Note that the two former datasets are well-formed
    since each time point occurring in them is at least $n-1-\rho$.
    It follows that
    $M_n = \{ P \mid \Pi_Q \cup \Sigma_0 \cup H^\mathrm{s} \cup  U^\mathrm{s}
    \models P(\rho) \}$ since $Q$ is an fp-query---in particular, it mentions no
    time point.
    Finally, 
    $M_n = \{ P \mid \Pi_Q \cup \Sigma_0 \cup H \cup U \models P(\rho) \}$ 
    holds by Claim~\ref{claim:query-cont-upper-correct-2}.2 and
    Claim~\ref{claim:query-cont-upper-correct-2}.3, which are given next.

    \smallskip\noindent
    \emph{Claim~\ref{claim:query-cont-upper-correct-2}.2.} 
    It holds that $H = H^\mathrm{s}$.

    We first show $H \subseteq H^\mathrm{s}$.
    Let $\alpha \in H$.
    By the definition of $H$,
    we have that $\alpha$ is a temporal fact of the form $P(i)$ with 
    $P \in M^i$ and $0 \leq i < \rho$.
    Let $j = \rho-i$.
    It follows that $P \in M^{\rho-j}$,
    hence $P \in M_{n-j}$ by 
    Claim~\ref{claim:query-cont-upper-correct-2}.1,
    hence $P \in M_{n-\rho+i}$,
    hence $\Pi_Q \cup D \models P(n-\rho+i-1)$ by the construction of 
    $M_{n-\rho+i}$,
    hence $\Pi_Q \cup \restrict{D}{[0,n-1)} \models P(n-\rho+i-1)$
    by Proposition~\ref{proposition:delay-zero} and monotonicity,
    hence $P(n-\rho+i-1) \in H'$,
    hence $P(i) \in H^\mathrm{s}$,
    and hence $\alpha \in H^\mathrm{s}$.

    We now show $H^\mathrm{s} \subseteq H$.
    Let $\alpha \in H^\mathrm{s}$.
    We have that $\alpha$ is a temporal fact of the form $P(i)$ such that
    $P(n-1-\rho+i) \in H'$ by the construction of $H^\mathrm{s}$.
    It follows that
    $\Pi_Q \cup D \models P(n-1-\rho+i)$ by the construction of $H'$,
    hence $P \in M_{n-\rho+i}$ by the construction of $M_{n-\rho+i}$.
    Let $j = \rho-i$.
    It follows that $P \in M_{n-j}$,
    hence $P \in M^{\rho-j}$ 
    by Claim~\ref{claim:query-cont-upper-correct-2}.1,
    hence $P \in M^i$,
    hence $P(i) \in H$ by the construction of $H$,
    and hence $\alpha \in H$.

    This concludes the proof of Claim~\ref{claim:query-cont-upper-correct-2}.2.

    \smallskip\noindent
    \emph{Claim~\ref{claim:query-cont-upper-correct-2}.3.} 
    It holds that $U = U^\mathrm{s}$.

    We first show $U \subseteq U^\mathrm{s}$.
    Let $\alpha \in U$.
    We have that $\alpha$ is a temporal fact of the form $P(\rho)$ with 
    $P \in \Sigma_n$.
    It follows that $P(n-1) \in D$ by the construction of $D$,
    hence $P(\rho) \in U^\mathrm{s}$ by the construction of $U^\mathrm{s}$,
    and hence $\alpha \in U^\mathrm{s}$.

    We now show $U^\mathrm{s} \subseteq U$.
    Let $\alpha \in U^\mathrm{s}$.
    We have that $\alpha$ is a temporal fact of the form $P(i)$ such that
    $P(n-1) \in D$,
    hence $P \in \Sigma_n$ by the construction of $D$,
    hence $P(\rho) \in U$ by the construction of $U$,
    and hence $\alpha \in U$.

    This concludes the proof of Claim~\ref{claim:query-cont-upper-correct-2}.3,
    and hence the overall proof.
\end{proof}

\subsubsection{Proof of the Main Claim}

To show Lemma~\ref{lemma:automata-bound}, we first observe two properties of
our automata.

Let $Q_1$ and $Q_2$ be temporal object-ground fp-queries sharing an output 
predicate $G$.
For simplicity, and without loss of generality, we assume that $Q_1$ and $Q_2$ 
contain no object terms and hence all predicates in the queries are either
nullary or unary and temporal.
Furthermore, let $\mathcal{A}_1$ and $\mathcal{A}_2$ be the automata for $Q_1$
and $Q_2$, respectively, built as described in the previous section.

\begin{claim} \label{claim:automata-correctness-1}
    If $Q_1 \not\sqsubseteq Q_2$,
    then there exists a word of length at least $2$ that is accepted by
    $\mathcal{A}_1$ and not by $\mathcal{A}_2$.
\end{claim}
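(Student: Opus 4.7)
The plan is to transform the semantic witness of non-containment into an automata-theoretic witness using the encoding of datasets as words provided by Claim~\ref{claim:query-cont-upper-correct-1}.

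First, since $Q_1 \not\sqsubseteq Q_2$, I fix a dataset $D$ and a time point $\tau \geq 0$ with $G(\tau) \in Q_1(D)$ and $G(\tau) \notin Q_2(D)$. Using Proposition~\ref{prop:relevant-time}, any derivation of $G(\tau)$ from $\Pi_{Q_1} \cup D$ uses only time points in $[0,\tau]$, so I can safely restrict $D$ to its rigid facts together with its temporal facts with time argument in $[0,\tau]$; this preserves $G(\tau) \in Q_1(D)$ and, by monotonicity, also $G(\tau) \notin Q_2(D)$. By the same monotonicity argument I can further assume that the EDB predicates occurring in $D$ all occur in $Q_1$ or in $Q_2$, since facts over other predicates contribute to neither derivation.

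Next, following the encoding of Claim~\ref{claim:query-cont-upper-correct-1}, I define the word $w = \langle \Sigma_0, \Sigma_1, \ldots, \Sigma_{\tau+1} \rangle$, where $\Sigma_0$ is the set of rigid EDB predicates appearing in $D$ and, for each $1 \leq i \leq \tau+1$, $\Sigma_i = \{ A \mid A(i-1) \in D \}$. By construction, $\Sigma_0$ consists of rigid predicates and every $\Sigma_i$ with $i \geq 1$ consists of temporal predicates, so $w$ is a legal input for each automaton. The word has length $\tau+2 \geq 2$, as required.

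Finally, I apply Claim~\ref{claim:query-cont-upper-correct-1} to each $\mathcal{A}_i$ with input $D$ and $n = \tau+1$. The claim produces a run of $\mathcal{A}_i$ on $w$ ending in a state whose last component is exactly $M^i_{\tau+1} = \{ P \mid \Pi_{Q_i} \cup D \models P(\tau) \}$, and $\mathcal{A}_i$ accepts iff $G \in M^i_{\tau+1}$, which in turn holds iff $G(\tau) \in Q_i(D)$. Therefore $\mathcal{A}_1$ accepts $w$ while $\mathcal{A}_2$ does not. The construction has no genuinely hard step; the main thing to be careful about is the preliminary restriction of $D$ so that $w$ is well-defined as an input for both automata and so that the encoding faithfully reflects the semantic witness at the chosen time point.
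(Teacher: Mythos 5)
Your proof is correct and follows essentially the same route as the paper: encode the witness dataset $D$ as a word via the correspondence of Claim~\ref{claim:query-cont-upper-correct-1} and read off acceptance of each $\mathcal{A}_i$ from the last component of the final state. The only real difference is that for non-acceptance by $\mathcal{A}_2$ you implicitly rely on the determinism of the automata (so that the run produced by Claim~\ref{claim:query-cont-upper-correct-1} is the only run on $w$), whereas the paper argues by contradiction via Claim~\ref{claim:query-cont-upper-correct-2}; both are sound, and your preliminary restriction of $D$ to time points in $[0,\tau]$ and to EDB predicates of the two queries is a harmless (indeed helpful) tidying step.
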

\begin{proof}
    Let $\tau$ be a time point and let $D$ be a dataset such that
    $G(\tau) \in Q_1(D)$ and $G(\tau) \notin Q_2(D)$.
    Furthermore, let $\rho_1$ be the radius of $Q_1$,
    and let $n = \tau + 1$.
    By Claim~\ref{claim:query-cont-upper-correct-1},
    there exists a run 
    $\langle s_{\initstate}^1, \Sigma_0, s_0, \dots, \Sigma_n, s_n \rangle$ 
    where $\Sigma_0$ is the set of rigid facts in $D$, 
    each $\Sigma_i$ with $i > 0$ is the set $\{ A \mid A(i-1) \in D \}$,
    and $s_n$ is of the form 
    $\varrho = \langle \Sigma_0, M_{n-\rho_1}, \dots, M_n \rangle$ 
    with $M_n = \{ P \mid \Pi_{Q_1} \cup D \models P(\tau) \}$.
    Since $G(\tau) \in Q_1(D)$, 
    we have that $G \in M_n$, 
    hence $s_n$ is final,
    hence $\varrho$ is an accepting run of $\mathcal{A}_1$,
    and hence $\mathcal{A}_1$ accepts the word 
    $w = \langle \Sigma_0, \dots, \Sigma_n \rangle$.
    Note that $w$ has length $n+1$, and hence at least $2$ as required.

    It suffices to show that $\mathcal{A}_2$ does not accept $w$.
    We prove it by contradiction, assuming that $\mathcal{A}_2$ accepts $w$.
    There exists an accepting run 
    $\varrho' = \langle s_{\initstate}^2, \Sigma_0,
    s_0', \dots, \Sigma_n, s_n' \rangle$.
    Let $\rho_2$ be the radius of $Q_2$,
    and let 
    $D' = \{ A(i-1) \mid 1 \leq i \leq n \text{, and } A \in \Sigma_i \}$.
    By Claim~\ref{claim:query-cont-upper-correct-2}, we have that
    $s_n'$ is of the form $\langle \Sigma_0, M_{n-\rho_2}', \dots, M_n' \rangle$
    with $M_n' = \{ P \mid \Pi_{Q_2} \cup \Sigma_0 \cup D' \models P(\tau) \}$.
    It follows that $G \in M_n'$ since $\varrho'$ is accepting,
    and hence $G(\tau) \in Q_2(\Sigma_0 \cup D')$.
    Therefore $G(\tau) \in Q_2(D)$ since
    $\Sigma_0 \cup D'  = D$, which contradicts our initial assumption.
\end{proof}

\begin{claim} \label{claim:automata-correctness-2}
    For each word $w$ of length $n \geq 2$ 
    accepted by $\mathcal{A}_1$ and not by $\mathcal{A}_2$,
    there exists a dataset $D$ over time points in $[0,n-2]$ such that
    $G(n-2) \in Q_1(D)$ and $G(n-2) \notin Q_2(D)$.
\end{claim}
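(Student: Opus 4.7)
\textbf{Proof plan for Claim~\ref{claim:automata-correctness-2}.}
Given a word $w = \langle \Sigma_0, \Sigma_1, \dots, \Sigma_{n-1} \rangle$ of length $n \geq 2$ that is accepted by $\mathcal{A}_1$ but not by $\mathcal{A}_2$, the plan is to build $D$ from $w$ in exactly the way suggested by Claim~\ref{claim:query-cont-upper-correct-2}, namely
\[
    D = \Sigma_0 \cup \{ A(i-1) \mid 1 \leq i \leq n-1 \text{ and } A \in \Sigma_i \},
\]
and then read off the required (non-)derivations of $G(n-2)$ from the runs of $\mathcal{A}_1$ and $\mathcal{A}_2$ on $w$. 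Because $w$ is a word in the (common) input alphabet, $\Sigma_0$ can only contain rigid predicates and each $\Sigma_i$ with $i > 0$ can only contain temporal ones, so $D$ is a well-formed dataset and all its time points lie in $[0, n-2]$.

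Since $\mathcal{A}_1$ accepts $w$, there exists a unique run $\langle s_{\initstate}^1, \Sigma_0, s_0^1, \dots, \Sigma_{n-1}, s_{n-1}^1 \rangle$ of $\mathcal{A}_1$ on $w$ ending in a final state $s_{n-1}^1$. Applying Claim~\ref{claim:query-cont-upper-correct-2} to this run (with $n$ in the statement of that claim replaced by $n-1$), we obtain that $s_{n-1}^1$ is of the form $\langle \Sigma_0, M_{n-1-\rho_1}, \dots, M_{n-1} \rangle$ with $M_{n-1} = \{ P \mid \Pi_{Q_1} \cup D \models P(n-2) \}$. Finality of $s_{n-1}^1$ forces $G \in M_{n-1}$, so $\Pi_{Q_1} \cup D \models G(n-2)$, i.e.\ $G(n-2) \in Q_1(D)$.

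For the second half, I would like to argue symmetrically that $\mathcal{A}_2$ has a (necessarily non-accepting) run on $w$, and then invoke Claim~\ref{claim:query-cont-upper-correct-2} once more. Here is the main obstacle I expect: strictly speaking, the alphabets of $\mathcal{A}_1$ and $\mathcal{A}_2$ are defined from the EDB signatures of $Q_1$ and $Q_2$ respectively, so a priori $\mathcal{A}_2$ might not even admit a run on $w$. This is a technicality already needed to make sense of ``$w$ is not accepted by $\mathcal{A}_2$'' in the proof sketch of Lemma~\ref{lemma:automata-bound}, and the standard way to handle it is to work throughout over a common alphabet built from the union of the two EDB signatures: the construction extends to such an enlarged alphabet without change (unknown predicates simply do not fire any rule, so the $M'_{\rho_i}$ component is computed as before), and under this reading both automata are complete over the same alphabet.

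With that convention in place, $\mathcal{A}_2$ has a run $\langle s_{\initstate}^2, \Sigma_0, s_0^2, \dots, \Sigma_{n-1}, s_{n-1}^2 \rangle$ on $w$, and its final state $s_{n-1}^2$ must fail to be final. A second application of Claim~\ref{claim:query-cont-upper-correct-2}, this time to $\mathcal{A}_2$ and the same dataset $D$, yields that $s_{n-1}^2 = \langle \Sigma_0, M'_{n-1-\rho_2}, \dots, M'_{n-1} \rangle$ with $M'_{n-1} = \{ P \mid \Pi_{Q_2} \cup D \models P(n-2) \}$; non-finality then gives $G \notin M'_{n-1}$, hence $G(n-2) \notin Q_2(D)$. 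Combining the two halves gives the required $D$, completing the proof.
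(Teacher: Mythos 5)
Your proof is correct. The first half---building $D$ from $w$ and reading $G(n-2) \in Q_1(D)$ off the final state of $\mathcal{A}_1$'s accepting run via Claim~\ref{claim:query-cont-upper-correct-2}---is exactly the paper's argument, including the index shift. For the non-membership half you diverge slightly: you apply Claim~\ref{claim:query-cont-upper-correct-2} a second time, now to the (unique, non-accepting) run of $\mathcal{A}_2$ on $w$, which obliges you to first establish that such a run exists; the paper instead argues by contradiction using Claim~\ref{claim:query-cont-upper-correct-1} in the dataset-to-run direction---assuming $G(n-2) \in Q_2(D)$, it manufactures an accepting run of $\mathcal{A}_2$ on the word induced by $D$ (which is $w$ itself), so it never needs to know in advance that $\mathcal{A}_2$ can read $w$. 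Your route therefore leans on determinism and on totality of the transition function on words of the form (one rigid symbol followed by temporal symbols), whereas the paper gets existence of the run for free from Claim~\ref{claim:query-cont-upper-correct-1}; both are sound, and the two arguments are really two sides of the same coin. Your point about the common alphabet is well taken: the paper defines each $\mathcal{A}_i$ over the EDB predicates of $Q_i$ only and silently assumes the union signature when comparing the two automata (this is also needed for the product bound in Lemma~\ref{lemma:automata-bound}), and your fix---enlarging the alphabet, with predicates foreign to $Q_i$ contributing nothing to the entailment step that computes the new last component of a state---is the standard one and is harmless.
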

\begin{proof}
    Let $w = \langle \Sigma_0, \dots, \Sigma_{n-1} \rangle$ be a word
    with $n \geq 2$.
    Assume that $\mathcal{A}_1$ accepts $w$ and $\mathcal{A}_2$ does not
    accept $w$.
    There exists an accepting run 
    $\varrho = \langle s_{\initstate}^1, \Sigma_0, s_0, \dots, 
    \Sigma_{n-1}, s_{n-1} \rangle$ 
    of $\mathcal{A}_1$.
    Let $D$ be the dataset 
    $\Sigma_0 \cup \{ A(i-1) \mid 1 \leq i \leq n-1 \text{, } 
    A \in \Sigma_i \}$.
    Note that $D$ is over time points in $[0,n-2]$ as required.
    Let $\rho_1$ be the radius of $Q_1$.
    By Claim~\ref{claim:query-cont-upper-correct-2}, 
    we have that 
    $s_{n-1} = \langle \Sigma_0, M^0, M^1, \dots, M^{\rho_1} \rangle$
    where $M^{\rho_1} = \{ P \mid \Pi_{Q_1} \cup D \models P(n-2) \}$.
    Since $\varrho$ is accepting, we have that $s_{n-1}$ is final,
    hence $G \in M^{\rho_1}$,
    and hence $G(n-2) \in Q_1(D)$.

    It suffices to show that $G(n-2) \notin Q_2(D)$.
    We prove it by contradiction, assuming that $G(n-2) \in Q_2(D)$.
    Let $\rho_2$ be the radius of $Q_2$,
    let $M_i = \emptyset$ for each $-\rho_2 \leq i \leq 0$,
    let $M_i = \{ P \mid \Pi_{Q_2} \cup D \models P(i-1) \}$ for each 
    $1 \leq i \leq n-1$,
    and let $s_i' = \langle B, M_{i-\rho_2}, \dots, M_i \rangle$
    for each $0 \leq i \leq n-1$.
    By Claim~\ref{claim:query-cont-upper-correct-1},
    we have that
    $\varrho' = \langle s_{\initstate}^2, \Sigma_0, s_0', 
    \dots, \Sigma_{n-1}, s_{n-1}' \rangle$ 
    is a run of $\mathcal{A}_2$.
    Since $G(n-2) \in Q_2(D)$ by our assumption,
    we have that $s_{n-1}'$ is final, and hence $\varrho'$ is an accepting run of 
    $\mathcal{A}_2$.
    Therefore $\mathcal{A}_2$ accepts $w$, which contradicts our initial 
    assumption.
\end{proof}

\lemmaautomatabound*
\begin{proof}
    Let $\tau$ be a time point and let $D$ be a dataset such that 
    $G(\tau) \in Q_1(D)$ and $G(\tau) \notin Q_2(D)$.
    By Claim~\ref{claim:automata-correctness-1},
    there exists a word of length at least $2$ accepted by $\mathcal{A}_1$ and 
    not by $\mathcal{A}_2$.
    Let $N_i$ be the number of states of $\mathcal{A}_i$;
    also note that $N_i \leq b_i$, since the set of 
    states of $\mathcal{A}_i$ consists of one initial state plus each
    $(\rho_i+2)$-tuple where each component is a subset of the predicates
    occurring in $Q_i$.
    By standard automata results, it follows that there exists a word of
    length $n$ with $2 \leq n \leq N_1 \cdot N_2 \leq b_1 \cdot b_2$ that is
    accepted by $\mathcal{A}_1$ and not by $\mathcal{A}_2$.
    By Claim~\ref{claim:automata-correctness-2},
    it follows that there exists a dataset $D'$ over time points in $[0,n-2]$ 
    such that $G(n-2) \in Q_1(D')$ and $G(n-2) \notin Q_2(D')$.
    Therefore $n-2$ and $D'$ are the desired time point and dataset, 
    respectively.
\end{proof}

\subsection{Proof of Theorem~\ref{thm:upper-bounds-nr}}

\begin{lemma} \label{lemma:ognr-containment-upper}
    $\containment^{\ognrclass}$ is in \conptime{}.
\end{lemma}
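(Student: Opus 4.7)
The plan is to reduce $\containment^{\ognrclass}$ to containment of non-recursive propositional Datalog queries, and then invoke the standard $\conptime$ algorithm for the latter. Given two object-ground non-recursive fp-queries $Q_1$ and $Q_2$ with common output predicate, I would first argue that the relevant time window for witnessing non-containment has polynomial size. Since $Q_1$ and $Q_2$ are non-recursive, the depth of any derivation tree is bounded by the length of the longest path in the (acyclic) predicate dependency graph. Each application of an fp-rule increases the time argument by at most the radius of the rule, which is polynomial in the query size under unary encoding. Thus, if $Q_1 \not\sqsubseteq Q_2$, there is a witness dataset $D$ whose temporal facts lie in a polynomial interval $[0,B]$ (up to a shift of the time origin).

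Next, I would obtain propositional translations $Q_1'$ and $Q_2'$ by grounding every temporal variable over $[0,B]$, replacing each resulting ground temporal atom $P(\tau)$ with a fresh propositional symbol $P_\tau$; rigid atoms are already essentially propositional since the queries are object-ground. The resulting programs inherit non-recursivity from $Q_1$ and $Q_2$, and have polynomial size in the input. Correctness of the reduction, $Q_1 \sqsubseteq Q_2$ iff $Q_1' \sqsubseteq Q_2'$, follows by translating counterexample datasets in both directions and using Proposition~\ref{prop:relevant-time} to confine derivations within the chosen window.

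Finally, I would invoke the well-known $\conptime$ algorithm for containment of non-recursive propositional Datalog queries: universally guess a set $D$ of propositional symbols from the translated queries, evaluate both queries on $D$ in polynomial time (non-recursive propositional Datalog is tractable), and accept iff $Q_1'(D) \subseteq Q_2'(D)$. Composing the polynomial-time reduction with this coNP procedure yields the desired bound.

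The main obstacle will be establishing the polynomial time window $B$ and rigorously proving both directions of the reduction's correctness. The bound combines the depth bound from non-recursivity with the polynomial radius from unary encoding; for the reduction, the delicate step is showing that an arbitrary counterexample dataset can be ``folded'' into $[0,B]$ without introducing spurious containment, which relies on Proposition~\ref{prop:relevant-time} and the forward-propagating structure to ensure that derivations only use a bounded window of time points that can be safely relocated.
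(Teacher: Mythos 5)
Your proposal is correct and follows essentially the same route as the paper: reduce containment of object-ground non-recursive fp-queries to containment of non-recursive propositional Datalog queries, and then run the standard \conptime{} guess-and-check, guessing a polynomially bounded candidate dataset and evaluating both queries in polynomial time. The only difference is that the paper obtains the reduction to the propositional case by citing prior work \cite{ronca2017stream}, whereas you reconstruct it explicitly via the polynomial time-window bound (derivation depth from non-recursivity times the radius, under unary encoding) and temporal grounding; that reconstruction is sound.
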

\begin{proof}
    First, note that $\containment^{\ognrclass}$ is 
    $\logspace$-reducible to $\containment^{\qclass}$, with 
    $\qclass$ the Datalog subset of $\ognrclass$, by the results in 
    \cite{ronca2017stream}.
    Therefore it suffices to show that the complement of the latter problem is 
    in \nptime{}.
    We give an algorithm, with input consisting of two object-ground
    non-recursive Datalog queries $Q_1$ and $Q_2$.
    The algorithm guesses a subset $D$ of the EDB atoms 
    occurring in $\Pi_{Q_1} \cup \Pi_{Q_2}$, 
    and then accepts if $Q_1(D) \not\subseteq Q_2(D)$.
    It is correct because it clearly suffices to consider datasets which are
    subsets of the EDB atoms occurring in $\Pi_{Q_1} \cup \Pi_{Q_2}$.
    It runs in polynomial time because each guessed dataset $D$ is of polynomial 
    size and evaluation of propositional queries is in \ptime{}---it amounts to 
    Horn satisfiability.
\end{proof}

\noindent
For the following theorem, note that $\mathcal{O}$ is the class of dataset over
objects from a given (but arbitrary) set of objects $O$.

\thmupperboundsnr*
\begin{proof}
    We start by noting that 
    $\window^{\ognrclass}$ is $\logspace$-reducible to 
    $\containment^{\ognrclass}$ 
    by Theorem~\ref{th:window-containment-interreducible-prev},
    which is in \conptime{} by Lemma~\ref{lemma:ognr-containment-upper}.
    Note also that, given any query $Q \in \nrclass$, we can ground its object 
    variables over $O$ in time asymptotically bounded by 
    $|\Pi_Q| \cdot |O + O^Q|^k$, where $O^Q$ is the set of objects occurring in 
    $Q$, and $k$ is the maximum number of object variables in a rule of $Q$; 
    such a grounding yields a query $Q' \in \ognrclass$ equivalent to $Q$.
    So, given an instance $I = \langle Q, w \rangle$ of 
    $\window^{\nrclass}_\mathcal{O}$, we can first map it to
    $I' = \langle Q', w \rangle$ with $Q'$ the object-grounding of $Q$, and then
    decide whether $\window^{\ognrclass}$ holds for $I'$ in
    $\conptime$, and hence whether $\window^{\nrclass}_\mathcal{O}$ holds for 
    $I$ in $\conexptime$, since $I'$ is exponential in $I$.
    For any class $\qclass \subseteq \nrclass$ where the maximum number of 
    object variables in any rule is bounded by a constant---i.e., $k$ in the
    expression $|\Pi_Q| \cdot |O + O^Q|^k$ can be considered fixed---we can 
    compute $I'$ in polynomial time, and hence we can decide whether 
    $\window^{\qclass}_\mathcal{O}$ holds for $I$ in $\conptime$.
\end{proof}

%%% Local Variables:
%%% mode: latex
%%% TeX-master: "main"
%%% End:

\subsection{Proof of Theorem~\ref{thm:semipropositional-containment-lower-bound}}

\semipropositionalcontainmentlowerbound*
\begin{proof}
    It suffices to show hardness for $\containment^{\ogclass}$
    which is $\logspace$-reducible to $\window^{\ogclass}$ by
    Theorem~\ref{th:window-containment-interreducible-prev}.
    Consider the reduction from containment of 
    succinct regular expressions to $\containment_\mathcal{O}^{\fpclass}$
    given in the following Section~\ref{sec:appendix-expspace-lower}.
    If the construction given there is restricted to (ordinary) regular
    expression, then it is easy to see that we obtain a reduction from 
    containment of regular expressions to $\containment^{\ogclass}$.
    The result then follows from the fact that containment of regular
    expressions is $\pspacecomplete$---see, e.g., \cite{sipser}.
\end{proof}

\subsection{Proof of Theorem~\ref{thm:bodcontainment-expspacehard}}
\label{sec:appendix-expspace-lower}

We first provide the full version of the query construction described in the
proof sketch of Theorem~\ref{thm:bodcontainment-expspacehard}.  We then show
the correctness of the construction and, finally, use it for proving
Theorem~\ref{thm:bodcontainment-expspacehard}.

\subsubsection{Query Construction}
Consider a succinct regular expressions (SRE) $R$ over a finite alphabet 
$\Sigma$---see, e.g., \cite{sipser} for the definition of SRE.
We build a query $Q_R = \langle G, \Pi_\mathrm{succ} \cup \Pi_R \rangle$ where
$\Pi_R$ will be defined inductively over the structure of $R$, and
$\Pi_\mathrm{succ}$ is a Datalog program that defines `successor' 
predicates $succ^m$.

Next, we define the program $\Pi_\mathrm{succ}$.
Let $\bar{0}$ and $\bar{1}$ be two fresh objects, intuitively standing for zero
and one respectively.
We use $\vec{x}$, $\vec{\bar{0}}$, and $\vec{\bar{1}}$ for denoting
tuples of fresh variables, $\bar{0}$'s, and $\bar{1}$'s, respectively.
We denote the length of a tuple $\vec{t}$ as $|\vec{t}|$.
Let $B$ be a fresh unary temporal IDB predicate,
and let $\mathit{succ}^m$ be a rigid IDB predicate of arity $2m$ for $m > 0$.
Let $\Pi_\mathrm{succ}^m$ for $m > 0$ be the program consisting of 
rule~\eqref{eq:binary-counting-1},
rule~\eqref{eq:binary-counting-2},
and each rule of the form~\eqref{eq:binary-counting-3} for $0 \leq i < m$ where
$|\vec{x}| = i$ and $|\vec{\bar{1}}| = |\vec{\bar{0}}| = m-i-1$.
\begin{align}
    \label{eq:binary-counting-1}
    &\to B(\bar{0}) \\
    \label{eq:binary-counting-2}
    &\to B(\bar{1}) \\
    \label{eq:binary-counting-3}
    \textstyle \bigwedge_{j=1}^i B(x_j) &\to 
    \mathit{succ}^m(\vec{x}, \bar{0}, \vec{\bar{1}}, 
    \vec{x}, \bar{1}, \vec{\bar{0}})
\end{align}
Each program $\Pi_\mathrm{succ}^m$ and its corresponding predicate 
$\mathit{succ}^m$ describe a finite successor relationship. 
Formally,
$\Pi_\mathrm{succ}^m \models \mathit{succ}^m(\vec{i}, \vec{j})$
holds if and only if 
(i)~$\vec{i}$ and $\vec{j}$ are $m$-tuples over 
$\{ \bar{0}, \bar{1} \}$, and
(ii)~$i + 1 = j$ for $i$ and $j$ the numbers encoded by 
$\vec{i}$ and $\vec{j}$, respectively.
Finally, $\Pi_\mathrm{succ}$ is the union of each $\Pi_\mathrm{succ}^m$ for
$m = \lceil \log_2 k \rceil$ and $k$ an exponent occurring in $R$.

Next we define the program $\Pi_R$.
Let $G$ be a fresh temporal unary IDB predicate,
let $F$ be a fresh temporal unary EDB predicate, and
let $A_{\sigma}$ be a fresh temporal unary EDB predicate for 
$\sigma \in \Sigma$.
For $\Pi$ a program,
we denote with $\phi(\Pi)$ and $\psi(\Pi)$ the programs obtained from $\Pi$ 
by renaming each predicate $P$ not in 
$\{A_{\sigma}\mid \sigma \in \Sigma\}$ and different from any $\mathit{succ}^m$ 
to globally fresh predicates $P^{\phi}$ and $P^{\psi}$, respectively, of 
the same arity as $P$---note that this renaming notation is different
from the one used in the proof sketch, which we believe to be more succinct but
less readable.
The program $\Pi_R$ is defined below, following the inductive definition of SRE.
We have three base cases where we define $\Pi_R$ from scratch, and four
inductive cases where we need to assume that we are given the programs for the
subexpressions of $R$.

\smallskip\par\noindent
\emph{Base case 1.}
It is the case where $R = \emptyset$.
Then, $\Pi_R$ is the empty program.

\smallskip\par\noindent
\emph{Base case 2.}
It is the case where $R = \sigma$ for $\sigma \in \Sigma$.
Then, $\Pi_R$ consists of the following rule.
\begin{equation}
    \label{eq:lower-bc2}
    F(t) \land A_{\sigma}(t) \to G(t+1)
\end{equation}

\smallskip\par\noindent
\emph{Base case 3.}
It is the case where $R = \varepsilon$.
Then, $\Pi_R$ consists of the following rule.
\begin{equation}
    \label{eq:lower-bc3}
    F(t) \to G(t)
\end{equation}

\smallskip\par\noindent
\emph{Inductive case 1.}
It is the case where $R = S \cup T$ for $S$ and $T$ SREs.
Then, $\Pi_R$ extends $\phi(\Pi_{S}) \cup \psi(\Pi_{T})$ with the following
rules.
\begin{align}
    \label{eq:lower-ic1-rule1}
    F(t) &\to F^{\phi}(t) \\
    \label{eq:lower-ic1-rule2}
    F(t) &\to F^{\psi}(t) \\
    \label{eq:lower-ic1-rule3}
    G^{\phi}(t) &\to G(t) \\
    \label{eq:lower-ic1-rule4}
    G^{\psi}(t) &\to G(t)
\end{align}

\smallskip\par\noindent
\emph{Inductive case 2.}
It is the case where $R = S \circ T$ for $S$ and $T$ SREs.
Then, $\Pi_R$ extends $\phi(\Pi_{S}) \cup \psi(\Pi_{T})$ with the following
rules.
\begin{align}
    \label{eq:lower-ic2-rule1}
    F(t) &\to F^{\phi}(t) \\
    \label{eq:lower-ic2-rule2}
    G^{\phi}(t) &\to F^{\psi}(t) \\
    \label{eq:lower-ic2-rule3}
    G^{\psi}(t) &\to G(t)
\end{align}

\smallskip\par\noindent
\emph{Inductive case 3.}
It is the case where $R = S^+$ for $S$ an SRE.
Then, $\Pi_R$ extends $\phi(\Pi_S)$ with the following rules.
\begin{align}
    \label{eq:lower-ic3-rule1}
    F(t) &\to F^{\phi}(t) \\
    \label{eq:lower-ic3-rule2}
    G^{\phi}(t) &\to F^{\phi}(t) \\ 
    \label{eq:lower-ic3-rule3}
    G^{\phi}(t) &\to G(t)
\end{align}

\smallskip\par\noindent
\emph{Inductive case 4.}
It is the case where $R = S^k$ for $S$ an SRE and $k \geq 2$.
Let $m = \lceil \log_2 k \rceil$---i.e., the number of bits to encode numbers in
the interval $[0,k-1]$.
Let $\vec{x}$ and $\vec{y}$ be $m$-tuples of fresh object variables,
and let $P'$ be a fresh temporal IDB predicate of arity $n+m$ for each temporal
(EDB or IDB) predicate $P$ of arity $n$.
Then, $\Pi_R$ is constructed from $\Pi_S$ as follows. 
First, we replace each atom $P(\mathbf{p},s)$ with 
$P'(\vec{p}, \vec{x}, s)$, where $\vec{p}$ is a vector of object terms and $s$
is a temporal term.
Second, we extend the resulting program with the following rules, where 
$\vec{a}$ is the encoding of $k-1$ as a binary string over 
$\bar{0}$ and $\bar{1}$.
\begin{align}
    \label{eq:lower-ic4-rule1}
    \textstyle
    F(t) &\to F'(\vec{\bar{0}}, t) \\
    \label{eq:lower-ic4-rule2}
    \textstyle
    G'(\vec{a}, t) &\to G(t) \\
    \label{eq:lower-ic4-rule3}
    \textstyle
    G'(\vec{x}, t) \land \mathit{succ}^m(\vec{x}, \vec{y})  
    &\to F'(\vec{y}, t)
\end{align}

\subsubsection{Correctness of the Construction}
Query $Q_R$, defined as above, correctly captures its corresponding SRE $R$
in the sense of the following Claim~\ref{claim:sreog-2} and
Claim~\ref{claim:sreog-1}.
Note that $\lang(R)$ denotes the language of an SRE $R$.

%% Counter for subclaims in the proof of the next claim
\newcounter{SreOgTwoCounter}
\newcommand{\sreogtwoclaim}[1]{%
    \smallskip\par\noindent%
    \refstepcounter{SreOgTwoCounter}\label{#1}%
    \emph{Claim~\ref{claim:sreog-2}.\arabic{SreOgTwoCounter}. }%
}
\newcommand{\sreogtwoclaimref}[1]{%
    Claim~\ref{claim:sreog-2}.\ref{#1}%
}

\begin{claim} \label{claim:sreog-2}
    Let $R$ be an SRE, and let $\Pi_R$ be the program for $R$.
    Furthermore, let $w = \langle \sigma_1, \dots, \sigma_n \rangle$ be a word 
    in $\lang(R)$,
    let $\tau$ be a time point,
    and let $D$ be a dataset.
    Assume that $F(\tau) \in D$,
    and $A_{\sigma_i}(\tau+i-1) \in D$ for each $1 \leq i \leq n$.
    Then, $\Pi_R \cup \Pi_\mathrm{succ} \cup D \models G(\tau+n)$.
\end{claim}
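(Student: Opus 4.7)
The plan is to prove the claim by structural induction on the SRE $R$, showing in each case that the rules added by the construction suffice to propagate derivability from the hypothesized EDB facts to $G(\tau+n)$.

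The base cases are routine. For $R = \emptyset$ the language is empty, so the implication is vacuous. For $R = \sigma$ the only word has length one, and rule~(\ref{eq:lower-bc2}) applies directly to the EDB facts $F(\tau)$ and $A_\sigma(\tau)$. For $R = \varepsilon$ we have $n = 0$, and rule~(\ref{eq:lower-bc3}) fires on $F(\tau)$ to give $G(\tau)$.

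For the union, concatenation and Kleene plus cases, the strategy is always the same three-step pattern: first use an $F$-propagation rule to create the appropriately renamed seed $F^\phi$ or $F^\psi$ at the correct time point; then invoke the inductive hypothesis on the renamed subprogram (which is justified because $\phi$ and $\psi$ are fresh bijective renamings, so $\phi(\Pi_S)$ captures $S$ exactly as $\Pi_S$ does, mutatis mutandis); and finally use a $G$-collection rule to derive $G$. For concatenation I split $w = u v$ at the $\mathcal{L}(S)/\mathcal{L}(T)$ boundary, chain the derivations via rule~(\ref{eq:lower-ic2-rule2}), and conclude using rule~(\ref{eq:lower-ic2-rule3}); for Kleene plus I run a secondary induction on the number of pieces in a decomposition $w = w_1 \cdots w_\ell$, using rule~(\ref{eq:lower-ic3-rule2}) to restart the $F^\phi$ seed at each boundary.

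The main obstacle is the succinct exponentiation case $R = S^k$, since the renaming trick no longer suffices: each temporal atom $P(\mathbf{p}, s)$ in $\Pi_S$ has been inflated to $P'(\vec{p}, \vec{x}, s)$ with an extra binary counter argument of length $m = \lceil \log_2 k \rceil$. Here I would decompose $w = w_1 \cdots w_k$ with each $w_i \in \mathcal{L}(S)$ and carry out a secondary induction on $j \in \{0, \ldots, k-1\}$ establishing that $F'(\vec{b}_j, \tau + \sum_{i \le j} |w_i|)$ is derivable, where $\vec{b}_j$ is the $m$-bit binary encoding of $j$. The base $j = 0$ uses rule~(\ref{eq:lower-ic4-rule1}). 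For the step, the key observation is that fixing the extra counter argument to the constant tuple $\vec{b}_j$ in every atom of the modified $\Pi_S$ yields a program that is, up to renaming, isomorphic to $\Pi_S$ itself; therefore the outer inductive hypothesis applied to this ``slice'' yields $G'(\vec{b}_j, \tau + \sum_{i \le j+1} |w_i|)$. Combining this with $\Pi_\mathrm{succ}^m \models \mathit{succ}^m(\vec{b}_j, \vec{b}_{j+1})$ and rule~(\ref{eq:lower-ic4-rule3}) advances the counter. Once $j = k-1$, rule~(\ref{eq:lower-ic4-rule2}) fires on $G'(\vec{a}, \tau + n)$ to yield $G(\tau + n)$, closing the induction.
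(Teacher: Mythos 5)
Your proposal is correct and follows essentially the same route as the paper's proof: structural induction on $R$ with the same base cases, the same seed--IH--collect pattern for union, concatenation and Kleene plus, and a secondary induction over the $k$ repetitions in the exponentiation case driven by the $\mathit{succ}^m$ predicate. The only (cosmetic) difference is that your inner induction tracks the $F'$ seeds indexed from $0$ while the paper tracks the $G'$ conclusions indexed from $1$, and you spell out more explicitly why the outer inductive hypothesis transfers to the counter-inflated copy of $\Pi_S$, which the paper leaves implicit as ``by the construction of $\Pi_R$''.
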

\begin{proof}
    We prove the claim by induction on the structure of $R$.

    In the base case, we have to prove that the claim holds for the three base
    cases in the inductive definition of $R$.

    \smallskip\par\noindent
    \emph{Base~case~1.}
    It is the case where $R = \emptyset$.
    This case cannot happen, since $\lang(R) = \emptyset$ contradicts our
    assumption that $w \in \lang(R)$.

    \smallskip\par\noindent
    \emph{Base~case~2.}
    It is the case where $R = \sigma$ for $\sigma \in \Sigma$.
    We have that $w = \langle \sigma \rangle$, and hence 
    $A_{\sigma}(\tau) \in D$.
    We have that $\Pi_R \cup D \models G(\tau+1)$ by
    rule~\eqref{eq:lower-bc2}.

    \smallskip\par\noindent
    \emph{Base~case~3.}
    It is the case where $R = \varepsilon$.
    We have that $w = \varepsilon$, and hence $n = 0$.
    We have that $\Pi_R \cup D \models G(\tau)$ by 
    rule~\eqref{eq:lower-bc3}.

    \smallskip\par
    In the inductive case, we have to prove that the claim holds in each of the
    following inductive cases, assuming that the claim holds for the
    subexpressions of $R$.

    \smallskip\par\noindent
    \emph{Inductive~case~1.}
    It is the case where $R = S \cup T$, for $S$ and $T$ SREs.
    Let $\Pi_S$ and $\Pi_T$ be the programs for $S$ and $T$, respectively.
    We have that $w \in \lang(S) \cup \lang(T)$.
    We consider two cases separately.
    In the first case we have that $w \in \lang(S)$,
    hence $\Pi_S \cup D \models G(\tau+n)$ by the inductive hypothesis,
    hence 
    $\phi(\Pi_S) \cup D \cup \{ F^{\phi}(\tau) \} \models G^{\phi}(\tau+n)$ 
    by the construction of $\phi(\Pi_S)$,
    and hence $\Pi_R \cup D \models G(\tau+n)$ 
    by rule~\eqref{eq:lower-ic1-rule1} and 
    rule~\eqref{eq:lower-ic1-rule3}.
    In the other case, symmetrically, we have that $w \in \lang(T)$,
    hence $\Pi_T \cup D \models G(\tau+n)$ by the inductive hypothesis,
    hence 
    $\psi(\Pi_T) \cup D \cup \{ F^{\psi}(\tau) \} \models G^{\psi}(\tau+n)$ 
    by the construction of $\psi(\Pi_T)$,
    and hence $\Pi_R \cup D \models G(\tau+n)$ 
    by rule~\eqref{eq:lower-ic1-rule2} and 
    rule~\eqref{eq:lower-ic1-rule4}.

    \smallskip\par\noindent
    \emph{Inductive~case~2.}
    It is the case where $R = S \circ T$, for $S$ and $T$ SREs.
    Let $\Pi_S$ and $\Pi_T$ be the programs for $S$ and $T$, respectively.
    We have that $w$ is of the form $w_1w_2$ with 
    $w_1 \in \lang(S)$ and $w_2 \in \lang(T)$.
    Let $w_1 = \langle \sigma_1^1, \dots, \sigma_{n_1}^1 \rangle$ 
    and $w_2 = \langle \sigma_1^2, \dots, \sigma_{n_2}^2 \rangle$.
    By the inductive hypothesis, 
    we have that $\Pi_S \cup D \models G(\tau+n_1)$,
    hence 
    $\phi(\Pi_S) \cup D \cup \{ F^{\phi}(\tau) \} \models G^{\phi}(\tau+n_1)$ 
    by the construction of $\phi(\Pi_S)$, and hence
    $\Pi_R \cup D \models F^{\psi}(\tau+n_1)$ by 
    rule~\eqref{eq:lower-ic2-rule1} and 
    rule~\eqref{eq:lower-ic2-rule2}.
    Again by the inductive hypothesis, we have that 
    $\Pi_T \cup D \cup \{ F(\tau + n_1) \} \models G(\tau + n_1 + n_2)$, hence
    $\psi(\Pi_T) \cup D \cup \{ F^{\psi}(\tau + n_1) \} \models 
    G^{\psi}(\tau + n_1 + n_2)$ 
    by the construction of $\psi(\Pi_T)$,
    and hence $\Pi_R \cup D \models G(\tau + n_1 + n_2)$ by
    rule~\eqref{eq:lower-ic2-rule3}.
    Therefore, $\Pi_R \cup D \models G(\tau + n_1 + n_2)$.

    \smallskip\par\noindent
    \emph{Inductive~case~3.}
    It is the case where $R = S^+$, for $S$ an SRE.
    Let $\Pi_S$ be the program for $S$.
    We have that $w$ is of the form $w_1w_2 \dots w_k$ with $k > 0$ and 
    $w_1,w_2, \dots, w_k \in \lang(S)$.
    Let $w_i = \langle \sigma_1^i, \dots, \sigma_{n_i}^i \rangle$
    for each $1 \leq i \leq k$.
    Note that the length of $w$ is $N = \sum_{i=1}^k n_i$.
    The following claim implies that
    $\Pi_R \cup D \models G^{\phi}(\tau + N)$,
    and hence $\Pi_R \cup D \models G(\tau + N)$ by
    rule~\eqref{eq:lower-ic3-rule3}.

    \sreogtwoclaim{claim:sreogtwo-1}
    For each $1 \leq i \leq k$, it holds that 
    $\Pi_R \cup D \models G^{\phi}(\tau + \sum_{j=1}^i n_j)$.

    We prove \sreogtwoclaimref{claim:sreogtwo-1} by induction on $i$ from $1$ 
    to $k$.
    In the base case $i = 1$.
    By the `outer' inductive hypothesis we have that 
    $\Pi_S \cup D \models G(\tau+n_1)$,
    hence 
    $\phi(\Pi_S) \cup D \cup \{ F^{\phi}(\tau) \} \models G^{\phi}(\tau+n_1)$
    by the construction of $\phi(\Pi_S)$,
    and hence $\Pi_R \cup D \models G^{\phi}(\tau+n_1)$ 
    by rule~\eqref{eq:lower-ic3-rule1}.
    In the inductive case $i > 1$, and we assume that
    $\Pi_R \cup D \models G^{\phi}(\tau + \sum_{\ell=1}^j n_{\ell})$
    for each $1 \leq j \leq i-1$.
    By the `outer' inductive hypothesis we have that
    $\Pi_S \cup D \cup \{ F(\tau + \sum_{\ell=1}^{i-1} n_{\ell}) \} \models
    G(\tau + \sum_{\ell=1}^i n_{\ell})$,
    hence 
    $\phi(\Pi_S) \cup D \cup 
        \{ F^{\phi}(\tau + \sum_{\ell=1}^{i-1} n_{\ell}) \} \models 
        G^{\phi}(\tau + \sum_{\ell=1}^i n_{\ell})$
    by the construction of $\phi(\Pi_S)$,
    and hence
    $\Pi_R \cup D \models G^{\phi}(\tau + \sum_{\ell=1}^i n_{\ell})$
    since 
    $\Pi_R \cup D \models F^{\phi}(\tau + \sum_{\ell=1}^{i-1} n_{\ell})$ by
    the `inner' inductive hypothesis and by 
    rule~\eqref{eq:lower-ic3-rule2}.

    This concludes the proof of \sreogtwoclaimref{claim:sreogtwo-1}.
    \smallskip\par

    \smallskip\par\noindent
    \emph{Inductive~case~4.}
    It is the case where $R = S^k$, for $k \geq 2$ and $S$ an SRE.
    Let $\Pi_S$ be the program for $S$.
    We have that $w$ is of the form $w_1w_2 \dots w_k$ with
    $w_1,w_2, \dots, w_k \in \lang(S)$.
    Let $w_i = \langle \sigma_1^i, \dots, \sigma_{n_i}^i \rangle$
    for each $1 \leq i \leq k$.
    The following claim implies that 
    $\Pi_R \cup \Pi_\mathrm{succ} \cup D \models 
    G'(\vec{b}, \tau + \sum_{i=1}^k n_i)$
    where $\vec{b}$ is the binary encoding of $k-1$,
    and hence
    $\Pi_R \cup \Pi_\mathrm{succ} \cup D \models G(\tau + \sum_{i=1}^k n_i)$ by 
    rule~\eqref{eq:lower-ic4-rule2}.

    \sreogtwoclaim{claim:sreogtwo-2}
    For each $1 \leq i \leq k$, it holds that 
    $\Pi_R \cup \Pi_\mathrm{succ} \cup D 
    \models G'(\vec{b}, \tau + \sum_{j=1}^i n_j)$
    where $\vec{b}$ is the binary encoding of $i-1$.

    We prove \sreogtwoclaimref{claim:sreogtwo-2} by induction on $i$ from 
    $1$ to $k$.
    In the base case $i = 1$.
    We have that 
    $\Pi_R \cup \Pi_\mathrm{succ} \cup D \models F'(\vec{\bar{0}}, \tau)$
    by rule~\eqref{eq:lower-ic4-rule1}.
    It follows that 
    $\Pi_R \cup \Pi_\mathrm{succ} \cup D \models G'(\vec{\bar{0}}, \tau + n_1)$,
    because $\Pi_S \cup D \models G(\tau + n_1)$ by the `outer' inductive
    hypothesis, and by the construction of $\Pi_R$.
    In the inductive case $i > 1$, and we assume that 
    for each $1 \leq j \leq i-1$, it holds that 
    $\Pi_R \cup \Pi_\mathrm{succ} \cup D \models 
    G'(\vec{b}, \tau + \sum_{\ell=1}^j n_{\ell})$
    where $\vec{b}$ is the binary encoding of $j-1$.
    In particular,
    $\Pi_R \cup \Pi_\mathrm{succ} \cup D \models 
    G'(\vec{b}, \tau + \sum_{\ell=1}^{i-1} n_{\ell})$
    where $\vec{b}$ is the binary encoding of $i-2$,
    and hence
    $\Pi_R \cup \Pi_\mathrm{succ} \cup D \models 
    F'(\vec{c}, \tau + \sum_{\ell=1}^{i-1} n_{\ell})$
    where $\vec{c}$ encodes $i-1$,
    by rule~\eqref{eq:lower-ic4-rule3} and by the construction of
    $\Pi_\mathrm{succ}$.
    It follows that
    $\Pi_R \cup \Pi_\mathrm{succ} \cup D \models 
    G'(\vec{c}, \tau + \sum_{\ell=1}^i n_{\ell})$,
    because 
    $\Pi_S \cup D \cup \{ F(\tau + \sum_{\ell=1}^{i-1} n_{\ell}) \} 
    \models G(\tau + \tau + \sum_{\ell=1}^i n_{\ell})$ 
    by the `outer' inductive hypothesis, and by the construction of
    $\Pi_R$.

    This concludes the proof of \sreogtwoclaimref{claim:sreogtwo-2},
    and hence the overall proof.
\end{proof}

%% Counter for subclaims in the proof of the next claim
\newcounter{SreOgOneCounter}
\newcommand{\sreogoneclaim}[1]{%
    \smallskip\par\noindent%
    \refstepcounter{SreOgOneCounter}\label{#1}%
    \emph{Claim~\ref{claim:sreog-1}.\arabic{SreOgOneCounter}. }%
}
\newcommand{\sreogoneclaimref}[1]{%
    Claim~\ref{claim:sreog-1}.\ref{#1}%
}

\begin{claim}
    \label{claim:sreog-1}
    Let $R$ be an SRE, and let $\Pi_R$ be the program for $R$.
    Furthermore, let $D$ be a dataset.
    Assume that $\Pi_R \cup \Pi_\mathrm{succ} \cup D \models G(\tau)$.
    Then, 
    there exists a word 
    $w = \langle \sigma_1, \dots, \sigma_n \rangle \in \lang(R)$
    such that
    $F(\tau-n) \in D$ and
    $A_{\sigma_i}(\tau-n+i-1) \in D$ for each $1 \leq i \leq n$.
\end{claim}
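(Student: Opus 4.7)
I will prove Claim~\ref{claim:sreog-1} by structural induction on the SRE $R$, dually to Claim~\ref{claim:sreog-2}. The strategy is to analyse each derivation of $G(\tau)$ from $\Pi_R \cup \Pi_\mathrm{succ} \cup D$ and to exploit the fact that the renaming operations $\phi$ and $\psi$ partition the predicates of $\Pi_R$ so that sub-derivations of renamed predicates can be ``un-renamed'' into derivations from the subprograms $\Pi_S$ or $\Pi_T$ associated with the subexpressions of $R$.

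The base cases are immediate. For $R = \emptyset$ no rule has $G$ in its head, so the hypothesis is vacuous. For $R = \sigma$ the unique rule~\eqref{eq:lower-bc2} deriving $G$ forces $F(\tau-1), A_\sigma(\tau-1) \in D$ and witnesses the word $\langle \sigma \rangle \in \lang(R)$. For $R = \varepsilon$ rule~\eqref{eq:lower-bc3} forces $F(\tau) \in D$, so the empty word fits.

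For the inductive cases, the key observation is that, in any derivation, an atom $F^\phi(\tau')$ can only be introduced by a ``bridging'' rule such as~\eqref{eq:lower-ic1-rule1}, \eqref{eq:lower-ic2-rule1}, \eqref{eq:lower-ic3-rule1}, or~\eqref{eq:lower-ic4-rule1} that copies $F$ from the dataset, or---in the Kleene-plus and power cases---by a ``loop'' rule such as~\eqref{eq:lower-ic3-rule2} or~\eqref{eq:lower-ic4-rule3}; the analogous statement holds for $F^\psi$. In the union case the last rule applied is~\eqref{eq:lower-ic1-rule3} or~\eqref{eq:lower-ic1-rule4}, so after un-renaming, one application of the inductive hypothesis produces a word in $\lang(S) \cup \lang(T) = \lang(R)$. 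The concatenation case chains two applications: any derivation of $G(\tau)$ goes through $G^\psi(\tau)$ via~\eqref{eq:lower-ic2-rule3} and must consume some $F^\psi(\tau')$ introduced by~\eqref{eq:lower-ic2-rule2} from $G^\phi(\tau')$, so the inductive hypothesis on $\Pi_T$ gives $w_T \in \lang(T)$ fitting $D$ between positions $\tau'$ and $\tau$, and the inductive hypothesis on $\Pi_S$ gives $w_S \in \lang(S)$ fitting $D$ and ending at $\tau'$, whence $w_S w_T \in \lang(R)$.

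The main obstacle lies in the Kleene-plus case ($R = S^+$) and, especially, the succinct power case ($R = S^k$), which demand a nested induction on the height of the derivation of $G^\phi(\tau)$. For $R = S^+$, I plan to prove an auxiliary claim stating that whenever $G^\phi(\tau)$ is derivable, there exists $w \in \lang(S^+)$ fitting $D$ and ending at $\tau$. Un-renaming the outer derivation and treating each leaf $F^\phi(\tau')$ as an $F$ fact yields, via the outer inductive hypothesis on $\Pi_S$, a word $w_1 \in \lang(S)$ starting at some $\tau''$; if $F(\tau'') \in D$ we finish, otherwise $F^\phi(\tau'')$ was introduced via~\eqref{eq:lower-ic3-rule2} from $G^\phi(\tau'')$ by a strictly smaller sub-derivation, to which the auxiliary hypothesis applies, yielding $w_0 \in \lang(S^+)$ fitting up to $\tau''$, and $w_0 w_1 \in \lang(S^+)$ finishes the case. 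For $R = S^k$ the argument is analogous, but must additionally track the binary counter carried by the extra object arguments of the $F'$ and $G'$ atoms: rule~\eqref{eq:lower-ic4-rule3} together with the correctness of $\Pi_\mathrm{succ}$ forces a derivation of $G'(\vec{a}, \tau)$ with $\vec{a}$ encoding $k-1$ to unwind through exactly $k$ successive counter values $0, 1, \dots, k-1$, yielding a concatenation of exactly $k$ factors in $\lang(S)$ and hence a word in $\lang(S^k) = \lang(R)$.
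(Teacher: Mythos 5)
Your proposal is correct and follows essentially the same route as the paper's proof: structural induction on $R$, un-renaming sub-derivations via the freshness of the $\phi$/$\psi$ predicates, auxiliary claims that trace each $F^\phi$ (resp.\ $F'$) atom back either to an $F$-fact in $D$ or to a loop rule, and a counter-tracking argument that forces exactly $k$ factors in the $S^k$ case. The only difference is cosmetic: you organise the Kleene-plus and power cases as a nested induction on derivation height, whereas the paper enumerates the $k$ factors explicitly and inducts on their index, but both hinge on the same observation that rule~\eqref{eq:lower-ic3-rule1} (resp.~\eqref{eq:lower-ic4-rule1}) is the only rule introducing the relevant predicate from an EDB fact.
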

\begin{proof}
    We prove the claim by induction on the structure of $R$.
    Note that $\Pi_\mathrm{succ}$ needs to be considered in Inductive~case~4
    only, since the other cases mention no predicate of the form
    $\mathit{succ}^m$.

    In the base case, we have to prove that the claim holds for the three base
    in the inductive definition of $R$.

    \smallskip\par\noindent
    \emph{Base~case~1.}
    It is the case where $R = \emptyset$.
    We show that this case cannot happen.
    We would have that $\Pi_R = \emptyset$ by construction,
    and hence $\Pi_R \cup D \not\models G(\tau)$, 
    which contradicts our initial assumption.

    \smallskip\par\noindent
    \emph{Base~case~2.}
    It is the case where $R = \sigma$ for $\sigma \in \Sigma$.
    The word $\langle \sigma \rangle$ is as required since 
    since $F(\tau-1) \in D$ and $A_{\sigma}(\tau-1) \in D$ 
    by the construction of $\Pi_R$.

    \smallskip\par\noindent
    \emph{Base~case~3.}
    It is the case where $R = \varepsilon$.
    The empty word is as required since
    since $F(\tau) \in D$ 
    by the construction of $\Pi_R$.

    \smallskip
    In each of the following inductive cases, we have to prove that the claim 
    holds for $R$ assuming that the claim holds for the subexpressions of $R$.

    \smallskip\par\noindent
    \emph{Inductive~case~1.}
    It is the case where $R = S \cup T$, for $S$ and $T$ SREs.
    Let $\Pi_S$ and $\Pi_T$ be the programs for $S$ and $T$, respectively.
    We have that $\Pi_R \cup D \models G^{\phi}(\tau)$ or
    $\Pi_R \cup D \models G^{\psi}(\tau)$.
    Let $D'$ be the dataset consisting of each $A_{\sigma}$-fact in $D$ with 
    $\sigma \in \Sigma$.
    It is clear from the construction of $\Pi_R$ that the
    following claim holds.

    \sreogoneclaim{claim:sre-og-0}
    One of the following holds:
    (i)~there exists an integer $p \geq 0$ such that
    $\Pi_R \cup D \models F^{\phi}(\tau-p)$
    and 
    $\phi(\Pi_S) \cup D' \cup \{ F^{\phi}(\tau-p) \} \models G^{\phi}(\tau)$;
    (ii)~there exists an integer $q \geq 0$ such that
    $\Pi_R \cup D \models F^{\psi}(\tau-q)$ and
    $\psi(\Pi_T) \cup D' \cup \{ F^{\psi}(\tau-q) \} \models G^{\psi}(\tau)$.
    \smallskip\par

    By \sreogoneclaimref{claim:sre-og-0},
    it follows that 
    $\Pi_S \cup D' \cup \{ F(\tau-p) \} \models G(\tau)$ or 
    $\Pi_T \cup D' \cup \{ F(\tau-q) \} \models G(\tau)$ 
    by the construction of $\phi(\Pi_S)$ and $\psi(\Pi_T)$.
    By the inductive hypothesis, 
    either 
    there is a word
    $w = \langle a_1, \dots, a_n \rangle$ in $\lang(S)$  
    such that $F(\tau-n) \in D' \cup \{ F(\tau-p) \}$, and 
    $A_{a_i}(\tau-n+i-1) \in D'$ for each $1 \leq i \leq n$,
    or there is a word
    $w' = \langle b_1, \dots, b_{n'} \rangle$ in $\lang(T)$  
    such that $F(\tau-n') \in D' \cup \{ F(\tau-q) \}$, and 
    $A_{b_i}(\tau-n'+i-1) \in D'$ for each $1 \leq i \leq n'$.
    Note that in the former case $n=p$ and in the latter case $n'=q$, since
    $D'$ contains no $F$-fact.
    Finally, note that both $w$ and $w'$ are in $\lang(R)$, and it is easy to 
    see that in both cases $D$ satisfies the required properties.

    \smallskip\par\noindent
    \emph{Inductive~case~2.}
    It is the case where $R = S \circ T$, for $S$ and $T$ SREs.
    Let $\Pi_S$ and $\Pi_T$ be the programs for $S$ and $T$, respectively.
    Let $D'$ be the dataset consisting of each $A_{\sigma}$-fact in $D$ with 
    $\sigma \in \Sigma$.
    We have that $\Pi_R \cup D \models G^{\psi}(\tau)$, and hence the following
    claim holds by the construction of $\Pi_R$.

    \sreogoneclaim{claim:sre-og-1}
    There exists an integer $p \geq 0$ such that
    $\psi(\Pi_T) \cup D' \cup \{ F^{\psi}(\tau-p) \} \models G^{\psi}(\tau)$
    and $\Pi_R \cup D \models F^{\psi}(\tau-p)$.
    \smallskip\par

    Furthermore, 
    we have that $\Pi_R \cup D \models F^{\psi}(\tau-p)$ implies 
    $\Pi_R \cup D \models G^{\phi}(\tau-p)$---see 
    rule~\eqref{eq:lower-ic2-rule2}---and hence the following claim holds
    by the construction of $\Pi_R$.

    \sreogoneclaim{claim:sre-og-2}
    There exists an integer $q \geq 0$ such that
    $\phi(\Pi_S) \cup D' \cup \{ F^{\phi}(\tau-p-q) \} \models
    G^{\phi}(\tau-p)$
    and $\Pi_R \cup D \models F^{\phi}(\tau-p-q)$.
    \smallskip\par

    By \sreogoneclaimref{claim:sre-og-1}, we have that
    $\Pi_T \cup D' \cup \{ F(\tau-p) \} \models G(\tau)$,
    and hence by the inductive hypothesis
    there exists a word $w = \langle a_1, \dots, a_n \rangle$ in $\lang(T)$
    such that $F(\tau-n) \in D' \cup \{ F(\tau-p) \}$ and 
    $A_{a_i}(\tau-n+i-1) \in D'$ for each $1 \leq i \leq n$.
    It follows that $n=p$, since $D'$ contains no $F$-fact.
    By \sreogoneclaimref{claim:sre-og-2}, we have that
    $\Pi_S \cup D \models G(\tau-p)$,
    and hence by the inductive hypothesis
    there exists a word $w' = \langle b_1, \dots, b_{n'} \rangle$ in
    $\lang(S)$
    such that $F(\tau-p-n') \in D' \cup \{ F(\tau-p-q) \}$ and 
    $A_{b_i}(\tau-p-n'+i-1) \in D'$ for each $1 \leq i \leq n'$.
    It follows that $n'=q$, since $D'$ contains no $F$-facts.
    Furthermore, 
    $\Pi_R \cup D \models F^{\phi}(\tau-p-q)$ by \sreogoneclaimref{claim:sre-og-2}
    again, and hence $F(\tau-p-q) \in D$.
    Finally, note that $w'w \in \lang(R)$, and hence $w'w$ and $D$ are the
    required word and dataset, respectively.
    
    \smallskip\par\noindent
    \emph{Inductive~case~3.}
    It is the case where $R = S^+$.
    Let $\Pi_S$ be the program for $S$.
    It is easy to see from the construction of $\Pi_R$ that the following claim
    holds by the inductive hypothesis.

    \sreogoneclaim{claim:sre-og-3}
    Assume that $\Pi_R \cup D \models G^{\phi}(\tau')$ with
    $\tau' \leq \tau$. Then, there exists a word 
    $\langle \sigma_1, \dots, \sigma_n \rangle \in \lang(S)$ such that
    $\Pi_R \cup D \models F^{\phi}(\tau'-n)$ and
    $A_{\sigma_i}(\tau'-n+i-1) \in D$ for each $1 \leq i \leq n$.
    \par\smallskip

    Given the previous claim, we can prove the following one.

    \sreogoneclaim{claim:sre-og-4}
    There exist words $w_1, \dots, w_k$ for $k > 0$ such that, 
    for each $1 \leq i \leq k$, it holds that:
    \begin{itemize}
        \item
            $w_i = \langle \sigma_1^i, \dots, \sigma_{n_i}^i \rangle \in
            \lang(S)$,
        \item
            $\Pi_R \cup D \models F^{\phi}(\tau-\sum_{j=i}^k n_j)$,
        \item
            $A_{\sigma_j^i}(\tau+j-1-\sum_{\ell=i}^k n_{\ell}) \in D$ for each
            $1 \leq j \leq n_i$,
        \item 
            $F(\tau-\sum_{i=1}^k n_i) \in D$.
    \end{itemize}
     
    We omit the proof of \sreogoneclaimref{claim:sre-og-4},
    since it would be essentially similar to the one of 
    \sreogoneclaimref{claim:sre-og-7}, i.e.,
    it would consist in showing that the claim follows by repeated
    application of \sreogoneclaimref{claim:sre-og-3}---which is again similar to
    \sreogoneclaimref{claim:sre-og-6}.
    One major difference with \sreogoneclaimref{claim:sre-og-7} is the last
    point, for which we would need to argue that each derivation of $G(\tau)$
    from $\Pi_R \cup D$ has a leaf labelled with $F(\tau-\sum_{i=1}^k n_i)$
    because rule~\eqref{eq:lower-ic3-rule1} is the only rule in $\Pi_R$
    having no IDB atom in its body.
    \smallskip\par

    Then, the desired word is $w_1 \cdots w_k$, with $k$ each $w_i$ as in 
    \sreogoneclaimref{claim:sre-og-4}.

    \smallskip\par\noindent
    \emph{Inductive~case~4.~}
    It is the case where $R = S^k$ for $k \geq 2$ and $S$ an SRE.
    Let $\Pi_S$ be the program for $S$.
    Note that, as mentioned at the beginning of this proof, in this case we have
    to take $\Pi_\mathrm{succ}$ into account.
    Let $D'$ be the dataset consisting of each $A_{\sigma}$-fact in $D$ with 
    $\sigma \in \Sigma$.
    It is easy to see from the construction of $\Pi_R \cup \Pi_\mathrm{succ}$ 
    that the following claim holds.

    \sreogoneclaim{claim:sre-og-5}
    Assume that 
    $\Pi_R \cup \Pi_\mathrm{succ} \cup D \models G'(\vec{b}, \tau')$
    with $\tau' \leq \tau$.
    There exists an integer $p \geq 0$ such that
    $\Pi_R \cup D' \cup \{ F'(\vec{b}, \tau'-p) \} \models G'(\vec{b}, \tau')$ 
    and $\Pi_R \cup \Pi_\mathrm{succ} \cup D \models F'(\vec{b},\tau'-p)$.
    \par\smallskip

    We use the former claim to prove the next one.

    \sreogoneclaim{claim:sre-og-6}
    Assume that 
    $\Pi_R \cup \Pi_\mathrm{succ} \cup D \models G'(\vec{b},\tau')$ with 
    $\tau' \leq \tau$.
    Then, there exists a word 
    $\langle \sigma_1, \dots, \sigma_n \rangle \in \lang(S)$ such that
    $\Pi_R \cup \Pi_\mathrm{succ} \cup D \models F'(\vec{b},\tau'-n)$ and
    $A_{\sigma_i}(\tau'-n+i-1) \in D$ for each $1 \leq i \leq n$.

    We prove \sreogoneclaimref{claim:sre-og-6}.
    By \sreogoneclaimref{claim:sre-og-5}, there exists $n \geq 0$ such that
    $\Pi_R \cup \Pi_\mathrm{succ} \cup D \models F'(\vec{b},\tau'-n)$ and
    $\Pi_R \cup D' \cup \{ F'(\tau'-n) \} \models G'(\tau')$.
    It follows that 
    $\Pi_S \cup D' \cup \{ F(\tau'-n) \} \models G(\tau')$
    by the construction of $\Pi_R$.
    By the inductive hypothesis,
    there exists a word
    $\langle \sigma_1, \dots, \sigma_{n'} \rangle \in \lang(S)$ such that
    $F(\tau'-n') \in D' \cup \{ F(\tau'-n) \}$ and
    $A_{\sigma_i}(\tau'-n'+i-1) \in D'$ for each $1 \leq i \leq n'$.
    Since $D'$ contains no $F$-fact, we have that $n=n'$,
    and hence $D$ is as required.

    This concludes the proof of \sreogoneclaimref{claim:sre-og-6}.
    \par\smallskip

    Again, we use the former claim for proving the next one.

    \sreogoneclaim{claim:sre-og-7}
    There exist words $w_1, \dots, w_k$ such that,
    for each $1 \leq i \leq k$,
    it holds that:
    \begin{itemize}
        \item
            $w_i = \langle \sigma_1^i, \dots, \sigma_{n_i}^i \rangle \in
            \lang(S)$,
        \item
            $\Pi_R \cup \Pi_\mathrm{succ} \cup D \models
                F'(\vec{b},\tau-\sum_{j=i}^k n_j)$
            where $\vec{b}$ is the binary encoding of $i-1$,
        \item
            $A_{\sigma_{j}^i}(\tau+j-1-\sum_{\ell=i}^k n_{\ell}) \in D$ for each
            $1 \leq j \leq n_i$.
    \end{itemize}

    We prove \sreogoneclaimref{claim:sre-og-7} by induction on $i$ from $k$ 
    to $1$.

    In the base case $i = k$.
    As assumed above, we have that 
    $\Pi_R \cup \Pi_\mathrm{succ} \cup D \models G(\tau)$.
    It follows that 
    $\Pi_R \cup \Pi_\mathrm{succ} \cup D \models G'(\vec{b}, \tau)$
    where $\vec{b}$ is the binary encoding of $k-1$ by the construction of
    $\Pi_\mathrm{succ}$---see rule~\eqref{eq:lower-ic4-rule2}.
    By \sreogoneclaimref{claim:sre-og-6}, there exists a word
    $\langle \sigma_1, \dots, \sigma_n \rangle \in \lang(S)$ such that 
    $\Pi_R \cup \Pi_\mathrm{succ} \cup D \models F'(\vec{b},\tau-n)$ and
    $A_{\sigma_i}(\tau-n+i-1) \in D$ for each $1 \leq i \leq n$.
    In the inductive case $1 \leq i < k$, and we assume that the claim 
    holds if we replace $i$ with $i+1$.
    Let $N = \sum_{j=i+1}^k n_j$.
    By the inductive hypothesis, we have that 
    $\Pi_R \cup \Pi_\mathrm{succ} \cup D \models F'(\vec{b},\tau-N)$
    where $\vec{b}$ is the binary encoding of $i$.
    It follows that
    $\Pi_R \cup \Pi_\mathrm{succ} \cup D \models G'(\vec{c},\tau-N)$
    with $\vec{c}$ the binary encoding of $i-1$ by 
    the construction of $\Pi_\mathrm{succ}$---see 
    rule~\eqref{eq:lower-ic4-rule3}.
    By \sreogoneclaimref{claim:sre-og-6},
    there exists a word $\langle \sigma_1, \dots, \sigma_n \rangle$ in
    $\lang(S)$ such that 
    $\Pi_R \cup \Pi_\mathrm{succ} \cup D \models F'(\vec{c},\tau-N-n)$
    and $A_{\sigma_i}(\tau-N-n+i-1) \in D$ for each $1 \leq i \leq n$.

    This concludes the proof of \sreogoneclaimref{claim:sre-og-7}.
    \par\smallskip

    By \sreogoneclaimref{claim:sre-og-7},
    there exists a word 
    $w = \langle \sigma_1, \dots, \sigma_n \rangle \in \lang(R)$
    such that 
    $\Pi_R \cup \Pi_\mathrm{succ} \cup D \models F'(\vec{\bar{0}}, \tau - n)$ 
    and $A_{\sigma_i}(\tau - n) \in D$.
    Furthermore, 
    $\Pi_R \cup \Pi_\mathrm{succ} \cup D \models F'(\vec{\bar{0}}, \tau - n)$
    implies $F(\tau - n) \in D$---see rule~\eqref{eq:lower-ic4-rule1}.
    Therefore $w$ and $D$ are as required.
\end{proof}

\subsubsection{Proof of the Main Claim}

We finally show Theorem~\ref{thm:bodcontainment-expspacehard}, using the query 
construction given above.
Note that $\mathcal{O}$ is the class of datasets over objects from a given (but
arbitrary, and possibly empty) set of objects $O$; and also that $\lang(R)$
denotes the language of an SRE $R$.

\bodcontainmentexpspacehard*
\begin{proof}
    It suffices to show that $\containment_\mathcal{O}^{\fpclass}$ is
    $\expspacehard$, since it is $\logspace$-reducible
    to $\window_\mathcal{O}^{\fpclass}$ by 
    Theorem~\ref{th:window-containment-interreducible-prev}.
    We show a $\logspace$-computable many-one reduction $\varphi$ from the
    containment problem for succinct regular expressions (SREs) to
    $\containment_\mathcal{O}^{\fpclass}$.
    Then, the claim of the theorem follows from the fact that
    SRE containment is \expspacehard{}---see, e.g., \cite{sipser}.

    An instance $I$ of the containment problem for SREs is a pair of 
    SREs $R_1$ and $R_2$.
    Let $Q_{R_1}$ and $Q_{R_2}$ be the queries for $R_1$ and $R_2$ built as
    described above.
    Then, $\varphi$ maps $I$ to $\langle Q_{R_1}, Q_{R_2} \rangle$.
    Such queries can clearly be computed in logarithmic space,
    and hence the same holds for $\varphi$.
    We argue next that $\varphi$ is correct, i.e., 
    $\lang(R_1) \subseteq \lang(R_2)$ iff
    $Q_{R_1} \sqsubseteq_\mathcal{O} Q_{R_2}$.

    Assume $\lang(R_1) \subseteq \lang(R_2)$.
    Then we show $Q_{R_1} \sqsubseteq_\mathcal{O} Q_{R_2}$.
    Let $\tau$ be a time point and let $D$ be a dataset in $\mathcal{O}$
    such that $G(\tau) \in Q_{R_1}(D)$.
    Hence, we have to show that $G(\tau) \in Q_{R_2}(D)$.
    By Claim~\ref{claim:sreog-1}, there exists a word 
    $w = \langle \sigma_1, \dots, \sigma_n \rangle \in \lang(R_1)$ such that
    $F(\tau-n) \in D$, and each $A_{\sigma_i}(\tau-n+i-1) \in D$ for 
    $1 \leq i \leq n$.
    It follows that $w \in \lang(R_2)$ since $\lang(R_1) \subseteq \lang(R_2)$ 
    by our assumption,
    and hence $G(\tau) \in Q_{R_2}(D)$ by Claim~\ref{claim:sreog-2}.

    For the converse, assume $Q_{R_1} \sqsubseteq_\mathcal{O} Q_{R_2}$.
    Then we show $\lang(R_1) \subseteq \lang(R_2)$.
    Let $w = \langle a_1, \dots, a_n \rangle$ be a word in 
    $\lang(R_1)$.
    Let $D$ be the dataset consisting of $F(0)$ and $A_{a_i}(i-1)$ for each
    $1 \leq i \leq n$.
    Note that $D \in \mathcal{O}$, since $D$ mentions no objects.
    It follows that $G(n) \in Q_{R_1}(D)$ by Claim~\ref{claim:sreog-2},
    and hence $G(n) \in Q_{R_2}(D)$ since we have assumed that 
    $Q_{R_1} \sqsubseteq_\mathcal{O} Q_{R_2}$.
    By Claim~\ref{claim:sreog-1},
    there exists a word $w' = \langle b_1, \dots, b_{n'} \rangle$ in 
    $\lang(R_2)$ such that $F(n-n') \in D$ and $A_{b_i}(n-n'+i-1)$ for each
    $1 \leq i \leq n'$.
    We have that $n' = n$, since $F(0)$ is the only $F$-fact in $D$ by 
    construction.
    Furthermore, 
    we have that $b_i = a_i$ for each $1 \leq i \leq n$,
    since $A_{a_i}(i-1)$ is the only fact in $D$ of the form $A_{\sigma}(i-1)$ 
    for any $\sigma$.
    Therefore $w' = w$, and hence $w \in \lang(R_2)$ as required.
\end{proof}

\subsection{Proof of Theorem~\ref{thm:nr-lower-bounds}}

\begin{lemma} \label{lemma:ognr-containment-lower}
    $\containment^{\ognrclass}$ is \conptimehard{}.
\end{lemma}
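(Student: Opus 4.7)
The plan is to reduce $3$-\textsc{Sat} to the complement of $\containment^{\ognrclass}$.  Given a 3-CNF formula $\varphi = C_1 \land \cdots \land C_m$ over propositional variables $x_1, \ldots, x_n$, I would construct in logarithmic space two object-ground non-recursive fp-queries $Q_1$ and $Q_2$ over a common output predicate $G$ such that $\varphi$ is satisfiable if and only if $Q_1 \not\sqsubseteq Q_2$.

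First, I would use nullary EDB predicates $T_i$ and $F_i$ for each $1 \leq i \leq n$, intuitively encoding ``$x_i$ is set to true'' and ``$x_i$ is set to false'' respectively, together with a unary temporal EDB predicate $A$ used only to trigger the output at a specific time.  The program $\Pi_{Q_1}$ is designed to derive $G(t)$ exactly when the dataset covers every variable: it consists of the rules $T_i \to B_i$ and $F_i \to B_i$ for each $i$, together with $B_1 \land \cdots \land B_n \land A(t) \to G(t)$.  The program $\Pi_{Q_2}$ is designed to derive $G(t)$ exactly when some clause is falsified: for each clause $C_j = \ell_{j,1} \lor \ell_{j,2} \lor \ell_{j,3}$, it contains a rule $L(\ell_{j,1}) \land L(\ell_{j,2}) \land L(\ell_{j,3}) \to \mathit{Fls}$, where $L(x_i) = F_i$ and $L(\neg x_i) = T_i$, together with $\mathit{Fls} \land A(t) \to G(t)$.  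Both queries are clearly object-ground, non-recursive, and forward-propagating, and the construction is computable in logarithmic space.

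For correctness, in one direction, given a satisfying assignment $\alpha$, the dataset $D_\alpha$ consisting of $T_i$ for each $i$ with $\alpha(x_i) = 1$, of $F_i$ for each $i$ with $\alpha(x_i) = 0$, and of the fact $A(0)$, satisfies $G(0) \in Q_1(D_\alpha)$ by construction but $G(0) \notin Q_2(D_\alpha)$, since no clause is falsified by $\alpha$ and hence the body of no $\mathit{Fls}$-rule is contained in $D_\alpha$.  Conversely, from any witness $D$ with $G(\tau) \in Q_1(D) \setminus Q_2(D)$, I would read off the assignment $\alpha_D$ defined by $\alpha_D(x_i) = 1$ iff $T_i \in D$; a case analysis on the form of each literal shows that any clause falsified by $\alpha_D$ would have the body of its corresponding $\mathit{Fls}$-rule contained in $D$, contradicting $G(\tau) \notin Q_2(D)$, so $\alpha_D$ must satisfy $\varphi$.

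I do not anticipate significant obstacles; the main subtlety is that $Q_1$ admits witness datasets in which both $T_i$ and $F_i$ hold for some $i$, but this is harmless, because the induced $\alpha_D$ is still a total assignment and the converse argument does not require exclusivity of $T_i$ and $F_i$.
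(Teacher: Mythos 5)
Your proposal is correct and follows essentially the same route as the paper: a logspace reduction from $3$-\textsc{Sat} to the complement of $\containment^{\ognrclass}$ using nullary predicates, where a witness dataset encodes an assignment, $Q_1$ checks well-formedness and $Q_2$ detects a violation. The only difference is the gadget: the paper guesses one literal per clause ($Q_1$ = every clause has a selected literal, $Q_2$ = two complementary literals selected), whereas you guess a truth assignment ($Q_1$ = totality, $Q_2$ = some clause falsified); both work, and your handling of the non-exclusivity of $T_i$ and $F_i$ is sound.
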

\begin{proof}
    We prove the claim by giving a \logspace{}-computable many-one reduction
    $\varphi$ from \textsc{3-Sat} to the complement of
    $\containment^{\qclass}$ with $\qclass$ the propositional Datalog 
    subclass of $\ognrclass$.

    Now we describe the reduction $\varphi$.
    Let $\alpha$ be a $3$-CNF formula.
    Let $g$ be a fresh IDB nullary predicate---i.e., a propositional variable.
    Let $c_1, \dots, c_n$ be fresh IDB nullary predicates corresponding to the 
    clauses of $\alpha$.
    For each $1 \leq i \leq n$,
    let $l_{i,1}$, $l_{i,2}$ and $l_{i,3}$ be fresh EDB nullary predicates 
    corresponding to the literals of the clause corresponding to $c_i$.
    Let $Q_1$ be the query $\langle g, \Pi_1 \rangle$ where $\Pi_1$
    is the program consisting rule~\eqref{eq:propositional-lower-bound-rule-1},
    and each rule of the form~\eqref{eq:propositional-lower-bound-rule-2} for 
    $1 \leq i \leq n$ and $1 \leq j \leq 3$.
    Let $Q_2$ be the query $\langle g, \Pi_2 \rangle$ where $\Pi_2$
    is the program consisting of each 
    rule of the form~\eqref{eq:propositional-lower-bound-rule-3} for $l_{i,j}$ 
    and $l_{p,q}$ corresponding to complementary literals over the same
    propositional variable---e.g., literals $a$ and $\neg a$, where $a$ is a
    propositional variable.
    \begin{align}
        \label{eq:propositional-lower-bound-rule-1}
        c_1 \land \dots \land c_n &\to g  \\
        \label{eq:propositional-lower-bound-rule-2}
        l_{i,j} &\to c_i \\
        \label{eq:propositional-lower-bound-rule-3}
        l_{i,j} \land l_{p,q} &\to g
    \end{align}
    Note that $Q_1$ and $Q_2$ are clearly non-recursive propositional Datalog
    queries.
    Then, $\varphi$ maps $\alpha$ to $\langle Q_1, Q_2 \rangle$.
    We argue next that the reduction is correct, i.e., $\alpha$ is
    satisfiable iff $Q_1 \sqsubseteq Q_2$.
    In the following, in a slight abuse of notation, we identify any $c_i$
    with its corresponding clause, and any $l_{i,j}$ with its corresponding
    literal.

    Assume that $\alpha$ is satisfiable, i.e., that there exists a satisfying
    assignment $f$ for $\alpha$.
    We show that $Q_1 \not\sqsubseteq Q_2$.
    Let $D$ be the dataset consisting of each positive literal $l_{i,j}$ 
    whose propositional variable is made true by $f$, and each negative 
    literal $l_{i,j}$ whose propositional variable is made false by $f$.
    Clearly, for each pair of complementary literals $l_{i,j}$ and $l_{p,q}$
    sharing the same propositional variable, we have that
    $l_{i,j} \notin D$ or $l_{p,q} \notin D$.
    By the construction of $Q_2$, it follows that $\Pi_2 \cup D \not\models g$.
    Now, for each clause $c_i$ of $\alpha$, there exists a literal $l_{i,j}$ 
    made true by $f$, hence $l_{i,j} \in D$, and hence 
    $\Pi_1 \cup D \models c_i$
    by one of the rules of the form~\eqref{eq:propositional-lower-bound-rule-2}.
    It follows that $\Pi_1 \cup D \models g$ by 
    rule~\eqref{eq:propositional-lower-bound-rule-1}.
    Therefore $Q_1 \not\sqsubseteq Q_2$.

    For the converse, assume $Q_1 \not\sqsubseteq Q_2$.
    We show that $\alpha$ is satisfiable.
    There is a dataset $D$ such that $\Pi_1 \cup D \models g$ and
    $\Pi_2 \cup D \not\models g$.
    Let $f$ be the assignment for $\alpha$ such that $l_{i,j}$ is made true by
    $f$ iff $l_{i,j} \in D$.
    Since $\Pi_2 \cup D \not\models g$
    and by the rules of the form~\eqref{eq:propositional-lower-bound-rule-3},
    there is no pair of complementary literals $l_{i,j}$ and $l_{p,q}$ sharing 
    the same propositional variable and being both in $D$,
    and hence $f$ is a well-formed assignment for $\alpha$.
    Let $i \in [1,n]$.
    Since $\Pi_1 \cup D \models g$ and by 
    rule~\eqref{eq:propositional-lower-bound-rule-1},
    we have that $\Pi_1 \cup D \models c_i$,
    hence there exists $j$ such that $\Pi_1 \cup D \models l_{i,j}$,
    hence $l_{i,j} \in D$,
    and hence $l_{i,j}$ is made true by $f$, according to the construction 
    of $f$.
    Therefore $f$ is a satisfying assignment for $\alpha$.
\end{proof}

\begin{lemma} \label{lemma:boundedobject-nonrecursive-lower}
    $\containment_\mathcal{O}^{\nrclass}$ is \conexptimehard{} if $O$ contains
    at least two objects.
\end{lemma}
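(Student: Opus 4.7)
My plan is to adapt the \conexptime{}-hardness construction of \citeauthor{benedikt2010impact}~(\citeyear{benedikt2010impact}) for containment of non-recursive Datalog programs. Their reduction takes an instance of a \nexptime{}-hard problem (such as succinct 3-SAT or an exponential tiling problem) and produces a pair of non-recursive Datalog queries $P_1, P_2$ whose non-containment captures the original instance; the exponential lower bound arises because $n$ object variables ranging over a two-element domain can address $2^n$ positions via $n$-tuples. I would apply this reduction essentially verbatim, with only minor adjustments to fit our setting.

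The first step is to observe that every non-recursive Datalog query already belongs to $\nrclass$: by Definition~\ref{def:fp-query}, Datalog rules are forward-propagating by fiat, and non-recursiveness is preserved. Hence the output queries $P_1, P_2$ of the Benedikt--Gottlob reduction already lie in $\nrclass$ without any structural modification. The second step is to confine the reduction to the fixed object domain $O$. Since $|O| \geq 2$, I would pick two distinct elements $o_0, o_1 \in O$ and rename the two constants used by the original reduction to $o_0$ and $o_1$, yielding queries $Q_1, Q_2 \in \nrclass$ that mention only objects from $O$.

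The final step is to verify that $Q_1 \sqsubseteq_{\mathcal{O}} Q_2$ iff $P_1 \sqsubseteq P_2$. One direction is clear: a witness dataset $D \in \mathcal{O}$ for $Q_1(D) \not\subseteq Q_2(D)$ maps back to a witness for $P_1 \not\sqsubseteq P_2$ under the inverse renaming. For the converse, the Benedikt--Gottlob argument constructs its critical datasets using only the two distinguished constants, so after renaming they lie in $\mathcal{O}$. Since the entire construction is computable in logarithmic space, this supplies a \logspace{} many-one reduction and establishes the desired lower bound.

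The main obstacle I anticipate is purely bookkeeping: I must verify that the distinguished constants are the only object-valued constants occurring in the critical datasets produced by the original reduction, so that renaming to $\{o_0, o_1\} \subseteq O$ preserves both directions of the equivalence regardless of which other objects $O$ contains. This should follow from a direct inspection of the Benedikt--Gottlob construction, where the two constants are used purely as formal bit markers.
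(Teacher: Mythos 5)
There is a genuine gap. Your plan hinges on the claim that the Benedikt--Gottlob reduction can be taken ``essentially verbatim'' and that it suffices to rename ``the two constants used by the original reduction'' into two elements of $O$. The paper's proof explicitly points out that this is not the situation: the original reduction to non-containment of non-recursive Datalog queries uses an \emph{unbounded} number of objects---exponentially many in the size of the tiling instance, one object per coordinate of the $2^n \times 2^n$ corridor. Consequently its critical witness datasets do not lie in $\mathcal{O}$ for any fixed finite $O$, and no renaming of constants can fix that. The ``bookkeeping'' verification you defer to the end (that the distinguished constants are the only objects in the critical datasets) is precisely the step that fails.

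The substantive content of the paper's proof is the re-encoding that you mention in passing but attribute to the source construction: coordinates are represented as $n$-tuples over just two objects $\bar{0}$ and $\bar{1}$, which forces a redesign of the predicate vocabulary rather than a renaming. Concretely, the paper introduces $2n$-ary $\mathit{tiledBy}_i$ predicates, an $\mathit{eq}$ predicate axiomatised from unary $\mathit{zero}$ and $\mathit{one}$ markers, a $2n$-ary $\mathit{succ}$ relation on bit-strings defined by a family of rules, and two telescoping families $V_i$ and $H_i$ of decreasing arity whose rules simulate universal quantification over all $2^n$ values of each coordinate (needed to assert totality of the tiling on the left-hand side). The right-hand query then uses $\mathit{eq}$ and $\mathit{succ}$ to detect functionality violations and horizontal/vertical mismatches. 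None of this is a cosmetic adjustment of the original reduction, so your proposal as written does not yet constitute a proof; it becomes one only once this binary re-encoding (with its linear-arity predicates and the totality gadget) is carried out and verified in both directions of the equivalence.
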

\begin{proof}
    Assume that $O$ contains at least two objects.
    We show that $\containment_\mathcal{O}^{\qclass}$ is already 
    \conexptimehard{} when $\qclass$ is the class $\dnrclass$ of non-recursive 
    \emph{Datalog} queries.
    In particular, we give a $\logspace$-computable many-one reduction 
    $\varphi$ from the exponential tiling problem to the complement of
    $\containment_\mathcal{O}^{\dnrclass}$.
    The claim then follows from the fact that the exponential tiling problem is 
    \nexptimecomplete{}---see, e.g., Section~3.2 of \cite{johnson1990catalog}.

    Our reduction  $\varphi$ is a straightforward adaptation of
    a reduction from the exponential tiling problem to the complement of 
    $\containment^{\dnrclass}$ given in~\cite{benedikt2010impact}. 
    Such a reduction does not directly apply to 
    $\containment_\mathcal{O}^{\dnrclass}$ because it makes use of an unbounded
    number of objects.
    Specifically, it requires datasets containing a number of objects which is 
    exponential in the size of the input to the exponential tiling problem,
    since each object represents one coordinate in the corridor that we are
    given to tile.
    In our reduction, instead, coordinates are encoded using just two objects. 
    The `downside' of such an encoding is that it makes use of predicates whose    
    arity is linear in the size of the input to the exponential tiling
    problem---but that is clearly irrelevant to our purposes.

    \smallskip\par\noindent
    \emph{The exponential tiling problem.}
    An instance $I$ of \textsc{ExpTiling} is a 5-tuple 
    $\langle n, r, H, V, T_0 \rangle$ where $n$ and $r$ are non-negative
    integers (coded in unary), $H$ and $V$ are subsets of $[1,r] \times [1,r]$,
    and $T_0$ is a total function from $[0,n-1]$ to $[1,r]$.
    A \emph{tiling} for $I$ is 
    a total function $T$ from $[0, 2^n-1] \times [0, 2^n-1]$ to $[1,r]$ 
    such that $T(0,j) = T_0(j)$ for each $j \in [0, n-1]$,
    $\langle T(i,j), T(i+1,j) \rangle \in H$ for each $i \in [0, 2^n - 2]$ 
    and each $j \in [0, 2^n-1]$,
    and
    $\langle T(i,j), T(i,j+1) \rangle \in V$ for each $i \in [0, 2^n-1]$ 
    and each $j \in [0, 2^n-2]$.
    Finally, \textsc{ExpTiling} holds for $I$ iff there is a tiling for $I$.
    \smallskip\par

    Given an instance $I$ of \textsc{ExpTiling} as described above,
    we define $\varphi(I)$ as the pair $\langle Q_1, Q_2 \rangle$ with 
    $Q_1 = \langle \mathit{goal}, \Pi_1 \rangle$ 
    and $Q_2 = \langle \mathit{goal}, \Pi_2 \rangle$, where
    $\mathit{goal}$ is a fresh nullary predicate,
    and $\Pi_1$ and $\Pi_2$ are described next.
    In the following construction, note that all the predicates considered are 
    rigid, and all the variables are of object sort.

    \smallskip\par\noindent 
    \emph{Construction of the left-hand program.}
    Next we build $\Pi_1$.
    Consider the following predicates.
    Let $\mathit{zero}$ and $\mathit{one}$ be fresh EDB unary predicates.
    Intuitively, 
    $\mathit{zero}(a)$ means that $a$ represents zero, and 
    $\mathit{one}(b)$ means that $b$ represents one.
    Furthermore, let $\mathit{eq}$ be a fresh IDB binary predicate.
    Intuitively, $\mathit{eq}(a,b)$ means that $a$ and $b$ represent the 
    same bit.
    Then, $\Pi_1$ contains rule~\eqref{eq:hardnessexpcontainment-rule-1}
    and rule~\eqref{eq:hardnessexpcontainment-rule-2}.
    \begin{align}
        \label{eq:hardnessexpcontainment-rule-1}
        \mathit{zero}(x) \land \mathit{zero}(y) &\to \mathit{eq}(x, y) \\
        \label{eq:hardnessexpcontainment-rule-2}
        \mathit{one}(x) \land \mathit{one}(y) &\to \mathit{eq}(x, y)
    \end{align}
    Consider the following predicates.
    Let $\mathit{tiledBy}_i$ be a fresh predicate of arity $2 \cdot n$ for each 
    $i \in [1,r]$.
    Intuitively, $\mathit{tiledBy}_i(\vec{a},\vec{b})$ means that the cell 
    with coordinates $\langle p, q \rangle$, where $p$ and $q$ are the 
    integers represented by $\vec{a}$ and $\vec{b}$, is tiled with the 
    $i$-th tile.
    Furthermore, let $V_i$ be a fresh predicate of arity $i + n$ for each 
    $i \in [0,n]$.
    Intuitively, $V_i(\vec{a},\vec{b})$ means that all the cells with 
    coordinates $\langle p,q \rangle$ are tiled for every integer $p$  
    whose least significant $i$ bits are represented by $\vec{a}$,
    and $q$ is the integer represented by $\vec{b}$.
    Then, $\Pi_1$ contains 
    each rule of the form~\eqref{eq:hardnessexpcontainment-rule-5} for 
    $i \in [1,r]$,
    and each rule of the form~\eqref{eq:hardnessexpcontainment-rule-6} for 
    $i \in [1,n]$.
    \begin{align}
        \label{eq:hardnessexpcontainment-rule-5}
        \mathit{tiledBy}_i(x_1, \dots, x_n, \vec{y}) 
        & \to V_n(x_1, \dots, x_n, \vec{y}) \\
        \label{eq:hardnessexpcontainment-rule-6}
        \textstyle
        V_i(x_1, \dots, x_i, \vec{y}) \land V_i(z_1, \dots, z_i, \vec{y}) 
        \land \bigwedge_{j=1}^{i-1} \mathit{eq}(x_j, z_j)
        \land \mathit{zero}(x_i) \land \mathit{one}(z_i) 
        & \to V_{i-1}(x_1, \dots, x_{i-1}, \vec{y})
    \end{align}
    Consider the following predicates.
    Let $H_i$ be a fresh predicate of arity $i$, for each $i \in [0,n]$.
    Intuitively, $H_i(\vec{a})$ means that all the cells with coordinates 
    $\langle p,q \rangle$ are tiled for every $p \in [0, 2^n-1]$ and 
    every integer $q$ whose least significant $i$ bits are represented by 
    $\vec{a}$.
    Then, $\Pi_1$ contains 
    rule~\eqref{eq:hardnessexpcontainment-rule-8},
    and each rule of the form~\eqref{eq:hardnessexpcontainment-rule-9} for 
    $i \in [1,n]$.
    \begin{align}
        \label{eq:hardnessexpcontainment-rule-8}
        V_0(x_1, \dots, x_n) & \to H_n(x_1, \dots, x_n) \\
        \label{eq:hardnessexpcontainment-rule-9}
        \textstyle
        H_i(x_1, \dots, x_i) \land H_i(y_1, \dots, y_i) 
        \land \bigwedge_{j=1}^{i-1} \mathit{eq}(x_j, y_j)
        \land \mathit{zero}(x_i) \land \mathit{one}(y_i) 
        & \to H_{i-1}(x_1, \dots, x_{i-1})
    \end{align}
    Finally, $\Pi_1$ contains rule~\eqref{eq:hardnessexpcontainment-rule-11}.
    \begin{equation}
        \label{eq:hardnessexpcontainment-rule-11}
        H_0 \to \mathit{goal}
    \end{equation}

    \smallskip\par\noindent 
    \emph{Construction of the right-hand program.}
    Next we build $\Pi_2$.
    First, $\Pi_2$ contains 
    rule~\eqref{eq:hardnessexpcontainment-rule-1} and 
    rule~\eqref{eq:hardnessexpcontainment-rule-2}.
    Then, $\Pi_2$ contains 
    rule~\eqref{eq:hardnessexpcontainment-rule-12}.
    \begin{equation}
        \label{eq:hardnessexpcontainment-rule-12}
        \mathit{zero}(x) \land \mathit{one}(x) \to \mathit{goal}
    \end{equation}
    Consider the following predicates.
    Let $\mathit{succ}$ be a fresh predicate of arity $2 \cdot n$. 
    Intuitively, $\mathit{succ}(\vec{a},\vec{b})$ means that 
    $\vec{a}$ and $\vec{b}$ represent two integers $p$ and $q$, respectively,
    such that $p + 1 = q$.
    Let $u_1, \dots, u_n, y, z$ be fresh variables.
    Then, $\Pi_2$ contains 
    rule~\eqref{eq:nr-binary-counting-1},
    rule~\eqref{eq:nr-binary-counting-2},
    and
    each rule of the form~\eqref{eq:nr-binary-counting-3} for $0 \leq i < n$,
    where $\vec{u}$ is the $i$-tuple $\langle u_1, \dots, u_i \rangle$, 
    $\vec{v}$ is the $(n-i-1)$-tuple $\langle v, \dots, v \rangle$, 
    and $\vec{w}$ is the $(n-i-1)$-tuple $\langle w, \dots, w \rangle$.
    \begin{align}
        \label{eq:nr-binary-counting-1}
        \mathit{zero}(x) &\to B(x) \\
        \label{eq:nr-binary-counting-2}
        \mathit{one}(x) &\to B(x) \\
        \label{eq:nr-binary-counting-3}
        \mathit{zero}(w) \land \mathit{one}(v) \land
        \textstyle 
        \bigwedge_{j=1}^i B(u_j) &\to 
        \mathit{succ}(\vec{u}, w, \vec{v}, \vec{u}, v, \vec{w})
    \end{align}
    In the rest of the construction, consider the following fresh variables.
    Let $\vec{x} = \langle x_1, \dots, x_n \rangle$,
    let $\vec{x}' = \langle x_1', \dots, x_n' \rangle$,
    let $\vec{y} = \langle y_1, \dots, y_n \rangle$,
    and let $\vec{y}' = \langle y_1', \dots, y_n' \rangle$.
    Then, $\Pi_2$ contains 
    each rule of the form~\eqref{eq:hardnessexpcontainment-rule-17} 
    for $i,j \in [1,r]$ with $i \neq j$.
    \begin{equation}
        \label{eq:hardnessexpcontainment-rule-17}
        \textstyle
        \bigwedge_{k=1}^n \mathit{eq}(x_k, x_k') \land 
        \bigwedge_{k=1}^n \mathit{eq}(y_k, y_k') \land 
        \mathit{tiledBy}_i(\vec{x}, \vec{y}) \land 
        \mathit{tiledBy}_j(\vec{x}', \vec{y}') \to \mathit{goal}
    \end{equation}
    Then, $\Pi_2$ contains 
    each rule of the form~\eqref{eq:hardnessexpcontainment-rule-18} 
    for $j,k \in [1,r]$ with $\langle j, k \rangle \notin V$,
    and each rule of the form~\eqref{eq:hardnessexpcontainment-rule-19} 
    for $j,k \in [1,r]$ with $\langle j, k \rangle \notin H$.
    \begin{align}
        \label{eq:hardnessexpcontainment-rule-18}
        \textstyle
        \bigwedge_{i=1}^n \mathit{eq}(x_i, x_i') \land 
        \mathit{succ}(\vec{y}, \vec{y}') \land
        \mathit{tiledBy}_j(\vec{x}, \vec{y}) \land 
        \mathit{tiledBy}_k(\vec{x}', \vec{y}') & 
        \to \mathit{goal} \\
        \label{eq:hardnessexpcontainment-rule-19}
        \textstyle
        \bigwedge_{i=1}^n \mathit{eq}(y_i, y_i') \land 
        \mathit{succ}(\vec{x}, \vec{x}') \land
        \mathit{tiledBy}_j(\vec{x}, \vec{y}) \land 
        \mathit{tiledBy}_k(\vec{x}', \vec{y}') & 
        \to \mathit{goal}
    \end{align}
    Then, $\Pi_2$ contains each rule of the 
    form~\eqref{eq:hardnessexpcontainment-rule-20} 
    for $j \in [0, n-1]$ and each $k \in [1,r]$ with $k \neq T_0(j)$,
    where $A_i$ is $\mathit{zero}$ if the least significant $i$-th bit of the
    binary encoding of $j$ is $0$ and $\mathit{one}$ otherwise.
    \begin{equation}
        \label{eq:hardnessexpcontainment-rule-20}
        \textstyle
        \bigwedge_{i=1}^n A_i(x_i) \land \bigwedge_{i=1}^n \mathit{zero}(y_i) \land 
        \mathit{tiledBy}_k(\vec{x}, \vec{y}) \to \mathit{goal}
    \end{equation}

    \smallskip\par\noindent 
    \emph{Correctness of the reduction.}
    We now argue that the reduction $\varphi$ is correct, i.e., there exists a 
    tiling for $I$ iff $Q_1 \not\sqsubseteq_O Q_2$.
    We show the two implications separately.

    \smallskip\par\noindent
    $(\Rightarrow)$
    Assume that $T$ is a tiling for $I$.
    Let $\bar{0}$ and $\bar{1}$ be two distinct objects in $O$---intuitively 
    standing for $0$ and $1$.
    Let $D$ be the dataset consisting of
    $\mathit{zero}(\bar{0})$, 
    $\mathit{one}(\bar{1})$, 
    and each fact $\mathit{tiledBy}_i(\vec{a}, \vec{b})$
    for $T(p,q) = i$, 
    $\vec{a}$ the $n$-tuple of $\bar{0}$'s and $\bar{1}$'s encoding $p$, and
    $\vec{b}$ the $n$-tuple of $\bar{0}$'s and $\bar{1}$'s encoding $q$.
    It is easy to verify that $\mathit{goal} \in Q_1(D)$ and 
    $\mathit{goal} \notin Q_2(D)$.

    \smallskip\par\noindent
    $(\Leftarrow)$
    Assume that there is a dataset $D$ such that 
    $\mathit{goal} \in Q_1(D)$ and
    $\mathit{goal} \notin Q_2(D)$.

    For each tuple $\langle a_1, \dots, a_n \rangle$ of objects such that 
    $\mathit{zero}(a_i) \in D$ or 
    $\mathit{one}(a_i) \in D$ for each $i \in [1,n]$,
    let $\operatorname{bin}(\vec{a})$ be the number in $[0,2^n-1]$ whose
    $i$-th bit is $0$ if $\mathit{zero}(a_i) \in D$ and $1$ otherwise.
    Note that $\operatorname{bin}$ is a well-defined function,
    since for each object $a$, it holds that 
    $\mathit{zero}(a) \notin D$ or 
    $\mathit{one}(a) \notin D$,
    by rule~\eqref{eq:hardnessexpcontainment-rule-12} since 
    $\mathit{goal} \notin Q_2(D)$.

    Let $T$ be the relation consisting of each tuple $\langle p, q, i \rangle$
    for 
    $\mathit{tiledBy}_i(\vec{a},\vec{b}) \in D$, 
    $\operatorname{bin}(\vec{a}) = p$,
    and $\operatorname{bin}(\vec{b}) = q$.
    In order to show that $T$ is a tiling for $I$, we have to show that
    (i)~$T$ is a total function over the domain $[0,2^n-1] \times [0,2^n-1]$,
    (ii)~$\langle T(i,j), T(i+1,j) \rangle \in H$ for each $i \in [0,2^n-2]$ and 
    each $j \in [0,2^n-1]$,
    (iii)~$\langle T(i,j), T(i,j+1) \rangle \in V$ for each $i \in [0,2^n-1]$ and 
    each $j \in [0,2^n-2]$,
    and (iv)~$T(0,j) = T_0(j)$ for each $j \in [0,n-1]$.
    %
    % COMMENT: The following is just a proof sketch.
    %
    We have that (i) holds since $T$ is total by 
    rules~\eqref{eq:hardnessexpcontainment-rule-5}--\eqref{eq:hardnessexpcontainment-rule-11}
    and $T$ is functional by rule~\eqref{eq:hardnessexpcontainment-rule-17}.
    We have that (ii) holds by rule~\eqref{eq:hardnessexpcontainment-rule-18}.
    We have that (iii) holds by rule~\eqref{eq:hardnessexpcontainment-rule-19}.
    We have that (iv) holds by rule~\eqref{eq:hardnessexpcontainment-rule-20}.
\end{proof}

\thmnrlowerbounds*
\begin{proof}
    $\window^{\ognrclass}$ is $\conptimehard$ because 
    it is at least as hard as $\containment^{\ognrclass}$ by 
    Theorem~\ref{th:window-containment-interreducible-prev}, and the former
    problem is $\conptimehard$ by Lemma~\ref{lemma:ognr-containment-lower}.
    $\window^{\nrclass}_\mathcal{O}$ is $\conexptimehard$ if $O$ contains at
    least two objects because it is at least as hard as 
    $\containment^{\nrclass}_\mathcal{O}$ by 
    Theorem~\ref{th:window-containment-interreducible-prev}, and the former
    problem is $\conexptimehard$ by 
    Lemma~\ref{lemma:boundedobject-nonrecursive-lower}.
\end{proof}

%%% Local Variables:
%%% mode: latex
%%% TeX-master: "main"
%%% End:

% \input{appendix-sec-sufficient}% decided not to have proofs about this section

\fi

\end{document}

%%% Local Variables:
%%% mode: latex
%%% TeX-master: t
%%% End: